\colorlet{c1}{blue}
\colorlet{c2}{green!50!black}
\colorlet{c3}{red!70!magenta}
\colorlet{c12}{cyan}
\colorlet{c23}{orange}
\colorlet{c13}{violet}
\colorlet{tri}{brown}
\colorlet{best}{red!90!black}
\colorlet{second_best}{blue!90!black}
\pgfplotsset{compat=1.18}  
\newtheorem*{remark}{Remark}
\newtheorem{exmp}{Example}[section]
\title{Composing Linear Layers from Irreducibles}
\author{%
  Travis Pence $\;\;$ Daisuke Yamada $\;\;$ Vikas Singh\\
  University of Wisconsin-Madison\\
  \texttt{\{tnpence, dyamada2\}@wisc.edu}, \texttt{vsingh@biostat.wisc.edu} \\
}
\begin{document}
\maketitle

\begin{abstract}
\label{sec:abstract}
Contemporary large models often exhibit behaviors suggesting the presence of low-level primitives that compose into modules with richer functionality, but these fundamental building blocks remain poorly understood. 
We investigate this compositional structure in linear layers by asking: \textit{can we identify/synthesize linear transformations from a minimal set of geometric primitives?} 
Using Clifford algebra, we show that linear layers can be expressed as compositions of bivectors---geometric objects encoding oriented planes---and introduce a differentiable algorithm that decomposes them into products of rotors. 
This construction uses only $\mathcal{O}\left(\log^2 d\right)$ parameters, versus $\mathcal{O}(d^2)$ required by dense matrices. Applied to the key, query, and value projections in LLM attention layers, rotor-based layers match the performance of strong baselines such as block-Hadamard and low-rank approximations. Our findings provide an algebraic perspective on how these geometric primitives can compose into higher-level functions within deep models.
\end{abstract}
\section{Introduction}
\label{sec:introduction}

There is growing consensus \citep{doi:10.1073/pnas.2219150120} that, like biological systems, modern models may internally rely on \textit{low-level primitives} that \textit{compose} to form modules with more complex functionality \citep{weiss2021thinkingliketransformers}. This compositional perspective was one motivation behind capsule networks \citep{sabour2017dynamicroutingcapsules}, which explicitly modeled part-whole relationships through vector-based capsules. But the task of localizing and characterizing such primitives remains challenging albeit interesting.  
A general capability to compose pre-trained modules in a prescribed way could, in principle, lead to a larger model with predictable and more controllable functionality \citep{zou2025representationengineeringtopdownapproach, schug2024discoveringmodularsolutionsgeneralize, pmlr-v97-ghazi19a, abnar2023adaptivitymodularityefficientgeneralization, press2023measuringnarrowingcompositionalitygap}, with potential applications 
ranging from safety guardrails to interpretability. 

Some recent results suggest some progress in this direction. In mixture-of-experts architectures \citep{masoudnia2014mixture, riquelme2021scaling}, specialized sub-networks are conditionally activated and composed via routing \citep{buchel2025efficient}. Model merging has evolved from simple parameter averaging to more sophisticated alignment of different model latent representations \citep{lahner2024direct}. Fine-tuning methods like LoRA \citep{hu2021loralowrankadaptationlarge} implicitly assume that low-dimensional adjustment of a base network should suffice---essentially a two-level composition. Each approach offers a distinct perspective on composition (e.g., see \cite{chytas2024understanding})
but are not focused on addressing how the mechanistic composition of low-level primitives gives more complex behavior. To this end, mechanistic interpretability \citep{rai2024practical} and neurosymbolic methods \citep{yang2022safeneurosymboliclearningdifferentiable} explore this space, but are still in a nascent stage of development. 

\paragraph{Scope of this paper.} 
Consider a core module---with millions of parameters---in a large model. \textit{Can we synthesize its functionality from its most basic primitives? How many such objects would we need?}  This casts our broader interest in composition into a concrete problem:  \textit{identifying a minimal set of irreducibles that combine in specific ways to realize the full functionality of the module.} We study this problem for \textit{linear layers}---noting that linear layers make up a large portion of parameters in large language models (LLMs). While prior work suggests various parameter-efficient approximations, our interest is not only function approximation, rather to build up the functionality by characterizing its \textit{algebraic structure}. 
%
To formalize the above intuition, we use the language of \textit{Clifford algebra}, where linear transformations naturally decompose into simple \textit{bivectors}---geometric objects representing oriented planes. This view reveals how the functionality of a linear layer can be synthesized as a structured composition of a few hundred geometric objects (parameters). 

Our key {\bf contributions} are:
\begin{inparaenum}[\bfseries (a)]
    \item We express linear transformations as compositions of geometric primitives---specifically, bivectors in Clifford algebra---using rotor sandwich products acting on local subspaces of an input multivector. This requires $\mathcal{O}\left(\log^2 d\right)$ scalar parameters, compared to $\mathcal{O}\left(d^2\right)$ for dense layers, where $d$ is the input/output dimension.
    \item We propose a differentiable invariant decomposition algorithm that maps bivectors to their corresponding rotors in closed-form, which enables integration with {\tt autograd} and gradient-based optimization.
    \item Empirically, we replace the key, query, and value projections in LLM attention layers and show comparable downstream performance in accuracy and perplexity across various datasets. 
\end{inparaenum}

Our goal is to show the \textit{feasibility} of this algebraic decomposition approach. It is not a drop-in replacement yet, since practical benefits will require additional system-level integration beyond the scope of this work. The focus here is on the underlying algorithmic and mathematical foundations.

\section{Preliminaries}
\label{sec:prelim}
\vspace{-2pt}
We review some relevant concepts from Clifford algebra (also see \citet{hestenes2012clifford}).
\vspace{-3pt}
\paragraph{Clifford algebra.} A \textit{Clifford algebra}, $\Cl_{p,q}(\rr)$, is an associative algebra over $\rr^n$ equipped with a quadratic form of signature $(p,q)$, where $n=p+q$. The algebra admits two basic products: the \textit{inner product} and \textit{outer (wedge) product}, denoted by $\cdot$ and $\wedge$ respectively. Their sum defines the \textit{geometric product}, which for vectors $u, v \in \rr^n$ takes the form:
\[
    uv \triangleq u\cdot v + u\wedge v.
\]
The algebra is generated by orthogonal basis vectors $e_1,\ldots,e_n$, called \textit{generators}, satisfying:
\[
    e_i^2 = +1 \quad \text{for } 1 \le i \le p; \quad
    e_j^2 = -1 \quad \text{for } p < j \le n; \quad
    e_i \wedge e_j = -e_j\wedge e_i \quad \text{for } i\neq j.
\]

\begin{wrapfigure}{r}{0.3\textwidth}
\vspace{-10pt}
\centering
\tdplotsetmaincoords{60}{120} 

\scalebox{0.5}{
\begin{tikzpicture}[tdplot_main_coords, scale=3]

\begin{scope}[shift={(0,0,2.5)}]
  \coordinate (O) at (0,0,0);
  \coordinate (E1) at (1.4,0,0);
  \coordinate (E2) at (0,1.4,0);
  \coordinate (E3) at (0,0,1.4);

  \draw[->, thick, color=c1] (O) -- (E1) node[anchor=north west] {\LARGE \( e_1 \)};
  \draw[->, thick, color=c2] (O) -- (E2) node[anchor=north east] {\LARGE \( e_2 \)};
  \draw[->, thick, color=c3] (O) -- (E3) node[anchor=north west] {\LARGE \( e_3 \)};

  \filldraw[fill=c12!20, draw=c12, opacity=0.6]
    (O) -- (1,0,0) -- (1,1,0) -- (0,1,0) -- cycle;
  \node at (1.2,1.1,0)  {\LARGE \( e_1 \wedge e_2 \)};
    \coordinate (C12) at (0.8,0.5,0);
  \begin{scope}[shift={(C12)}, canvas is xy plane at z=0]
    \draw[->, thick, c12] (0,0) arc (30:180:0.3);
  \end{scope}

  \filldraw[fill=c13!20, draw=c13, opacity=0.6]
    (O) -- (1,0,0) -- (1,0,1) -- (0,0,1) -- cycle;
  \node[rotate=45] at (1.1,.25,1.48) {\LARGE \( e_1 \wedge e_3 \)};
  \coordinate (C13) at (0.7,0,0.4);
  \begin{scope}[shift={(C13)}, canvas is xz plane at y=0]
    \draw[->, thick, c13] (0,0) arc (0:180:0.2);
  \end{scope}

  \filldraw[fill=c23!20, draw=c23, opacity=1]
    (O) -- (0,1,0) -- (0,1,1) -- (0,0,1) -- cycle;
  \node[rotate=-15] at (0.1,.7,1.22) {\LARGE \( e_2 \wedge e_3 \)};
  \coordinate (C23) at (0,0.65,0.4);
  \begin{scope}[shift={(C23)}, canvas is yz plane at x=0]
    \draw[->, thick, c23] (0,0) arc (0:180:0.2);
  \end{scope}
  
\end{scope}
\begin{scope}
  \node at (0.5,0.7,-0.45) {\LARGE \(e_1 \wedge e_2 \wedge e_3\)};

  \coordinate (O) at (0,0,0);
  \coordinate (A) at (1,0,0);
  \coordinate (B) at (1,1,0);
  \coordinate (C) at (0,1,0);
  \coordinate (D) at (0,0,1);
  \coordinate (E) at (1,0,1);
  \coordinate (F) at (1,1,1);
  \coordinate (G) at (0,1,1);

  \draw[thin] (A) -- (B) -- (C);
  \draw[thin] (D) -- (E) -- (F) -- (G) -- cycle;
  \draw[thin] (A) -- (E);
  \draw[thin] (B) -- (F);
  \draw[thin] (C) -- (G);

  \coordinate (E1cut) at (1,0,0); 
  \coordinate (E2cut) at (0,1,0); 
  \coordinate (E3cut) at (0,0,1); 
  \coordinate (E1end) at (1.4,0,0);
  \coordinate (E2end) at (0,1.4,0);
  \coordinate (E3end) at (0,0,1.4);

  \draw[dashed, thick, color=c1] (O) -- (E1cut);
  \draw[->, thick, color=c1] (E1cut) -- (E1end) node[anchor=north west] {\LARGE \(e_1\)};
  \filldraw[fill=tri, opacity=0.3, draw=none]
  (0,0,0) -- (1,0,0) -- (1,1,0) -- (0,1,0) -- cycle;

  \draw[dashed, thick, color=c2] (O) -- (E2cut);
  \draw[->, thick, color=c2] (E2cut) -- (E2end) node[anchor=north east] {\LARGE \(e_2\)};
  \filldraw[fill=tri, opacity=0.3, draw=none]
  (0,0,0) -- (1,0,0) -- (1,0,1) -- (0,0,1) -- cycle;
  
  \draw[dashed, thick, color=c3] (O) -- (E3cut);
  \draw[->, thick, color=c3] (E3cut) -- (E3end) node[anchor=north west] {\LARGE \(e_3\)};
  \filldraw[fill=tri, opacity=0.3, draw=none]
  (0,0,0) -- (0,1,0) -- (0,1,1) -- (0,0,1) -- cycle;
\end{scope}
\end{tikzpicture}
}
\captionof{figure}{The basis vectors, bivectors, and trivector for $\Cl(3)$}
\label{fig:wedge-cube}
\vspace{-20pt}
\end{wrapfigure}
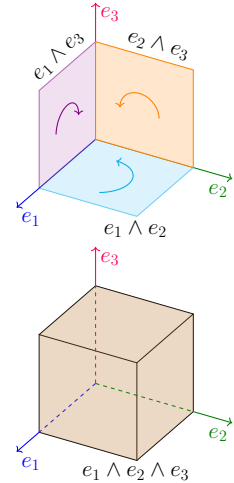

These relations encode the metric and orientation of the underlying space.
The algebra has a canonical basis of $2^n$ elements: the scalar $1$ and all distinct products of the basis vectors $e_1, \dots, e_n$. In particular, \textit{basis bivectors} are wedge products of two distinct basis vectors, i.e., $e_i \wedge e_j = e_i e_j$ for $i < j$. More generally, \textit{basis $k$-vectors} are wedge products of $k$ distinct basis vectors (see Fig. \ref{fig:wedge-cube}), and there are $\binom{n}{k}$ such elements for each $k$. The total number of such basis elements is the \textit{dimension} of the algebra. The \textit{reversion},  given by $\dagger$, reverses the order of basis vectors. For example, $(e_1 e_2)^\dagger = e_2 e_1$ and $(e_1 e_2 e_3)^\dagger = e_3 e_2 e_1$.

The algebra decomposes into a direct sum of subspaces indexed by \textit{grade}: scalars (grade $0$), vectors (grade $1$), \textit{bivectors} (grade $2$), and general \textit{$k$-vectors}, which are linear combinations of corresponding basis elements. A \textit{multivector} is a general element of $\Cl_{p,q}(\rr)$, expressed as a linear combination of components of multiple grades. For example, $e_1e_2$ is a basis bivector; $e_1e_3 + 2e_2e_3$ is a bivector; and $1 + (2e_2+e_4) - e_1e_3$ is a multivector composed of elements of grades $0,1$, and $2$. We denote the subspaces of vectors and bivectors by $\Cl^1(\rr)$ and $\Cl^2(\rr)$, and more generally, $\Cl^k(\mathbb{R})$ for grade-$k$ elements. The algebra naturally splits into \textit{even} and \textit{odd} subalgebras based on grade parity. The \textit{even subalgebra} $\Cl^+(n) \subset \Cl(n)$ consists of elements of even grade (scalars, bivectors, 4-vectors, etc), while odd elements include vectors, trivectors, and so on.
Some well-known algebraic systems arise as special cases of Clifford algebras: the real numbers $\rr \cong \Cl_{0,0}(\rr)$, the complex numbers $\mathbb{C} \cong \Cl_{0,1}(\rr)$, the quaternions $\mathbb{H}\cong\Cl_{0,2}(\rr)$, and the hyperbolic numbers $\Cl_{1,0}(\rr)$. This way, Clifford algebras naturally generalize familiar algebraic systems by incorporating geometric structure. To keep notations short, we write $\Cl_{p,q}(\rr)$ as $\Cl(p,q)$ and only consider $\Cl(n,0)$, denoted as $\Cl(n)$.
\section{Algebraic Structure of Rotor-based Transformations}
\label{sec:method}
We aim to describe standard linear transformations in terms of the algebraic structure of Clifford algebra. We begin by noting how any linear map can be expressed
as a sum of multivector products, and then show how restricting to rotors in the Spin group is helpful.

\paragraph{Clifford form of linear transformations.} A linear map between two vector spaces is one that preserves additivity and homogeneity. Traditionally, such transformations are represented as dense matrices with independent parameters.
In contrast, we consider $\Cl(n)$ and write these transformations in terms of the geometric product between multivectors---algebraic objects that encode both magnitude and orientation. We first restate a textbook result. 

\begin{lemma}
    \label{thm:multivector_transfromation}
    (\citet{hestenes2012clifford}) Let 
    $a_t$ and $b_t$ denote multivectors in $\Cl(n)$. Any linear function $F$ from $\Cl^k(n)$ to $\Cl(n)$ can be written as the finite sum for some width $w < \infty$,
    \begin{equation}
        F(x) = \sum_{t=1}^w a_t x b_t.
    \end{equation}
\end{lemma}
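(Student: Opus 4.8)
The plan is to establish this by a dimension-counting argument combined with linear independence of a well-chosen spanning set of ``sandwich'' maps $x \mapsto a_w x b_w$. First I would observe that the space of linear maps $F : \Cl^k(n) \to \Cl(n)$ is a finite-dimensional real vector space of dimension $\binom{n}{k} \cdot 2^n$, since $\dim \Cl^k(n) = \binom{n}{k}$ and $\dim \Cl(n) = 2^n$. The goal is then to exhibit finitely many pairs $(a_w, b_w)$ of multivectors such that the maps $L_{a,b} : x \mapsto a x b$ span this entire space; once that is done, any particular $F$ is a finite linear combination $\sum_w \lambda_w L_{a_w,b_w}$, and absorbing the scalar $\lambda_w$ into $a_w$ (scalars are central in $\Cl(n)$) gives exactly the claimed form.

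Next I would construct the spanning set explicitly using the canonical basis of blades. Index the $2^n$ basis blades of $\Cl(n)$ by subsets $S \subseteq \{1,\dots,n\}$, writing $e_S$ for the corresponding product of generators. The key computational fact is that for basis blades, the geometric product $e_S\, x\, e_T$ sends a basis blade $e_U$ to $\pm e_V$ for a determined $V$ (with sign depending on the signature and reordering), and by choosing $S$ and $T$ appropriately one can route any basis blade $e_U$ to any target basis blade $e_W$. More precisely, I would show that the collection $\{\, x \mapsto e_S\, x\, e_T : S, T \subseteq \{1,\dots,n\}\,\}$, together with left/right multiplications, contains enough maps to realize every elementary map $E_{U,W}$ (the one sending $e_U \mapsto e_W$ and all other basis blades to $0$) as a linear combination. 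Concretely, fixing $U$, the maps $x \mapsto e_S x e_T$ restricted to the line $\rr e_U$ produce outputs $\pm e_{V(S,T,U)}$, and averaging $e_S x e_S^\dagger$-type products over appropriate $S$ (a standard ``group averaging'' / Reynolds-operator trick over the finite group generated by the $e_S$) projects onto a single target component; combining such projections across all $U$ yields every $E_{U,W}$. Since the $E_{U,W}$ form a basis of $\mathrm{Hom}(\Cl^k(n), \Cl(n))$, the sandwich maps span, which is what we need.

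An alternative, cleaner route I would consider is to invoke the structure theorem for $\Cl(n)$ as a (sum of) matrix algebra(s) over $\rr$, $\mathbb{C}$, or $\mathbb{H}$: for $\Cl(n,0)$ this is the classical periodicity classification. Under such an isomorphism $\Cl(n) \cong M_m(\mathbb{F})$ (or a direct sum of two such), the sandwich action $x \mapsto a x b$ becomes matrix multiplication, and the statement that sandwich maps span all linear maps on a subspace reduces to the elementary fact that $\{A (\cdot) B\}$ spans $\mathrm{Hom}$ between matrix spaces (essentially because $e_{ij} X e_{kl}$ picks out single entries). One must be slightly careful with the domain being only the grade-$k$ subspace $\Cl^k(n)$ rather than all of $\Cl(n)$, but since $\Cl^k(n)$ is a subspace, any linear map out of it extends to a linear map on all of $\Cl(n)$, and restricting a spanning family of sandwich maps to $\Cl^k(n)$ still spans $\mathrm{Hom}(\Cl^k(n),\Cl(n))$.

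The main obstacle I anticipate is the bookkeeping of signs and reorderings in the blade products $e_S e_U e_T$ — ensuring that the routing of $e_U$ to an arbitrary target is surjective over all basis blades and that the group-averaging step genuinely isolates a single component rather than leaving cross terms. This is purely combinatorial (tracking permutation signs and the $e_i^2 = \pm 1$ contributions), so I would either handle it via the finite-group-averaging argument over the group $\{\pm e_S\}$, which cleanly kills off-diagonal terms by orthogonality of characters, or sidestep it entirely by appealing to the matrix-algebra isomorphism where the analogous statement is transparent. Given that the lemma is attributed as a textbook result of Hestenes, I expect the intended proof is the direct constructive one, with the matrix-algebra argument as the conceptual shortcut.
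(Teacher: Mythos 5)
The paper offers no proof of this lemma---it is cited verbatim from Hestenes---so your proposal must stand on its own. The overall strategy (show that the sandwich maps $x\mapsto axb$ span $\mathrm{Hom}(\Cl^k(n),\Cl(n))$) is the right one and the result is true, but both of your routes rest on a claim that is false whenever $n$ is odd: that the sandwich maps span all of $\mathrm{End}_{\rr}(\Cl(n))$, so that one may ``extend arbitrarily and restrict.'' For $n$ even, $\Cl(n,0)$ is central simple over $\rr$ and the claim holds ($A\otimes_\rr A^{\mathrm{op}}\cong\mathrm{End}_\rr(A)$). For $n$ odd, however, the pseudoscalar $\omega=e_1\cdots e_n$ is central, so the center is $Z=\rr\oplus\rr\omega$ (isomorphic to $\mathbb{C}$ or to $\rr\oplus\rr$ according to the sign of $\omega^2$), and every sandwich map satisfies $F(\omega x)=\omega F(x)$. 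The span of the sandwich maps is therefore only $\mathrm{End}_Z(\Cl(n))$, which has half the real dimension of $\mathrm{End}_\rr(\Cl(n))$; in your matrix-algebra picture this is precisely the failure of $\{A(\cdot)B\}$ to span when the algebra is $M_m(\mathbb{C})$ or a direct sum of two simple factors (sandwich maps are block-diagonal with respect to the two factors). The same defect appears in your constructive route: averaging $x\mapsto e_S x e_S^{-1}$ over all invertible blades projects onto the \emph{center}, which for odd $n$ is two-dimensional, so the averaging does not isolate a single blade component $E_{U,W}$---it leaves a cross term supported on the partner blade $e_{U^c}=\pm\omega e_U$. This is exactly the ``cross terms'' obstacle you flagged at the end but did not resolve; already for $\Cl(1)\cong\rr\oplus\rr$ the sandwich maps span a two-dimensional subspace of the four-dimensional $\mathrm{End}_\rr(\Cl(1))$.

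The missing idea that rescues the lemma is that the domain is a single grade. Since $\omega\,\Cl^k(n)=\Cl^{n-k}(n)$ and $n-k\neq k$ when $n$ is odd, the subspace $\Cl^k(n)$ meets $\omega\,\Cl^k(n)$ trivially; equivalently, an $\rr$-basis of $\Cl^k(n)$ extends to a $Z$-basis of $\Cl(n)$. Hence every $\rr$-linear $F:\Cl^k(n)\to\Cl(n)$ extends to a $Z$-linear endomorphism of $\Cl(n)$, and $\mathrm{End}_Z(\Cl(n))$ \emph{is} spanned by sandwich maps; restricting back to $\Cl^k(n)$ gives the lemma. (In the averaging version: the residual cross term is supported on $\Cl^{n-k}(n)$ and therefore vanishes on the grade-$k$ domain.) With this one observation inserted, either of your routes closes; without it, the ``extend and restrict'' step is invalid for every odd $n$.
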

This result shows that linear transformations in Clifford algebra can be expressed as products involving multivectors acting from both the left and right. Thus, Clifford algebra gives us a way to represent a general, arbitrary linear map. But there is a cost: arbitrary multivectors have too much freedom and require all $2^n$ parameters of the full Clifford algebra, which makes the representation inefficient. We will constrain $a_t$ and $b_t$ to preserve \textit{rotational symmetries}, leading to the Spin group
\[\Spin(n) \triangleq \left\{r\in \Cl^+(n) \mid rr^\dagger = 1 \text{ and } rvr^\dagger \in \Cl^1(n) \text{ for all } v \in \Cl^1(n)\right\}\]
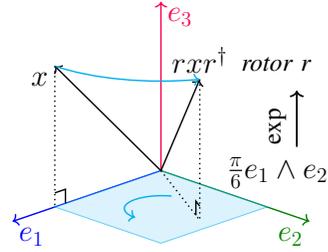
\begin{wrapfigure}[12]{r}{0.35\textwidth}
\vspace{-15pt}
\begin{center}
\tdplotsetmaincoords{70}{135} 

\scalebox{.665}{
\begin{tikzpicture}[tdplot_main_coords, scale=3]

  \coordinate (O) at (0,0,0);
  \coordinate (E1) at (1.4,0,0);
  \coordinate (E2) at (0,1.4,0);
  \coordinate (E3) at (0,0,1.2);

  \draw[->, thick, color=c1] (O) -- (E1) node[anchor=north west] {\LARGE \( e_1 \)};
  \draw[->, thick, color=c2] (O) -- (E2) node[anchor=north east] {\LARGE \( e_2 \)};
  \draw[->, thick, color=c3] (O) -- (E3) node[anchor=north west] {\LARGE \( e_3 \)};

  \filldraw[fill=c12!20, draw=c12, opacity=0.6]
    (O) -- (1,0,0) -- (1,1,0) -- (0,1,0) -- cycle;

  \begin{scope}[shift={(.3,0.4,0)}, canvas is xy plane at z=0]
    \draw[->, thick, c12] (0,0) arc (210:360:0.25);
  \end{scope}


  \draw[->, thick, color=black] (O) -- (1,0,1) node[anchor=north east] {\LARGE \( x \)};

  \draw[->, thick, black]
  (O) -- (0.5,0.866,1)
    node[anchor=south] {\LARGE \(r x r^\dagger\)};

  \draw[dotted, thick, black] (0.5, 0.866, 1) -- (0.5, 0.866, 0);
  \draw[dotted, thick, black] (0,0,0) -- (0.5, 0.866, 0);
  \draw[thick, black] 
  (0.45, 0.779, 0) -- ++(0, 0, 0.1) -- ++(.05, .0866, 0);

\begin{scope}[canvas is xy plane at z=1]
  \draw[->, thick, c12] (1,0) arc[start angle=0, end angle=60, radius=1];
\end{scope}

\draw[dotted, thick, black] (1,0,1) -- (1,0,0);

\begin{scope}[shift={(.9,0,0)}, canvas is xz plane at y=0]
  \draw[thick] (0,0) -- ++(0,0.1) -- ++(0.1,0);
\end{scope}

\node at (-.58,.52,-0.03)  {\LARGE \(\frac{\pi}{6}e_1 \wedge e_2 \)};

\node[
  anchor=west,
  text width=2cm,
  align=left,
  font=\itshape\Large
] (rot) at (1,1.7,1.45) { 
  rotor r
};

\draw[->, line width=1pt, black] (-.58, .7, .2) -- (-.58, .7, .6);

\node[
  anchor=west,
  rotate=90,
  text width=2cm,
  align=left,
  font=\Large
] (rot) at (-.25,.85,.3) { 
  exp
};

\end{tikzpicture}
}
\end{center}
\vspace{-10pt}
\captionof{figure}{The sandwich product rotating a vector $60^\circ$ in the $e_1\wedge e_2$ plane.}
\label{fig:rotor}
\end{wrapfigure}
where $\Cl^+(n)\subset\Cl(n)$ is the even subalgebra and $\dagger$ denotes grade-wise reversion. The Spin group captures the set of orientation-preserving rotations within $\Cl(n)$. The elements $r\in\Spin(n)$, called \textit{rotors}, act on multivectors $x\in\Cl(n)$ through the sandwich product as shown in Fig. \ref{fig:rotor},
\begin{equation}
    x \mapsto rxr^\dagger.
    \label{eq:rotor-sandwich}
\end{equation}
Applying multiple rotors in {\em parallel} instantiates Lem. $\ref{thm:multivector_transfromation}$, where $a_t=r_t$ and $b_t=r_t^\dagger$. We clarify that while the {\em general} Clifford algebra can represent any linear map, restricting to $\Spin(n)$ limits transformations to orthogonal (rotation-preserving) ones. Consequently, our construction does not capture arbitrary linear maps. In practice, expressivity is recovered by combining multiple rotor modules acting on different subspaces and aggregating their outputs (see Sec.~\ref{sec:experiment}). While this sandwich form is useful, we must efficiently parametrize these rotors. To do so, we first observe the relationship between $\Spin(n)$ and the more familiar rotation group, $\SO(n)$.

\begin{fact}[\citet{gallier2020differential}]
    $\Spin(n)$ is a double cover of the special orthogonal group $\SO(n)$ for $n\geq 3$.
\end{fact}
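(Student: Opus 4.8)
The plan is to establish the classical fact that $\Spin(n)$ is a connected double cover of $\SO(n)$ via the twisted adjoint (sandwich) action. First I would define the map $\rho:\Spin(n)\to \mathrm{GL}(\Cl^1(n))$ by $\rho(r)(v)=rvr^\dagger$ and check that it is well-defined, i.e. that $rvr^\dagger$ is again a vector (grade-$1$ element) whenever $r\in\Spin(n)$ and $v\in\Cl^1(n)$; this follows because a rotor is a product of an even number of unit vectors, and conjugation by a single unit vector $u$ sends $v\mapsto uvu^\dagger = -uvu$ to the reflection of $v$ across the hyperplane orthogonal to $u$, which is again a vector. Composing an even number of such reflections keeps the result in $\Cl^1(n)$ and, moreover, preserves the quadratic form since each reflection is an isometry; hence $\rho(r)\in \mathrm{O}(n)$. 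Determinant considerations (a product of an even number of reflections has determinant $+1$) then give $\rho(r)\in\SO(n)$, and $\rho$ is clearly a group homomorphism because $\rho(r_1 r_2)(v) = r_1 r_2 v (r_1 r_2)^\dagger = r_1(r_2 v r_2^\dagger)r_1^\dagger$.

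Next I would show $\rho$ is surjective. By the Cartan–Dieudonné theorem every element of $\SO(n)$ is a product of an even number ($\le n$) of hyperplane reflections; each such reflection is realized as conjugation by a unit vector $u_i$, so the corresponding element of $\SO(n)$ equals $\rho(u_1 u_2\cdots u_{2m})$ with $u_1\cdots u_{2m}\in\Spin(n)$ (it is even-graded and squares under reversion to $\prod u_i^2 = 1$ up to the unit-norm normalization). Then I would compute the kernel: if $\rho(r)=\mathrm{id}$, then $rvr^\dagger=v$, i.e. $rv=vr$ for all vectors $v$; an element of the even subalgebra commuting with every generator $e_i$ must be a scalar (one expands $r$ in the canonical basis and checks that any basis blade of grade $\ge 2$ anticommutes with at least one generator), so $r=\lambda\in\rr$, and $rr^\dagger=\lambda^2=1$ forces $r=\pm 1$. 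Thus $\ker\rho=\{\pm 1\}\cong\mathbb{Z}/2$, giving the short exact sequence $1\to\{\pm1\}\to\Spin(n)\xrightarrow{\rho}\SO(n)\to 1$, so $\rho$ is a two-to-one covering homomorphism.

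Finally, to justify the word \emph{cover} (rather than merely a two-to-one homomorphism), I would note that $\Spin(n)$ is connected for $n\ge 1$: any rotor is a product of exponentials of bivectors, and $\exp(t\,B)$ for a bivector $B$ traces a path from $1$ to the given rotor; equivalently, $-1 = \exp(\pi\, e_1 e_2)$ lies in the same connected component as $1$, so the two preimages of the identity are joined by a path, making $\rho$ a nontrivial connected double cover. The continuity and local-diffeomorphism properties of $\rho$ are immediate since it is polynomial in the coordinates of $r$ and has everywhere-surjective differential (both groups have dimension $\binom{n}{2}$).

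The main obstacle is the surjectivity/kernel bookkeeping: one must be careful that the \emph{untwisted} sandwich $rvr^\dagger$ (as opposed to the graded twisted conjugation $r v \hat r^{-1}$) genuinely lands in $\Cl^1(n)$ for \emph{all} even unit $r$ — this is fine precisely because on the even subalgebra the grade involution is trivial, so the twisted and untwisted actions coincide — and that the Cartan–Dieudonné decomposition uses an even number of reflections exactly when the orthogonal map has determinant $+1$. Since this is a standard textbook result, I would state it citing \citet{hestenes2012clifford} and only sketch these points.
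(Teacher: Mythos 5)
The paper does not prove this statement---it is quoted as a classical \emph{Fact} with no argument supplied---so there is no in-paper proof to compare against; your write-up is the standard twisted-adjoint argument (sandwich action on $\Cl^1(n)$, Cartan--Dieudonn\'e for surjectivity, kernel $\{\pm 1\}$ from the center of the even subalgebra, connectedness via $-1=\exp(\pi\,e_1e_2)$), and it is correct in structure, so it serves exactly the role the paper's citation is standing in for. Two small slips worth fixing: for a single unit vector $u$ one has $u^\dagger=u$, so $uvu^\dagger=uvu=2(u\cdot v)u-v$, which is the \emph{negative} of the hyperplane reflection (your displayed identity ``$uvu^\dagger=-uvu$'' is false as written); the signs cancel in pairs for even products, which is why the argument still goes through for $r\in\Spin(n)$, as you yourself note in the final paragraph. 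Also, connectedness of $\Spin(n)$ holds for $n\ge 2$, not $n\ge 1$ ($\Spin(1)=\{\pm1\}$ is discrete), though this does not affect the cases the paper uses.
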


This relationship is important in that while $\Spin(n)$ and $\SO(n)$ are topologically different (hence the ``double cover''), they share the same infinitesimal structure---namely, their Lie algebra. This allows using parametrization techniques for $\SO(n)$ to represent elements of $\Spin(n)$. In particular, we will see that rotors can be generated from bivectors via the exponential map, just as rotation matrices arise from skew-symmetric matrices.

\begin{remark}
    For vector inputs $x$, the sandwich product $rxr^\dagger$ performs the same rotation as the corresponding matrix in $\SO(n)$. However, the full utility of rotors becomes clear when $x$ is multivector: the transformation extends to higher-grade components within the Clifford algebra, going beyond the vector subspace. This makes rotors especially suitable for operating on the richer representations. 
\end{remark}

We return to the question: can we efficiently represent rotors so that $a_t$, $b_t$ in Lem. \ref{thm:multivector_transfromation} belong to $\Spin(n)$?


\paragraph{Constructing rotors from bivectors.}
\textit{Rotors in $\Spin(n)$ can be parametrized via bivectors}: grade-2 elements of $\Cl(n)$ that encode oriented planes of rotation. To understand this parametrization, we examine the connection between rotors and rotation matrices via Lie groups and their Lie algebras. 
\begin{definition}
    A Lie group $G$ is a smooth manifold with the usual group properties along with smooth (infinitely differentiable) group operations. Its associated Lie algebra is a vector space equipped with an antisymmetric, bilinear operation $[X,Y]$, called the Lie bracket, satisfying the Jacobi identity.
\end{definition}

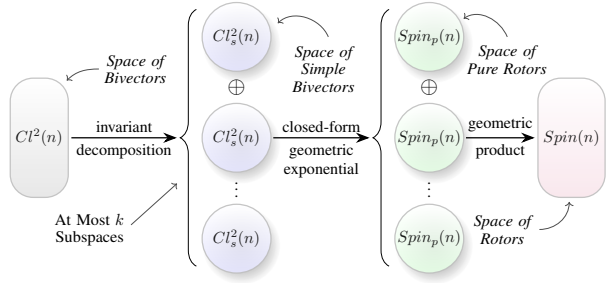
\begin{wrapfigure}[12]{r}{0.58\textwidth}
\begin{center}
\scalebox{.61}{
\begin{tikzpicture}[
    vector/.style={-{Stealth[length=3mm]}, thick, black},
    oval/.style={top color=white, bottom color=gray!20, draw=gray!50, thick, opacity=0.9, minimum width=2.2cm, minimum height=1cm, rounded corners=0.5cm},
    circlenode/.style={top color=white, bottom color=gray!20, draw=gray!50, thick, opacity=0.9, circle, minimum width=1.5cm, minimum height=1.5cm, inner sep=0pt},
    arrow/.style={-{Stealth[length=3mm]}, thick, black},
    oplus/.style={black, minimum size=0.6cm}
]


\begin{scope}[xshift=1.2cm]

\node[oval, minimum width=1.2cm, minimum height=2.6cm] (cl2) {$Cl^2(n)$};
\end{scope}

\node[
  text width=1.5cm,
  align=center,
  font=\itshape
] (rot) at (3.3,1.5) { 
  Space of Bivectors
};

\draw[arrow] (1.9,0) -- ++(2.3,0) node[midway, above] {invariant} node[midway, below] {decomposition};

\begin{scope}[opacity=0.2, transform canvas={xshift=0.1cm, yshift=-0.15cm}]
    \shade[inner color=black, outer color=white, opacity=0.5] (5.5,2.2) circle (0.85cm);
    \shade[inner color=black!80, outer color=black!10] (5.5,2.2) circle (0.75cm);
\end{scope}

\begin{scope}[opacity=0.2, transform canvas={xshift=0.1cm, yshift=-0.15cm}]
    \shade[inner color=black, outer color=white, opacity=0.5] (5.5,0) circle (0.85cm);
    \shade[inner color=black!80, outer color=black!10] (5.5,0) circle (0.75cm);
\end{scope}

\node[circlenode, bottom color=blue!10] (cls1) at (5.5,2.2) {$Cl_s^2(n)$};
\node[circlenode, bottom color=blue!10] (cls2) at (5.5,0) {$Cl_s^2(n)$};
\node[
  text width=1.5cm,
  align=center,
  font=\itshape
] (rot) at (7.4,1.5) { 
  Space of Simple Bivectors
};

\node at (5.5,-1.025) {$\vdots$};

\begin{scope}[opacity=0.2, transform canvas={xshift=0.1cm, yshift=-0.15cm}]
    \shade[inner color=black, outer color=white, opacity=0.5] (5.5,-2.2) circle (0.85cm);
    \shade[inner color=black!80, outer color=black!10] (5.5,-2.2) circle (0.75cm);
\end{scope}
\node[circlenode, bottom color=blue!10] (cls3) at (5.5,-2.2) {$Cl_s^2(n)$};

\node[oplus] (oplus1) at (5.5,1.1) {\Large$\oplus$};

\draw[arrow] (6.3,0) -- ++(2.1,0) node[midway, above] {closed-form} node[midway, below] {\shortstack{geometric\\exponential}};

\begin{scope}[opacity=0.2, transform canvas={xshift=0.1cm, yshift=-0.15cm}]
    \shade[inner color=black, outer color=white, opacity=0.5] (9.7,2.2) circle (0.85cm);
    \shade[inner color=black!80, outer color=black!10] (9.7,2.2) circle (0.75cm);
\end{scope}

\begin{scope}[opacity=0.2, transform canvas={xshift=0.1cm, yshift=-0.15cm}]
    \shade[inner color=black, outer color=white, opacity=0.5] (9.7,0) circle (0.85cm);
    \shade[inner color=black!80, outer color=black!10] (9.7,0) circle (0.75cm);
\end{scope}

\node[circlenode, bottom color=green!10] (spin1) at (9.7,2.2) {$Spin_p(n)$};
\node[circlenode, bottom color=green!10] (spin2) at (9.7,0) {$Spin_p(n)$};
\node[
  text width=1.72cm,
  align=center,
  font=\itshape
] (rot) at (11.4,1.7) { 
  Space of Pure Rotors
};

\node at (9.7,-1.025) {$\vdots$};

\begin{scope}[opacity=0.2, transform canvas={xshift=0.1cm, yshift=-0.15cm}]
    \shade[inner color=black, outer color=white, opacity=0.5] (9.7,-2.2) circle (0.85cm);
    \shade[inner color=black!80, outer color=black!10] (9.7,-2.2) circle (0.75cm);
\end{scope}
\node[circlenode, bottom color=green!10] (spin3) at (9.7,-2.2) {$Spin_p(n)$};

\node[oplus] (spin_oplus1) at (9.7,1.1) {\Large$\oplus$};

\draw[arrow] (10.5,0) -- ++(1.5,0) node[midway, above] {geometric} node[midway, below] {product};

\begin{scope}[opacity=0.2, transform canvas={xshift=0.1cm, yshift=-0.2cm}]
\end{scope}

\node[oval, bottom color=purple!10, minimum width=1.2cm, minimum height=2.6cm] (spinn) at (12.8,0) {$Spin(n)$};

\node[
  text width=1.5cm,
  align=center,
  font=\itshape
] (rot) at (11.3,-2) { 
  Space of Rotors
};

\draw [decorate,decoration={brace,amplitude=10pt,raise=5pt}] 
  (4.8,-2.8) -- (4.8,2.8);

\draw [decorate,decoration={brace,amplitude=10pt,raise=5pt}] 
  (9,-2.8) -- (9,2.8);

\node[
  text width=1.5cm,
  align=center,
] (rot) at (2.3,-2) { 
  At Most $k$ Subspaces
};

\draw[->, thin, gray!60!black]
  (3.2, -1.9) -- (4.2, -1);


\draw[->, thin, gray!60!black]
  (2.5, 1.5) 
    to[out=180,in=45] 
      (1.8,1.25);

\draw[->, thin, gray!60!black]
  (7.5, 2.2) 
    to[out=112.5,in=45] 
      (6.4, 2.5);

\draw[->, thin, gray!60!black]
  (11.4, 2.2) 
    to[out=112.5,in=35] 
      (10.45, 2.6);

\draw[->, thin, gray!60!black]
  (12.1, -2) 
    to[out=0,in=-90] 
      (12.75, -1.4);

\end{tikzpicture}
}
\end{center}
\vspace{-5pt}
\captionof{figure}{The [bivector $\rightarrow$ invariant decomposition $\rightarrow$ rotor decomposition $\rightarrow$ rotor] process that enables exact parametrization. Note that a \textit{pure} rotor is one that corresponds to a \textit{simple} bivector.}
\label{fig:process}
\end{wrapfigure}

Let us check examples. $\Spin(n)$ and $\SO(n)$ are different Lie groups with the same Lie algebra. Specifically, the Lie algebra of skew-symmetric matrices,
        $\so(n) \triangleq \left\{B\in\rr^{n\times n} | B = -B^T\right\},$
underlies both $\Spin(n)$ and $\SO(n)$, despite their topological differences. This shared structure is important. The exponential map gives a surjective correspondence between $\so(n)$ to $\SO(n)$ as
\begin{equation}
    \exp(B) = \sum_{i=0}^\infty \frac{B^i}{i!},
    \label{eq:mat_exp}
\end{equation}
showing that every rotation in $\SO(n)$ can be realized as $\exp(B)$ for some $B\in\so(n)$ using only $\dim(\so(n))={n \choose 2}$ independent parameters---fewer than the $n^2$ entries of a matrix \citep{lezcanocasado2019cheaporthogonalconstraintsneural}. The bridge to the Clifford algebra setting is from another key isomorphism:
\begin{fact}[\citet{doran2003geometric}]
    The space of skew-symmetric matrices is isomorphic to that of bivectors, i.e., $\so(n)\cong\Cl^2(n)$.
\end{fact}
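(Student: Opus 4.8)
The statement should be read as an isomorphism of \emph{Lie} algebras (the shared infinitesimal object of $\SO(n)$ and $\Spin(n)$): a bare linear isomorphism is immediate, since $\dim\so(n) = {n\choose 2} = \dim\Cl^2(n)$, so the real content is that the bracket structures match. The plan is to (i) equip $\Cl^2(n)$ with the commutator bracket induced by the geometric product, (ii) exhibit an explicit map $\varphi\colon\Cl^2(n)\to\so(n)$, (iii) check that $\varphi$ is a Lie-algebra homomorphism, and (iv) conclude via $\ker\varphi=0$ together with the dimension count. Concretely I would take $\varphi(B)\triangleq\mathrm{ad}_B|_{\Cl^1(n)}$, the linear map sending a vector $x$ to the geometric-product commutator $Bx-xB$; on the standard basis it sends $e_ie_j$ (for $i<j$) to a fixed nonzero scalar multiple of $E_{ij}-E_{ji}$, the standard generators of $\so(n)$.

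First I would check $\varphi$ is well defined, i.e.\ $\varphi(B)\in\so(n)$, which needs two things. (a) $Bx-xB$ is again grade $1$: the geometric product of a grade-$2$ and a grade-$1$ element carries only grades $1$ and $3$, and its grade-$3$ part is the symmetric wedge $B\wedge x = x\wedge B$, which cancels in the commutator; hence $\varphi(B)$ is a genuine endomorphism of $\R^n$. (b) $\varphi(B)$ is skew for the quadratic form: writing the inner product of vectors as $\langle u,v\rangle = \frac12(uv+vu)$, the quantity $\langle Bx-xB,\,y\rangle+\langle x,\,By-yB\rangle$ collapses to the scalar part of $B(xy+yx)-(xy+yx)B$, which vanishes because $xy+yx=2\langle x,y\rangle$ is a scalar and hence commutes with $B$. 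Alternatively one can bypass (b) altogether: $\varphi(B)$ is the derivative at $t=0$ of the curve $x\mapsto e^{tB}x(e^{tB})^\dagger$, which lies in $\SO(n)$ since $e^{tB}(e^{tB})^\dagger = 1$, so its generator automatically lies in $\so(n)$.

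Next, the Lie structure. Bivectors are closed under $[A,B]=AB-BA$: the geometric product of two bivectors has grades $0,2,4$, and its grade-$0$ and grade-$4$ parts are symmetric under exchanging $A$ and $B$, so they drop out of the commutator, leaving $[A,B]\in\Cl^2(n)$; thus $(\Cl^2(n),[\cdot,\cdot])$ is a Lie algebra. That $\varphi$ intertwines the brackets is then exactly the Jacobi identity of the associative algebra $\Cl(n)$ restricted to vectors: as operators on $\Cl^1(n)$, $\varphi([A,B])=\mathrm{ad}_{AB-BA}=[\mathrm{ad}_A,\mathrm{ad}_B]=[\varphi(A),\varphi(B)]$ (if a normalization constant is built into $\varphi$, the matching factor is placed on the target bracket). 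For injectivity, if $\varphi(B)=0$ then a one-line basis computation---$[e_ie_j,e_k]=0$ for $k\notin\{i,j\}$, while $[e_ie_j,e_i]=-2e_j$ and $[e_ie_j,e_j]=2e_i$---forces every coefficient of $B$ to vanish, so $B=0$; equivalently, such a $B$ would be central in $\Cl(n)$, and the center contains no nonzero bivector. Since both spaces have equal finite dimension, injectivity upgrades $\varphi$ to a bijection, hence a Lie-algebra isomorphism; the cases $n\le 1$ are vacuous.

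The only step that is more than bookkeeping is (b), the grade/parity computation showing $\varphi(B)$ is skew---and, as noted, even that can be sidestepped via the one-parameter-subgroup description of $\varphi(B)$. Everything else (closure under the bracket, the Jacobi identity, triviality of the kernel, and the dimension count) is routine.
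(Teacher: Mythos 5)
Your proof is correct. Note, though, that the paper does not actually prove this Fact: it is cited as a textbook result, and the only content supplied is the explicit coefficient-matching bijection $B \mapsto \sum_{i<j} B_{ij}\, e_i \wedge e_j$ together with the implicit dimension count $\dim\so(n) = \binom{n}{2} = \dim\Cl^2(n)$, i.e.\ exactly the ``bare linear isomorphism'' you dismiss as immediate. What you add is the stronger and arguably intended statement---that the isomorphism respects the Lie brackets---which is what the surrounding text needs when it asserts that $\so(n)$ ``underlies both $\Spin(n)$ and $\SO(n)$.'' Your route via $\varphi(B) = \mathrm{ad}_B|_{\Cl^1(n)}$ is sound in every step I checked: the grade bookkeeping for well-definedness (the grade-$3$ parts cancel because $x \wedge B = B \wedge x$), the skewness computation, closure of $\Cl^2(n)$ under the commutator (the grade-$0$ and grade-$4$ parts of $AB$ are symmetric in $A,B$), the homomorphism property from associativity, and injectivity from the basis relations $[e_ie_j,e_i]=-2e_j$, $[e_ie_j,e_j]=2e_i$. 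It is worth pointing out that your key identity $\mathrm{ad}_b(v) = 2Bv$ is precisely the one the paper later derives and uses in the appendix proof of Thm.~\ref{thm:matEquiv}, so your argument is not only compatible with the paper's development but front-loads a computation the authors defer. The only cosmetic imprecision is the phrase ``collapses to the scalar part of $B(xy+yx)-(xy+yx)B$''; the expression collapses to $\tfrac12\bigl(B(xy+yx)-(xy+yx)B\bigr)$ outright, which vanishes because $xy+yx$ is a scalar---no extraction of a scalar part is needed. Your handling of the factor of $2$ (absorbing it into the identification of generators rather than into $\mathrm{ad}$ itself, which is a genuine homomorphism as is) is also fine.
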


Given a skew-symmetric matrix $B \in \so(n)$, the corresponding bivector $b \in \Cl^2(n)$ is constructed as 
$$
b = \sum_{1 \le i < j \le n} B_{i,j} \, e_i \wedge e_j.$$ 
Similar to the exponential map relating $\so(n)$ and $\SO(n)$ (i.e., $\exp(B)$ generating rotations in $\SO(n)$), every rotor $r \in \Spin(n)$ is the exponential of some bivector $b\in\Cl^2(n)$ given explicitly by the series
\begin{equation}
    \label{eq:bad_exp}
    r = \clexp(b)=\sum_{i=0}^\infty\frac{b^i}{i!},
\end{equation}
where $b^k$ is the $k$-fold geometric product of $b$ with itself. This means that we can encode a rotor with only $\dim\left(\Cl^2(n)\right) = {n \choose 2}$ parameters---the dimension of the bivector space. 

\paragraph{Main advantage.} At first glance, our parametrization may seem to offer no advantage. We have parametrized both $\Spin(n)$ and $\SO(n)$ from the same ${n \choose 2}$ parameters from $\so(n)$. 
Notice that while $\SO(n)$ acts only on $n$-dimensional vectors, rotors in $\Spin(n)$ act on a subset of the full $2^n$-dimensional space of multivectors. 
By utilizing \textit{rotors as the action on multivector input and bivectors as our irreducible primitives}, 
we will use exponentially fewer parameters. 

\begin{exmp}
A $d\times d$ dense matrix with $d=2048$, common in self-attention blocks and projection layers (e.g., in LLaMa), uses more than $4$M parameters. If 
$w$ (e.g., $\simeq 3$) is the width hyperparameter in Lem. \ref{thm:multivector_transfromation}, 
approximating with bivector irreducibles requires only $w~{\log_2 d \choose 2} = 55~ w.$
\end{exmp}
This reduction (to $55 \times 3$) comes from identifying bivectors as the primitive that generate rich linear maps through composition. 
Note that the infinite series in $\eqref{eq:bad_exp}$ is problematic due to the need for approximation. We avoid this by presenting a closed-form solution that preserves differentiability.


\section{Algorithmic Implementation and Analysis of our Rotor-gadget}
\label{sec:how_we_do_it}
We discussed above how rotors can be parametrized through bivectors via the exponential map in \eqref{eq:bad_exp}. A remaining challenge is the infinite series. Of course, we can truncate to a finite length and incur approximation errors. However, for a special class of bivectors, the exponential map admits an exact closed-form solution, which prevents any approximation errors. Moreover, this form remains fully differentiable. We describe the details of this alternative here. 

\subsection{Closed-form differentiable computations of rotors}\label{sec:compute-rotors}
A key observation is that when a skew-symmetric matrix $B\in\so(n)$ generates a rotation restricted to a single $2$-dimensional plane, the matrix exponential in \eqref{eq:mat_exp} reduces to a finite closed-form expression---mirroring the simplicity of the classic Rodrigues formula for axis-angle rotations \citep{goldstein2002classical}. An analogous simplification holds for bivectors. The exponential map in \eqref{eq:bad_exp} admits a closed form when $b \in \Cl^2(n)$ is \textit{simple}, meaning it can be written as $b=u\wedge v$ for some vectors $u, v\in\Cl^1(n)$, or equivalently $b\wedge b = 0$. This ensures that $b$ represents a single-plane rotation rather than a composition of rotations. The resulting closed-form expression is:
\begin{equation}
    \clexp(b)=\cos(\|b\|) + \frac{\sin(\|b\|)}{\|b\|}b.
    \label{eq:good_exp}
\end{equation}
This form is exact \citep{doran2003geometric}. However, restricting to simple bivectors is limiting, as simple bivectors span only a \textit{subset} of $\Cl^2(n)$, and thus generate only a \textit{subset} of $\Spin(n)$. To capture the full expressivity of rotor-based transformations, we wish to extend to general bivectors. 

\paragraph{Building bivectors from simple bivectors.}
To utilize the closed-form exponential in \eqref{eq:good_exp} while retaining the full richness of $\Cl^2(n)$, we need a way to express arbitrary bivectors in terms of simple ones. Fortunately, \citet{roelfs2021gradedsymmetrygroupsplane} show that any bivector $b \in \Cl^2(n)$ admits an \textit{invariant decomposition} as a sum of at most $k\triangleq\lfloor n/2\rfloor$ mutually commuting, orthogonal, simple bivectors $\{b_1, b_2, \ldots, b_k\}$. This decomposition has two key advantages: (1) each component $b_i$ admits an efficient closed-form solution $\exp(b_i)$ in \eqref{eq:good_exp} since $b_i$ is simple, and (2) mutual commutativity ensures $\exp\left(\sum_{i=1}^kb_i\right) = \prod_{i=1}^k\exp(b_i)$, via the standard Lie algebra identity $\exp(X+Y)=\exp(X)\exp(Y)$ when their commutator $[X,Y]=0$. In $\so(n)$, the Lie bracket is $[X,Y]=XY - YX$, which vanishes when $X$ and $Y$ commute. With these benefits in place, the remaining question is how to construct the decomposition? A recent result provides a spectral formulation in terms of the eigenvectors and eigenvalues of $b$, stated below.
\begin{lemma}[Thm. 4.8 in \citet{eelbode2024outereigentangentconcepts}]
    Let $b\in\Cl^2(n)$ have as many eigenvectors as its effective pseudo-dimension. Then, we have
    $$
        b = \sum_{j=1}^{k} \mu_{j} \frac{v_{\mu_{j}^{+}} \wedge v_{\mu_{j}^{-}}}{v_{\mu_{j}^{+}} \cdot v_{\mu_{j}^{-}}},
    $$
    where $\sigma(b)=\{\pm \mu_1, \ldots, \mu_k\}$ is the spectrum of $b$ and $v_{\mu_j^+}$ and $v_{\mu_j^-}$ are partner eigenvectors.
    \label{lem:invariantDecomp}
\end{lemma}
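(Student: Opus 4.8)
The plan is to push the entire statement down to the level of the skew‑symmetric operator that $b$ induces on grade‑$1$ elements, where it becomes essentially the real spectral theorem together with one explicit computation. Recall the linear isomorphism $\so(n)\cong\Cl^2(n)$: a bivector $b$ acts on vectors by the commutator $\mathrm{ad}_b(x)\triangleq\frac{1}{2}(bx-xb)\in\Cl^1(n)$, the operator $\mathrm{ad}_b$ is skew with respect to the quadratic form, and—crucially—$b\mapsto\mathrm{ad}_b$ is a \emph{bijection} $\Cl^2(n)\to\so(n)$, so $b$ is uniquely recovered from $\mathrm{ad}_b$. Writing $b_j\triangleq\mu_j\,(v_{\mu_j^+}\wedge v_{\mu_j^-})/(v_{\mu_j^+}\cdot v_{\mu_j^-})$, it therefore suffices to prove the operator identity $\mathrm{ad}_b=\sum_{j=1}^{k}\mathrm{ad}_{b_j}$ and then invoke injectivity.

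\textbf{Step 1: spectral data.} First I would complexify and apply the spectral theorem for the (normal) operator $\mathrm{ad}_b$: its nonzero spectrum is a union of conjugate, purely imaginary pairs $\{+\mu_j,-\mu_j\}$, and the hypothesis that $b$ ``has as many eigenvectors as its effective pseudo‑dimension'' is precisely the non‑degeneracy condition the argument needs—namely that $\mathrm{ad}_b$ is diagonalizable and its restriction to $(\ker\mathrm{ad}_b)^{\perp}$ is non‑degenerate (automatic in the Euclidean signature $\Cl(n,0)$). Take $v_{\mu_j^+}$ (resp.\ $v_{\mu_j^-}$) to be an eigenvector for $+\mu_j$ (resp.\ $-\mu_j$); partners are complex conjugates of each other. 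Skew‑symmetry of $\mathrm{ad}_b$ then forces, for the complex bilinear form, the orthogonality relations: every eigenvector is isotropic, $v_{\mu_j^\pm}\cdot v_{\mu_j^\pm}=0$; eigenvectors from distinct pairs are orthogonal, $v_{\mu_j^\pm}\cdot v_{\mu_\ell^\pm}=v_{\mu_j^\pm}\cdot v_{\mu_\ell^\mp}=0$ for $\ell\neq j$; and $\ker\mathrm{ad}_b$ is orthogonal to all $v_{\mu_j^\pm}$. The only pairing that need not vanish is the partner pairing $v_{\mu_j^+}\cdot v_{\mu_j^-}$, which is therefore nonzero and legitimizes the denominators in the formula.

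\textbf{Step 2: the key computation.} Next I would evaluate $\mathrm{ad}_{b_j}$ on the eigenbasis using the elementary identity $\frac{1}{2}[(u\wedge v)x-x(u\wedge v)]=(x\cdot v)\,u-(x\cdot u)\,v$, valid for any vectors $u,v,x$. Substituting $u\wedge v=v_{\mu_j^+}\wedge v_{\mu_j^-}$ and feeding in the relations of Step~1 yields $\mathrm{ad}_{b_j}(v_{\mu_j^+})=\mu_j v_{\mu_j^+}$, $\mathrm{ad}_{b_j}(v_{\mu_j^-})=-\mu_j v_{\mu_j^-}$, while $\mathrm{ad}_{b_j}$ annihilates $v_{\mu_\ell^\pm}$ for $\ell\neq j$ and all of $\ker\mathrm{ad}_b$; thus $\mathrm{ad}_{b_j}$ is exactly the $j$‑th spectral component of $\mathrm{ad}_b$. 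Summing over $j$ and using that $\{v_{\mu_1^\pm},\dots,v_{\mu_k^\pm}\}\cup(\text{basis of }\ker\mathrm{ad}_b)$ is a complete eigenbasis gives $\sum_j\mathrm{ad}_{b_j}=\mathrm{ad}_b$ as operators, and injectivity of $b\mapsto\mathrm{ad}_b$ delivers $b=\sum_{j=1}^{k}b_j$. A short unpacking $v_{\mu_j^\pm}=u_j\pm i w_j$ additionally shows each $b_j$ is a real multiple of $u_j\wedge w_j$—hence a genuine real, simple bivector—and that the $b_j$ live in mutually orthogonal $2$‑planes, so they commute and are mutually orthogonal, recovering the invariant decomposition in the form used earlier.

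\textbf{Main obstacle.} The delicate part is the $\mathbb{C}$‑linear bookkeeping with the \emph{bilinear} (not Hermitian) form: establishing which inner products vanish, identifying the partner pairing as the unique surviving one, and threading the sign and normalization conventions through the contraction identity so that $\mathrm{ad}_{b_j}$ emerges with eigenvalue exactly $+\mu_j$ (not $-\mu_j$) on $v_{\mu_j^+}$, so that the sum reproduces $b$ and not $-b$. A secondary subtlety is translating ``as many eigenvectors as its effective pseudo‑dimension'' into the precise hypothesis that makes the eigenbasis complete ($\mathrm{ad}_b$ non‑defective on $(\ker\mathrm{ad}_b)^{\perp}$)—a point that is, fortunately, vacuous in the signature $\Cl(n,0)$ relevant to this paper.
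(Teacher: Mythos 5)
This lemma is not proved in the paper at all: it is imported verbatim as Theorem~4.8 of \citet{eelbode2024outereigentangentconcepts}, and the paper's Appendix~B explicitly leans on it (``the heavy lifting is done by\ldots''), so there is no internal proof to compare against. Your blind attempt is therefore a genuine addition rather than a rederivation, and the route you take is sound: reduce to the operator $\mathrm{ad}_b$ on $\Cl^1(n)$ via the linear isomorphism $\Cl^2(n)\cong\so(n)$ (with the $\tfrac12$ normalization so that $\mathrm{ad}_b(x)=b\cdot x=Bx$, consistent with the paper's own computation $bv-vb=2Bv$ in the proof of Theorem~B.5), invoke the spectral theorem for real skew-symmetric operators, verify via the contraction identity $\mathrm{ad}_{u\wedge v}(x)=(x\cdot v)u-(x\cdot u)v$ that each candidate term $b_j$ reproduces exactly the $j$-th spectral block, and conclude by injectivity of $b\mapsto\mathrm{ad}_b$. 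Your closing remark that $v_{\mu_j^{\pm}}=u_j\pm i w_j$ makes each $b_j$ a real simple bivector is precisely the computation the paper \emph{does} carry out in Appendix~B (where it shows $\mu_1\,(v_{\mu_1^+}\wedge v_{\mu_1^-})/(v_{\mu_1^+}\cdot v_{\mu_1^-})=\sigma(u\wedge v)$), so the two arguments dovetail. Two small points to tighten: (i) your cross-pair orthogonality $v_{\mu_j^{\pm}}\cdot v_{\mu_\ell^{\pm}}=0$ follows from skew-symmetry only when $\mu_j\neq\pm\mu_\ell$; for repeated singular values the invariant decomposition is non-unique and you must \emph{choose} an adapted orthogonal eigenbasis within the degenerate eigenspace before the argument goes through; (ii) the sum should be understood as running over the nonzero conjugate pairs only, with $\ker\mathrm{ad}_b$ absorbing the rest, which your Step~2 handles but is worth stating when $n$ is odd or $b$ is degenerate.
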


\begin{figure}[bt]
\begin{minipage}[t]{0.48\textwidth}
\begin{algorithm}[H]
    \caption{Differentiable Inv. Decomp.}
    \label{algo:inv_decomp}
    {\footnotesize
    \begin{algorithmic}[1]
    \REQUIRE $b\in\Cl^2(n)$, $v\in\Cl^1(n)^{k-1}$
    \ENSURE Inv. Decomp. $\{b_1,\dots b_k\}$,
    
    singular vectors $\{v_1,\ldots,v_{k-1}\}$
    \STATE Initialize $\texttt{decomp}, \texttt{vectors} \gets \emptyset, \emptyset$
    \FOR{$i = 1$ to $k-1$}
        \STATE $b_s, v_i \gets \texttt{Proj}_{simple}(b, v_i)$
        \STATE $b \gets b - b_s$
        \STATE $\texttt{decomp} \gets \texttt{decomp} \cup \{b_s\}$
        \STATE $\texttt{vectors} \gets \texttt{vectors} \cup \{v_i\}$
    \ENDFOR
    \STATE $\texttt{decomp} \gets \texttt{decomp} \cup \{b\}$
    \STATE \textbf{return} \texttt{decomp}, \texttt{vectors}
    \end{algorithmic}
    }
\end{algorithm}
\end{minipage}
\hfill
\begin{minipage}[t]{0.48\textwidth}
\begin{algorithm}[H]
    \caption{GA Power Iteration}
    \label{algo:ga_poweriter}
    {\footnotesize
    \begin{algorithmic}[1]
    \REQUIRE $b\in\Cl^2(n)$, $v\in\Cl^1(n)$, 
    
    threshold $\epsilon \in \rr$
    \ENSURE Approximate $\texttt{Proj}_{simple}(b)$
    \STATE $v_{\text{prev}} \gets v$
    \WHILE{$\lVert v + v_{\text{prev}}\rVert > \epsilon$}
        \STATE $v_{\text{prev}} \gets v$
        \STATE $v \gets b \llcorner (b \llcorner v)$
        \STATE $v \gets v / \|v\|_2$
    \ENDWHILE
    \STATE $\sigma u = b \llcorner v$
    \STATE $b_s \gets \sigma u \wedge v$
    \RETURN $b_s$, $v$
    \end{algorithmic}
    }
\end{algorithm}
\end{minipage}
\end{figure}

\paragraph{Differentiable invariant decomposition.} 
While the invariant decomposition can be computed using eigendecomposition, standard  algorithms pose challenges. The eigenvalues of a bivector come in conjugate pairs, and singular values come in positive pairs, making differentiation (and backpropagation) not as straightforward due to the numerical instability of eigen-decomposition for near-degenerate singular values. To address this, we introduce a Krylov subspace-inspired algorithm that iteratively extracts the simple bivectors without requiring explicit eigendecomposition. The procedure is shown in Alg. $\ref{algo:inv_decomp}$. It includes a subroutine for projecting a bivector onto the manifold of simple bivectors, which we accomplish with a Clifford algebraic adaptation of the power iteration method shown in Alg. $\ref{algo:ga_poweriter}$. This uses right contraction $b \llcorner v$, which extracts the components of $b$ that lie in the direction of $v$, avoiding the need to construct explicit matrix representations.

The projection has the closed-form expression $\Projsimple(b) = \sigma(u \wedge v)$ where $\sigma$ is the top singular value of $b$, and $u,v$ are the corresponding left and right singular vectors. Note that because of sign symmetry of the paired singular vectors, we detect convergence by the sum, not their difference, and threshold $\epsilon$. Further discussion and proofs of Alg.$\ref{algo:inv_decomp}$ --$\ref{algo:ga_poweriter}$ are in Appendix B. We provide a full visualization of bivector to rotor in Fig. $\ref{fig:process}$.

\paragraph{Computation graph size.} To control the size of the computation graph, we adapt DEQ \citep{bai2019deepequilibriummodels}: we run the fixed-point iteration without gradient tracking, and then perform a single final forward pass with tracking enabled. This ensures that the graph scales only with $k$ (number of components in decomposition), and not the number of iterations in Alg. $\ref{algo:ga_poweriter}$, which depends on the spectral gap and threshold $\epsilon$. Also, while one could initialize $v$ in Alg. $\ref{algo:ga_poweriter}$ using a non-differentiable SVD, this is unnecessary. Under small perturbations of $b$, the singular vectors vary smoothly. This follows from matrix perturbation results, which guarantees that in non-degenerate cases, eigenspaces vary analytically with the matrix entries (Ch. 2 of \citet{kato1980perturbation}). Thus, warm-starting with the previous singular vectors yields fast convergence, if gradient steps are not too large.

\begin{wrapfigure}{r}{0.51\textwidth}
\vspace{-25pt}
\begin{center}
\scalebox{.875}{

\begin{tikzpicture}[every node/.style={font=\small}, scale=1, >=latex]
\tikzset{every node/.style={font=\small, line width=0.03mm}}

\begin{scope}[opacity=0.1, transform canvas={xshift=0.2cm, yshift=-0.1cm}]
    \shade[inner color=black, outer color=white, opacity=0.25] (-0.2,-2) rectangle (0.2,2);
    \shade[inner color=black!80, outer color=black!10] (-0.15,-1.95) rectangle (0.15,1.95);
\end{scope}

\node[draw=gray!50, fill=none, minimum height=4cm, minimum width=0.4cm] (x) at (0,0) {};

\begin{scope}
    \clip (-0.2,-2) rectangle (0.2,2);
    \fill[top color=white, bottom color=gray!15] (-0.2,1.3) rectangle (0.2,2);
    \fill[top color=white, bottom color=gray!15] (-0.2,0.6) rectangle (0.2,1.3);
    \fill[top color=white, bottom color=gray!15] (-0.2,-0.1) rectangle (0.2,0.6);
    \fill[top color=white, bottom color=gray!15] (-0.2,-0.8) rectangle (0.2,-0.1);
    \fill[top color=white, bottom color=gray!15] (-0.2,-1.5) rectangle (0.2,-0.8);
    \fill[top color=white, bottom color=gray!15] (-0.2,-2) rectangle (0.2,-1.5);
\end{scope}

\begin{scope}[opacity=0.2, transform canvas={xshift=0.1cm, yshift=-0.15cm}]
    \shade[inner color=black, outer color=white, opacity=0.25] (2,1) rectangle (2.4,2);
    \shade[inner color=black!80, outer color=black!10] (2,1.05) rectangle (2.35,1.95);
\end{scope}

\begin{scope}[opacity=0.2, transform canvas={xshift=0.1cm, yshift=-0.15cm}]
    \shade[inner color=black, outer color=white, opacity=0.25] (2,-0.5) rectangle (2.4,0.5);
    \shade[inner color=black!80, outer color=black!10] (2,-0.45) rectangle (2.35,0.45);
\end{scope}

\begin{scope}[opacity=0.2, transform canvas={xshift=0.1cm, yshift=-0.15cm}]
    \shade[inner color=black, outer color=white, opacity=0.25] (2,-2) rectangle (2.4,-1);
    \shade[inner color=black!80, outer color=black!10] (2,-1.95) rectangle (2.35,-1.05);
\end{scope}

\node[draw=gray!50, fill=none, minimum height=1.2cm, minimum width=0.3cm] (r) at (2,1.5) {};
\fill[top color=red!20, bottom color=red!40, opacity=0.8] (1.85,0.9) rectangle (2.15,2.1);

\node[draw=gray!50, fill=none, minimum height=1.2cm, minimum width=0.3cm] (b) at (2,0) {};
\fill[top color=blue!20, bottom color=blue!40, opacity=0.8] (1.85,-0.6) rectangle (2.15,0.6);

\node[draw=gray!50, fill=none, minimum height=1.2cm, minimum width=0.3cm] (g) at (2,-1.5) {};
\fill[top color=green!15, bottom color=green!30, opacity=0.8] (1.85,-2.1) rectangle (2.15,-0.9);

\foreach \i/\src in {0/1.6, 2/0.3} {
  \draw[->, thick, black!80] (x.east) ++(0,\src) -- ([yshift={0.3cm - \i*0.3cm}]r.west);
}
\foreach \i/\src in {0/1, 1/-0.6, 2/-1.2} {
  \draw[->, thick, black!80] (x.east) ++(0,\src) -- ([yshift={0.3cm - \i*0.3cm}]b.west);
}
\foreach \i/\src in {1/-0.2, 2/-1.7} {
  \draw[->, thick, black!80] (x.east) ++(0,\src) -- ([yshift={0.3cm - \i*0.3cm}]g.west);
}

\begin{scope}[opacity=0.2, transform canvas={xshift=0.2cm, yshift=-0.15cm}]
    \shade[inner color=black, outer color=white, opacity=0.5] (3.7,0.5) rectangle (4.6,1.7);
    \shade[inner color=black!80, outer color=black!10] (3.75,0.55) rectangle (4.55,1.65);
\end{scope}

\begin{scope}[opacity=0.2, transform canvas={xshift=0.2cm, yshift=-0.15cm}]
    \shade[inner color=black, outer color=white, opacity=0.5] (3.7,-1.7) rectangle (4.6,-0.5);
    \shade[inner color=black!80, outer color=black!10] (3.75,-1.65) rectangle (4.55,-0.55);
\end{scope}

\begin{scope}
  \clip (3.7,0.5) rectangle (4.6,1.7);
  \fill[top color=red!20, bottom color=red!35]   (3.7,0.5) rectangle (4,1.7);
  \fill[top color=blue!20, bottom color=blue!35]  (4,0.5) rectangle (4.3,1.7);
  \fill[top color=green!15, bottom color=green!25] (4.3,0.5) rectangle (4.6,1.7);
\end{scope}
\node[draw=gray!50, fill=none, minimum height=1.2cm, minimum width=0.9cm] (topout) at (4.15,1.1) {};

\begin{scope}
  \clip (3.7,-1.7) rectangle (4.6,-0.5);
  \fill[top color=red!20, bottom color=red!35]   (3.7,-1.7) rectangle (4,-0.5);
  \fill[top color=blue!20, bottom color=blue!35]  (4,-1.7) rectangle (4.3,-0.5);
  \fill[top color=green!15, bottom color=green!25] (4.3,-1.7) rectangle (4.6,-0.5);
\end{scope}
\node[draw=gray!50, fill=none, minimum height=1.2cm, minimum width=0.9cm] (botout) at (4.15,-1.1) {};

\begin{scope}[opacity=0.2, transform canvas={xshift=0.1cm, yshift=-0.15cm}]
    \shade[inner color=black, outer color=white, opacity=0.25] (6.15,-1.75) rectangle (6.55,1.75);
    \shade[inner color=black!80, outer color=black!10] (6.18,-1.7) rectangle (6.52,1.7);
\end{scope}

\node[draw=gray!50, fill=none, minimum height=3.5cm, minimum width=0.4cm] (y) at (6.15,0) {};

\begin{scope}
    \clip (5.95,-1.75) rectangle (6.35,1.75);
    \fill[top color=white, bottom color=gray!15] (5.95,1.2) rectangle (6.35,1.75);
    \fill[top color=white, bottom color=gray!15] (5.95,0.65) rectangle (6.35,1.2);
    \fill[top color=white, bottom color=gray!15] (5.95,0.1) rectangle (6.35,0.65);
    \fill[top color=white, bottom color=gray!15] (5.95,-0.45) rectangle (6.35,0.1);
    \fill[top color=white, bottom color=gray!15] (5.95,-1) rectangle (6.35,-0.45);
    \fill[top color=white, bottom color=gray!15] (5.95,-1.55) rectangle (6.35,-1);
    \fill[top color=white, bottom color=gray!15] (5.95,-1.75) rectangle (6.35,-1.55);
\end{scope}

\draw[->, thick, black!80] (r.east) -- ([yshift=0.1cm]topout.west);
\draw[->, thick, black!80] (b.east) -- (topout.west);
\draw[->, thick, black!80] (g.east) -- ([yshift=-0.1cm]topout.west);

\draw[->, thick, black!80] (r.east) -- ([yshift=0.1cm]botout.west);
\draw[->, thick, black!80] (b.east) -- (botout.west);
\draw[->, thick, black!80] (g.east) -- ([yshift=-0.1cm]botout.west);

\draw[->, thick, black!80] ([yshift=0.1cm]topout.east) -- ([yshift=1.3cm]y.west);
\draw[->, thick, black!80] (topout.east)               -- ([yshift=0.4cm]y.west);
\draw[->, thick, black!80] ([yshift=-0.1cm]topout.east)-- ([yshift=-0.5cm]y.west);

\draw[->, thick, black!80] ([yshift=0.1cm]botout.east) -- ([yshift=1cm]y.west);
\draw[->, thick, black!80] (botout.east)               -- ([yshift=0.2cm]y.west);
\draw[->, thick, black!80] ([yshift=-0.1cm]botout.east)-- ([yshift=-1.1cm]y.west);

\node[align=center] at (0,-2.5)     {$x \in \rr^{d_\text{in}}$}; 
\node[align=center] at (2,-2.6)     {$x^{I_{i}} \in \Cl(n)$ \\ \tiny{$i \in \{1,2,3\}$}}; 
\node[align=center] at (4.15,-2.6)  {$y^{O_{j}} \in \Cl(n)$ \\ \tiny{$j \in \{1,2\}$}}; 
\node[align=center] at (6.15,-2.5)  {$y \in \rr^{d_{\text{out}}}$}; 

\node[align=center] at (3,1.95) {\footnotesize{Six rotors} \\$\psi_{r_{ij},s_{ij}}$};

\node[align=center] at (5.3,1.9) { \footnotesize{pooling}};

\end{tikzpicture}

}
\end{center}
\vspace{-10pt}
\captionof{figure}{Rotor architecture with $c_1 = 3$ and $c_2 = 2$. An input $x$ is split into $\left\{x^{I_i}\right\}_{i \in [c_1]}$, each mapped to $y^{O_j}$ via rotor maps $\psi_{r_{ij}, s_{ij}}$, for each $j \in [c_2]$. The outputs $\left\{y^{O_j}\right\}$ are pooled and assembled into the final output $y$.}
\label{fig:rotor_map}
\vspace{3pt}
\end{wrapfigure}
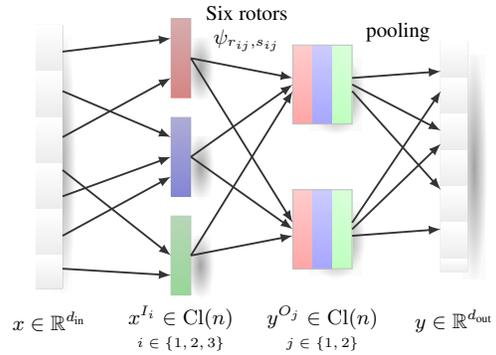

\subsection{A Generalized Rotor Gadget}
\label{subsc:genRotorGadget}
Occasionally, we may need mappings between arbitrary dimensional spaces. To do so, we now describe a generalized rotor gadget. Instead of the standard sandwich product in \eqref{eq:rotor-sandwich}, we can allow two different rotors on left/right for more expressiveness. Define the rotor-based transformation as:
\begin{equation}
    \psi_{r,s}(x) \triangleq r x s^\dagger,
    \label{eq:rotor-transformation}
\end{equation}
where $r,s\in\Spin(n)$. This construction, however, assumes the linear map acts on data with dimension of the Clifford algebra, a power of $2$. To address this, we use multiple rotor-sandwich modules operating on different subspaces. For arbitrary input and output dimensions $d_\text{in}$ and $d_\text{out}$, we utilize the rotor-sandwich modules $\psi_{r,s}(x)$ as building blocks to construct a map $\psi:\rr^{d_\text{in}} \rightarrow \rr^{d_\text{out}}$. Fix positive integers $c_1, c_2,$ and $n$ with $2^n \le \min(d_\text{in},d_\text{out})$ and let $[h]\triangleq\{1,2,\ldots,h\}$. For each $i\in [c_1]$ and $j\in [c_2]$, define $I_i \subseteq [d_\text{in}]$ and $O_{j} \subseteq [d_\text{out}]$ as $2^n$ subsets of input and output coordinates, respectively. We associate each pair with a rotor map $\psi_{r_{ij},s_{ij}}$ defined by rotors $r_{ij}, s_{ij} \in \Spin(n)$, parametrized by their corresponding bivectors $a_{ij}, b_{ij}\in\Cl^2(n)$. Then, each sub-module operates on $\Cl(n)$ and computes the rotor-sandwich action $\psi_{r_{ij},s_{ij}} : \rr^{I_i}\rightarrow\rr^{O_j}$.
The full output is defined by aggregating all $c_1c_2$ rotor maps:
\begin{equation}
    \psi(x) \triangleq \sigma\left(\left\{\psi_{r_{ij},s_{ij}}\left(x^{I_i}\right) | \; i\in[c_1], j\in[c_2]\right\}\right),
    \label{eq:rotor}
\end{equation}
where $\sigma$ is a pooling operator on the outputs of $\psi_{r_{ij},s_{ij}}$. Note that $\cup_{i} I_i = [d_{\text{in}}]$ and $\cup_{j} O_j = [d_{\text{out}}]$ are needed to fully cover the input and output dimensions. This gives us a general rotor-based transformation from arbitrary input and output dimensions parametrized by bivectors which is visualized in Fig. \ref{fig:rotor_map}. We now analyze the number of learnable parameters $\psi$ requires.

\begin{theorem}[$\psi$ Parameter Count]\label{thm:global-map}
Let \( \psi: \mathbb{R}^{d_{\text{in}}} \to \mathbb{R}^{d_{\text{out}}} \) be the mapping defined above, composed of rotor modules 
\(\psi_{r_{ij}, s_{ij}}\) with $i\in[c_1]$ and $j\in[c_2]$, each acting in $\Cl(n)$ with $2^n \le \min(d_{\text{in}}, d_{\text{out}}) \triangleq d$. The total number of learnable parameters is upper bounded by
\[
    2 c_1 c_2 \binom{n}{2} = \mathcal{O}\left(\log^2 d\right).
\]
\end{theorem}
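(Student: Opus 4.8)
The plan is a direct parameter-counting argument: enumerate every trainable quantity introduced by the construction of Section~\ref{subsc:genRotorGadget} and show the total collapses to $2c_1c_2\binom{n}{2}$.

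First I would isolate the sources of learnable parameters in $\psi$ as defined in \eqref{eq:rotor}. The map is assembled from $c_1c_2$ rotor-sandwich modules $\psi_{r_{ij},s_{ij}}$, one for each $(i,j)\in[c_1]\times[c_2]$; the coordinate subsets $I_i\subseteq[d_{\text{in}}]$ and $O_j\subseteq[d_{\text{out}}]$ are fixed in advance (a design choice satisfying $\cup_i I_i=[d_{\text{in}}]$, $\cup_j O_j=[d_{\text{out}}]$, not optimized), and the pooling operator $\sigma$ is parameter-free. Hence all trainable parameters reside in the modules themselves. For a single module $\psi_{r,s}(x)=rxs^\dagger$ the only free data are the two rotors $r,s\in\Spin(n)$, and each rotor is fully determined by a bivector in $\Cl^2(n)$: by the discussion around \eqref{eq:bad_exp} every rotor equals $\clexp(b)$ for some $b\in\Cl^2(n)$, and the closed-form pipeline of Section~\ref{sec:compute-rotors} (invariant decomposition into simple bivectors via Alg.~\ref{algo:inv_decomp}–\ref{algo:ga_poweriter}, then the closed form \eqref{eq:good_exp}) realizes this map from $b$ without introducing additional trainable state --- the vectors $v$ in Alg.~\ref{algo:ga_poweriter} are warm-started iterates and $\epsilon$ is a fixed hyperparameter. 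So module $(i,j)$ contributes at most $2\dim\Cl^2(n)$ parameters through its pair $a_{ij},b_{ij}\in\Cl^2(n)$.

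Next I would invoke the basis count from the Preliminaries: $\Cl^2(n)$ is spanned by the $\binom{n}{2}$ basis bivectors $e_i\wedge e_j$, $i<j$, so $\dim\Cl^2(n)=\binom{n}{2}$. Summing over the $c_1c_2$ modules gives the bound $2c_1c_2\binom{n}{2}$. For the asymptotic claim, the constraint $2^n\le\min(d_{\text{in}},d_{\text{out}})=d$ yields $n\le\log_2 d$, hence $\binom{n}{2}\le n^2/2\le\tfrac12\log_2^2 d$; treating $c_1,c_2$ as width hyperparameters independent of $d$ gives $2c_1c_2\binom{n}{2}=\mathcal{O}(\log^2 d)$, as claimed.

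I do not expect a serious obstacle; the statement is essentially bookkeeping. The one point that needs to be stated carefully --- and the part I would make explicit --- is the claim that the bivector-to-rotor map of Section~\ref{sec:compute-rotors} is the \emph{only} thing mediating between parameters and the realized linear layer, i.e.\ that neither the DEQ-style fixed-point solve, the contraction-based power iteration, nor the pooling and coordinate-assignment bookkeeping secretly introduces degrees of freedom that must be learned. Making this precise amounts to observing that each such component is either a fixed hyperparameter or a deterministic function of $(a_{ij},b_{ij})$ and the input, which I would spell out in a sentence or two before concluding with the summation above.
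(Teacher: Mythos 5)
Your proposal is correct and follows essentially the same route as the paper's own proof: count $\binom{n}{2}$ bivector coefficients per rotor, two rotors per module, $c_1c_2$ modules, and use $2^n\le d$ for the $\mathcal{O}(\log^2 d)$ bound. Your extra care in confirming that the pooling, coordinate subsets, and the bivector-to-rotor pipeline contribute no hidden learnable parameters is a reasonable elaboration of what the paper leaves implicit, but it does not change the argument.
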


Thm. \ref{thm:global-map} shows that rotor maps use only \( \mathcal{O}\left(\log^2 \min(d_{\text{in}}, d_{\text{out}})\right) \) parameters. In comparison, a standard dense layer and rank-$r$ factorization require \( \mathcal{O}(d_{\text{in}} d_{\text{out}}) \) and  \( \mathcal{O}(r(d_{\text{in}} + d_{\text{out}})) \), respectively.

\section{Experiments}
\label{sec:experiment}

\paragraph{Goals.}
We empirically evaluate rotors by \textit{replacing key, query, and value linear layers in pre-trained LLMs} and measuring downstream performance on perplexity (PPL) and accuracy. Our experiments span multiple models and datasets. The main goals are to:
\begin{inparaenum}[\bfseries (G-1)]
    \item Demonstrate the \textit{feasibility of composing linear layers from bivector primitives} by assessing whether rotors match baseline performance of Low-Rank and Block-Hadamard approximations across diverse settings.
    \item Quantify rotor parameter efficiency compared to dense and approximate alternatives.
    \item  Analyze how rotor architectural choices---such as width and depth---affect performance.
\end{inparaenum}

We focus on linear layers in smaller pre-trained language models (up to 1.5B parameters), where reduced redundancy compared to larger models makes preserving performance harder. We do not attempt full model conversion---which would require larger calibration datasets and a layerwise optimization scheme like GPTQ \citep{frantar2022gptq}. Instead, we selectively replace 1-3 attention layers (key, query, and value projections) to isolate the effect, and assess whether rotor decomposition offers competitive PPL and accuracy relative to baselines. 

\paragraph{End-to-end training with rotors.}
While the main LLM experiments train rotor layers to \textit{mimic individual dense layers in isolation} (see Sec.~\ref{sec:exp_setup}), we also assess their behavior when used throughout a network trained jointly from scratch. To keep the setup lightweight, we replace \textit{all dense layers} in a simple MLP (except the classification head) with rotor layers and train the model \textit{end-to-end} on \texttt{FMNIST} \citep{xiao2017fashionmnistnovelimagedataset} under identical conditions as the dense baseline. Details and results are provided in Sec.~\ref{sec:fmnist-results}, with experimental settings in Appendix~\ref{apdx:hyper-parameters}. With this experiment, we aim to complement the layerwise LLM analysis by testing full-network training with rotor layers.

\subsection{Experimental Setup}\label{sec:exp_setup}
\paragraph{Substituting attention layers.}
Given $x \in \mathbb{R}^d$ to a transformer block, we have:
\[
    \text{Attn}(x) = \left[\text{softmax}\left(\text{mask}\left(\frac{QK^\top}{\sqrt{d}}\right)\right)V\right]W_o,
\]
where the query, key, and value linear projections are defined as $Q = W_q x$, $K = W_k x$, and $V = W_v x$, with $W_q$, $W_k$, and $W_v$ as dense learnable matrices. We \textit{jointly} replace these linear layers in $1$–$3$ selected attention layers with rotors or baseline approximations, \textit{keeping other parameters fixed} except, for consistency, retraining the corresponding output linear layer $W_o$ after each substitution.

\begin{wraptable}{r}{0.65\textwidth}
\scriptsize
\centering
\setlength{\tabcolsep}{1.4pt}
\renewcommand{\arraystretch}{0.9}
\begin{tabular}{
>{\centering\arraybackslash}m{0.3cm}
>{\centering\arraybackslash}m{1.1cm}
>{\centering\arraybackslash}m{1.0cm}
*{3}{>{\centering\arraybackslash}m{0.85cm}} @{\hspace{2pt}}
*{3}{>{\centering\arraybackslash}m{0.85cm}} 
}
\toprule
& \textbf{Dataset} & \textbf{Method} 
& \multicolumn{3}{c}{\textbf{LLaMa-3.2 1B}} 
& \multicolumn{3}{c}{\textbf{Qwen-2.5 1.5B}} \\
& & 
& one & two & three 
& one & two & three  \\
\midrule
\multirow{15}{*}{\makecell{\rotatebox{90}{Log-PPL}}}
& \multirow{5}{*}{\makecell{\rotatebox{90}{Wikitext2}}}   
& Original & \multicolumn{3}{c}{----------- 2.575 -----------} & \multicolumn{3}{c}{----------- 2.287 -----------} \\
& & LR1    & 2.688 & 3.455 & 4.956 & 2.350 & 2.402 & 2.591 \\
& & LR4    & 2.658 & 2.729 & 2.880 & 2.342 & \textcolor{second_best}{2.372} & \textcolor{second_best}{2.548} \\
& & BH1    & \textcolor{second_best}{2.636} & \textcolor{best}{2.700} & \textcolor{best}{2.779} & \textcolor{second_best}{2.323} & 2.388 & 2.558 \\
& & \cellcolor{gray!30}Rotor & \cellcolor{gray!30} \textcolor{best}{2.629} & \cellcolor{gray!30} \textcolor{second_best}{2.717} & \cellcolor{gray!30} \textcolor{second_best}{2.818} 
& \cellcolor{gray!30}\textcolor{best}{2.307} & \cellcolor{gray!30}\textcolor{best}{2.369} & \cellcolor{gray!30}\textcolor{best}{2.515} \\
& \multirow{5}{*}{\makecell{\rotatebox{90}{C4}}}
& Original & \multicolumn{3}{c}{----------- 3.151 -----------} & \multicolumn{3}{c}{----------- 2.834 -----------}\\
& & LR1    & 3.414 & 4.071 & 5.001 & 2.884 & 2.910 & 2.985 \\
& & LR4    & 3.390 & 3.315 & 3.504 & 2.874 & 2.905 & 2.980 \\
& & BH1    & \textcolor{second_best}{3.343} & \textcolor{best}{3.262} & \textcolor{best}{3.404} & \textcolor{second_best}{2.865} & \textcolor{best}{2.897} & \textcolor{best}{2.975} \\
& & \cellcolor{gray!30}Rotor & \cellcolor{gray!30}\textcolor{best}{3.261} & \cellcolor{gray!30}\textcolor{second_best}{3.285} & \cellcolor{gray!30}\textcolor{second_best}{3.428} 
& \cellcolor{gray!30}\textcolor{best}{2.854} & \cellcolor{gray!30}\textcolor{second_best}{2.900} & \cellcolor{gray!30}\textcolor{second_best}{2.977} 
\\
& \multirow{5}{*}{\makecell{\rotatebox{90}{PTB}}}
& Original & \multicolumn{3}{c}{----------- 3.260 -----------} & \multicolumn{3}{c}{----------- 2.985 -----------} \\
& & LR1    & 3.358 & 4.684 & 6.904 & 3.046 & 3.151 & 3.225 \\
& & LR4    & \textcolor{second_best}{3.316} & 3.400 & 3.466 & 3.034 & 3.127 & \textcolor{second_best}{3.192} \\
& & BH1    & \textcolor{best}{3.293} & \textcolor{best}{3.355} & \textcolor{best}{3.395} & \textcolor{second_best}{3.025} & \textcolor{best}{3.101} & \textcolor{best}{3.168} \\
& & \cellcolor{gray!30}Rotor & \cellcolor{gray!30}3.327 & \cellcolor{gray!30}\textcolor{second_best}{3.392} & \cellcolor{gray!30}\textcolor{second_best}{3.442} 
& \cellcolor{gray!30}\textcolor{best}{3.011} & \cellcolor{gray!30}\textcolor{second_best}{3.109} & \cellcolor{gray!30}3.202 
\\
\midrule
\multirow{10}{*}{\makecell{\rotatebox{90}{Accuracy (\%)}}}
& \multirow{5}{*}{\makecell{\rotatebox{90}{\makecell{Arc \\ Challenge}}}} 
& Original & \multicolumn{3}{c}{----------- 58.37 -----------} & \multicolumn{3}{c}{----------- 66.09 -----------} \\
& & LR1    & 50.78 & 50.44 & 44.26 & 55.06 & 50.97 & 44.55 \\
& & LR4    & 53.84 & 53.39 & 45.95 & 57.48 & \textcolor{best}{54.51} & \textcolor{best}{60.77} \\
& & BH1    & \textcolor{second_best}{54.83} & \textcolor{second_best}{54.25} & \textcolor{second_best}{49.61} & \textcolor{second_best}{60.11} & 49.27 & \textcolor{second_best}{60.68} \\
& & \cellcolor{gray!30}Rotor & \cellcolor{gray!30}\textcolor{best}{55.31} & \cellcolor{gray!30}\textcolor{best}{54.50} & \cellcolor{gray!30}\textcolor{best}{49.64} 
& \cellcolor{gray!30}\textcolor{best}{61.34} & \cellcolor{gray!30}\textcolor{second_best}{52.27} & \cellcolor{gray!30}47.28 
\\
& \multirow{5}{*}{\makecell{\rotatebox{90}{Hellaswag}}} 
& Original & \multicolumn{3}{c}{----------- 41.00 -----------} & \multicolumn{3}{c}{----------- 55.00 -----------} \\
& & LR1    & 36.17 & 28.93 & 14.47 & 42.53 & 32.33 & \textcolor{best}{13.93} \\
& & LR4    & 38.02 & 33.79 & 33.87 & 44.41 & 40.53 & 11.27 \\
& & BH1    & \textcolor{second_best}{39.10} & \textcolor{best}{35.27} & \textcolor{second_best}{35.87} & \textcolor{second_best}{45.96} & \textcolor{best}{42.73} & \textcolor{second_best}{13.06} \\
& & \cellcolor{gray!30}Rotor & \cellcolor{gray!30}\textcolor{best}{39.33} & \cellcolor{gray!30}\textcolor{second_best}{34.94} & \cellcolor{gray!30}\textcolor{best}{37.52} 
& \cellcolor{gray!30}\textcolor{best}{50.20} & \cellcolor{gray!30}\textcolor{second_best}{40.60} & \cellcolor{gray!30}6.868 
\\
\bottomrule
\end{tabular}
\caption{\footnotesize{Log-PPL $(\downarrow)$ and accuracy $(\uparrow)$ using original, Low-Rank ($r=1$ or $4$), BH1, and Rotor (ours) for $1$–$3$ layer replacements. One-layer results are averaged over all layers; two/three-layer results are averaged over five random selections. Red indicates best, blue second-best per setting.}}
\label{tab:merged-performance-by-task}
\vspace{-30pt}
\end{wraptable}

\paragraph{Training protocol and architectural choices.}
To fit each substitute layer (rotor, LR, or BH), we extract hidden states from the pre-trained model and minimize MSE between the projected outputs of the original and approximated layers. Each variant is trained independently using the Adam optimizer \citep{kingma2017adammethodstochasticoptimization}. In our rotor architecture, \textit{depth} refers to the number of stacked rotor maps $\psi$, while \textit{width} denotes the number of parallel rotor maps within each layer. For example, the rotor map in Fig. \ref{fig:rotor_map} has both width and depth equal to $1$ (i.e., one rotor map). We also insert \textit{fixed permutations} between rotors to enable grade mixing and add \textit{normalization} layers to stabilize training; both are parameter-free. All architectural details and hyperparameters are provided in Appendix~\ref{apdx:hyper-parameters}.

\begin{table*}[t]
\scriptsize
\centering
\setlength{\tabcolsep}{1.8pt}
\renewcommand{\arraystretch}{0.95}
\begin{tabular}{
>{\centering\arraybackslash}m{1.5cm}
>{\centering\arraybackslash}m{1.0cm}
*{15}{>{\centering\arraybackslash}m{0.616cm}<{\fontsize{4}{2.5}\selectfont}}
}
\toprule
\textbf{Dataset} & \textbf{Method} & \multicolumn{15}{c}{\textbf{One layer replaced (Layer index)}} \\
& & \textbf{1} & \textbf{2} & \textbf{3} & \textbf{4} & \textbf{5} & \textbf{6} & \textbf{7}
  & \textbf{8} & \textbf{9} & \textbf{10} & \textbf{11} & \textbf{12} & \textbf{13} & \textbf{14} & \textbf{15} \\
\midrule
\multirow{4}{*}{Wikitext2 $(\downarrow)$}
& LR1   & 3.620 & 2.750 & 2.754 & 2.781 & 2.742 & 2.705 & 2.703 & 2.714 & 2.695 & 2.622 & 2.632 & 2.628 & 2.612 & 2.630 & 2.647 \\
& LR4   & 3.851 & 2.723 & 2.752 & 2.673 & 2.673 & 2.650 & 2.671 & 2.667 & 2.674 & 2.620 & 2.628 & 2.614 & 2.602 & 2.620 & 2.634 \\
& BH1   & 3.615 & 2.686 & 2.675 & 2.645 & 2.657 & 2.637 & 2.645 & 2.653 & 2.647 & 2.612 & 2.617 & 2.612 & 2.592 & 2.606 & 2.614 \\
& \cellcolor{gray!30}Rotor 
        & \cellcolor{gray!30}2.924 & \cellcolor{gray!30}2.665 & \cellcolor{gray!30}2.664
        & \cellcolor{gray!30}2.645 & \cellcolor{gray!30}2.664 & \cellcolor{gray!30}2.635 & \cellcolor{gray!30}2.642
        & \cellcolor{gray!30}2.640 & \cellcolor{gray!30}2.640 & \cellcolor{gray!30}2.607 & \cellcolor{gray!30}2.616
        & \cellcolor{gray!30}2.613 & \cellcolor{gray!30}2.593 & \cellcolor{gray!30}2.611 & \cellcolor{gray!30}2.566 \\
\noalign{\vskip 1.5pt}
\multirow{4}{*}{C4 $(\downarrow)$}
& LR1   & 4.433 & 3.297 & 3.440 & 3.274 & 3.300 & 3.276 & 3.309 & 3.282 & 3.292 & 3.205 & 3.284 & 3.271 & 3.192 & 3.236 & 3.214 \\
& LR4   & 4.432 & 3.292 & 3.404 & 3.250 & 3.266 & 3.250 & 3.264 & 3.260 & 3.281 & 3.203 & 3.204 & 3.191 & 3.187 & 3.219 & 3.208 \\
& BH1   & 4.293 & 3.288 & 3.276 & 3.212 & 3.230 & 3.215 & 3.241 & 3.230 & 3.276 & 3.196 & 3.193 & 3.184 & 3.174 & 3.194 & 3.194 \\
& \cellcolor{gray!30}Rotor 
        & \cellcolor{gray!30}3.660 & \cellcolor{gray!30}3.258 & \cellcolor{gray!30}3.292
        & \cellcolor{gray!30}3.232 & \cellcolor{gray!30}3.242 & \cellcolor{gray!30}3.228 & \cellcolor{gray!30}3.249
        & \cellcolor{gray!30}3.245 & \cellcolor{gray!30}3.248 & \cellcolor{gray!30}3.197 & \cellcolor{gray!30}3.196
        & \cellcolor{gray!30}3.187 & \cellcolor{gray!30}3.176 & \cellcolor{gray!30}3.202 & \cellcolor{gray!30}3.203 \\
\noalign{\vskip 1.5pt}
\multirow{4}{*}{PTB $(\downarrow)$}
& LR1   & 5.401 & 3.468 & 3.435 & 3.406 & 3.419 & 3.346 & 3.358 & 3.401 & 3.363 & 3.322 & 3.281 & 3.308 & 3.271 & 3.322 & 3.292 \\
& LR4   & 5.183 & 3.394 & 3.395 & 3.316 & 3.347 & 3.304 & 3.315 & 3.334 & 3.328 & 3.281 & 3.276 & 3.265 & 3.264 & 3.308 & 3.287 \\
& BH1   & 4.835 & 3.352 & 3.336 & 3.293 & 3.324 & 3.288 & 3.292 & 3.307 & 3.302 & 3.273 & 3.266 & 3.268 & 3.255 & 3.265 & 3.270 \\
& \cellcolor{gray!30}Rotor 
        & \cellcolor{gray!30}4.194 & \cellcolor{gray!30}3.412 & \cellcolor{gray!30}3.403
        & \cellcolor{gray!30}3.356 & \cellcolor{gray!30}3.369 & \cellcolor{gray!30}3.320 & \cellcolor{gray!30}3.338
        & \cellcolor{gray!30}3.336 & \cellcolor{gray!30}3.326 & \cellcolor{gray!30}3.278 & \cellcolor{gray!30}3.271
        & \cellcolor{gray!30}3.300 & \cellcolor{gray!30}3.266 & \cellcolor{gray!30}3.300 & \cellcolor{gray!30}3.284 \\
\midrule
\multirow{4}{*}{\makecell{Arc \\ Challenge $(\uparrow)$}}
& LR1   & 50.64 & 53.22 & 46.78 & 46.35 & 45.49 & 51.07 & 52.79 & 51.93 & 33.05 & 50.21 & 56.65 & 55.80 & 56.65 & 55.08 & 55.79 \\
& LR4   & 50.64 & 53.22 & 50.64 & 51.93 & 54.51 & 55.79 & 54.08 & 51.07 & 46.35 & 51.07 & 58.80 & 59.23 & 57.51 & 56.22 & 57.51 \\
& BH1   & 53.22 & 54.51 & 52.79 & 54.94 & 57.08 & 54.08 & 53.65 & 52.36 & 51.93 & 50.21 & 57.94 & 57.94 & 57.08 & 58.80 & 57.51 \\
& \cellcolor{gray!30}Rotor 
        & \cellcolor{gray!30}54.51 & \cellcolor{gray!30}55.36 & \cellcolor{gray!30}53.65
        & \cellcolor{gray!30}55.36 & \cellcolor{gray!30}54.51 & \cellcolor{gray!30}55.79 & \cellcolor{gray!30}54.51
        & \cellcolor{gray!30}53.22 & \cellcolor{gray!30}52.36 & \cellcolor{gray!30}50.64 & \cellcolor{gray!30}58.37
        & \cellcolor{gray!30}60.09 & \cellcolor{gray!30}56.65 & \cellcolor{gray!30}57.94 & \cellcolor{gray!30}57.08 \\
\noalign{\vskip 1.5pt}
\multirow{4}{*}{HellaSwag $(\uparrow)$}
& LR1   & 29.00 & 32.00 & 34.33 & 39.67 & 34.00 & 33.00 & 34.00 & 33.00 & 34.67 & 37.00 & 41.00 & 38.33 & 40.33 & 40.00 & 39.67 \\
& LR4   & 32.67 & 39.67 & 34.67 & 39.00 & 37.67 & 37.00 & 35.00 & 38.67 & 33.67 & 37.67 & 41.33 & 40.00 & 40.33 & 40.33 & 40.00 \\
& BH1   & 37.33 & 37.67 & 37.67 & 41.00 & 37.00 & 40.00 & 40.33 & 37.00 & 36.33 & 36.33 & 42.67 & 40.00 & 41.33 & 42.00 & 41.33 \\
& \cellcolor{gray!30}Rotor 
        & \cellcolor{gray!30}40.00 & \cellcolor{gray!30}39.67 & \cellcolor{gray!30}35.67
        & \cellcolor{gray!30}42.33 & \cellcolor{gray!30}41.00 & \cellcolor{gray!30}38.67 & \cellcolor{gray!30}36.67
        & \cellcolor{gray!30}38.67 & \cellcolor{gray!30}36.00 & \cellcolor{gray!30}36.67 & \cellcolor{gray!30}42.00
        & \cellcolor{gray!30}39.67 & \cellcolor{gray!30}40.33 & \cellcolor{gray!30}42.00 & \cellcolor{gray!30}40.67 \\
\bottomrule
\end{tabular}

\vspace{0.3cm}

\begin{tabular}{
>{\centering\arraybackslash}m{1.5cm}
>{\centering\arraybackslash}m{1.0cm}
*{15}{>{\centering\arraybackslash}m{0.618cm}<{\fontsize{4}{2.5}\selectfont}}
}
\toprule
\textbf{Dataset} & \textbf{Method} & \multicolumn{15}{c}{\textbf{Two layers replaced (Layer pairs)}} \\
& &
\textbf{10,11} & \textbf{10,12} & \textbf{10,13} & \textbf{10,14} & \textbf{10,15} &
\textbf{11,12} & \textbf{11,13} & \textbf{11,14} & \textbf{11,15} &
\textbf{12,13} & \textbf{12,14} & \textbf{12,15} &
\textbf{13,14} & \textbf{13,15} & \textbf{14,15} \\
\midrule
\multirow{4}{*}{Wikitext2 $(\downarrow)$}
& LR1   & 2.724 & 2.702 & 2.695 & 2.697 & 2.718 & 2.757 & 2.731 & 2.756 & 2.757 & 2.768 & 2.723 & 2.741 & 2.729 & 2.863 & 2.862 \\
& LR4   & 2.703 & 2.674 & 2.662 & 2.669 & 2.694 & 2.691 & 2.662 & 2.669 & 2.694 & 2.660 & 2.665 & 2.676 & 2.686 & 2.708 & 2.758 \\
& BH1   & 2.677 & 2.660 & 2.645 & 2.650 & 2.656 & 2.670 & 2.643 & 2.662 & 2.660 & 2.645 & 2.656 & 2.657 & 2.655 & 2.680 & 2.700 \\
& \cellcolor{gray!30}Rotor 
        & \cellcolor{gray!30}2.679 & \cellcolor{gray!30}2.670 & \cellcolor{gray!30}2.654 & \cellcolor{gray!30}2.660 & \cellcolor{gray!30}2.676 
        & \cellcolor{gray!30}2.662 & \cellcolor{gray!30}2.652 & \cellcolor{gray!30}2.667 & \cellcolor{gray!30}2.677 
        & \cellcolor{gray!30}2.667 & \cellcolor{gray!30}2.667 & \cellcolor{gray!30}2.675 & \cellcolor{gray!30}2.662 
        & \cellcolor{gray!30}2.688 & \cellcolor{gray!30}2.745 \\
\noalign{\vskip 1.5pt}
\multirow{4}{*}{C4 $(\downarrow)$}
& LR1   & 3.298 & 3.273 & 3.258 & 3.296 & 3.293 & 3.315 & 3.276 & 3.315 & 3.303 & 3.263 & 3.300 & 3.281 & 3.285 & 3.278 & 3.386 \\
& LR4   & 3.290 & 3.258 & 3.249 & 3.279 & 3.269 & 3.265 & 3.255 & 3.287 & 3.271 & 3.241 & 3.278 & 3.246 & 3.281 & 3.268 & 3.353 \\
& BH1   & 3.281 & 3.241 & 3.228 & 3.246 & 3.244 & 3.242 & 3.227 & 3.246 & 3.244 & 3.216 & 3.237 & 3.226 & 3.234 & 3.232 & 3.296 \\
& \cellcolor{gray!30}Rotor
        & \cellcolor{gray!30}3.267 & \cellcolor{gray!30}3.246 & \cellcolor{gray!30}3.231 & \cellcolor{gray!30}3.256 & \cellcolor{gray!30}3.254
        & \cellcolor{gray!30}3.248 & \cellcolor{gray!30}3.232 & \cellcolor{gray!30}3.260 & \cellcolor{gray!30}3.257
        & \cellcolor{gray!30}3.223 & \cellcolor{gray!30}3.252 & \cellcolor{gray!30}3.241
        & \cellcolor{gray!30}3.249 & \cellcolor{gray!30}3.247 & \cellcolor{gray!30}3.327 \\
\noalign{\vskip 1.5pt}
\multirow{4}{*}{PTB $(\downarrow)$}
& LR1   & 3.385 & 3.386 & 3.367 & 3.410 & 3.402 & 3.418 & 3.370 & 3.418 & 3.367 & 3.486 & 3.423 & 3.371 & 3.431 & 3.455 & 3.619 \\
& LR4   & 3.345 & 3.319 & 3.320 & 3.348 & 3.336 & 3.326 & 3.315 & 3.353 & 3.332 & 3.300 & 3.317 & 3.313 & 3.367 & 3.335 & 3.419 \\
& BH1   & 3.315 & 3.307 & 3.297 & 3.304 & 3.310 & 3.309 & 3.287 & 3.332 & 3.301 & 3.281 & 3.293 & 3.298 & 3.294 & 3.300 & 3.345 \\
& \cellcolor{gray!30}Rotor
        & \cellcolor{gray!30}3.336 & \cellcolor{gray!30}3.343 & \cellcolor{gray!30}3.311 & \cellcolor{gray!30}3.349 & \cellcolor{gray!30}3.334
        & \cellcolor{gray!30}3.339 & \cellcolor{gray!30}3.303 & \cellcolor{gray!30}3.342 & \cellcolor{gray!30}3.299
        & \cellcolor{gray!30}3.311 & \cellcolor{gray!30}3.359 & \cellcolor{gray!30}3.336
        & \cellcolor{gray!30}3.352 & \cellcolor{gray!30}3.342 & \cellcolor{gray!30}3.441 \\
\midrule
\multirow{4}{*}{\makecell{Arc \\ Challenge $(\uparrow)$}}
& LR1   & 43.06 & 46.07 & 46.50 & 48.21 & 43.92 & 46.93 & 55.51 & 57.23 & 55.94 & 52.94 & 54.65 & 54.65 & 54.22 & 55.51 & 54.65 \\
& LR4   & 42.92 & 48.50 & 48.07 & 51.07 & 49.79 & 59.23 & 56.65 & 59.23 & 57.94 & 58.80 & 56.65 & 57.51 & 56.22 & 55.36 & 57.94 \\
& BH1   & 43.35 & 48.50 & 51.07 & 50.21 & 50.64 & 59.66 & 58.37 & 58.80 & 56.65 & 57.94 & 59.23 & 57.51 & 57.94 & 55.65 &  58.37 \\
& \cellcolor{gray!30}Rotor 
        & \cellcolor{gray!30}45.49 & \cellcolor{gray!30}48.93 & \cellcolor{gray!30}53.65 & \cellcolor{gray!30}52.79 & \cellcolor{gray!30}51.50
        & \cellcolor{gray!30}57.94 & \cellcolor{gray!30}57.94 & 
        \cellcolor{gray!30}58.80 &
        \cellcolor{gray!30}56.65 & \cellcolor{gray!30}59.66
        & \cellcolor{gray!30}56.65 & \cellcolor{gray!30}57.08 & \cellcolor{gray!30}57.03 & \cellcolor{gray!30}55.79
        & \cellcolor{gray!30}57.05 \\
\noalign{\vskip 1.5pt}
\multirow{4}{*}{HellaSwag $(\uparrow)$}
& LR1   & 32.00 & 35.00 & 36.00 & 40.33 & 34.00 & 37.67 & 41.67 & 39.33 & 41.33 & 34.00 & 40.33 & 39.33 & 40.33 & 43.00 & 40.67 \\
& LR4   & 32.33 & 35.33 & 38.33 & 38.33 & 36.67 & 39.00 & 41.67 & 41.00 & 42.33 & 40.00 & 38.00 & 39.00 & 40.00 & 39.00 & 37.67 \\
& BH1   & 37.33 & 37.67 & 35.67 & 40.33 & 39.33 & 39.33 & 41.00 & 41.67 & 41.33 & 40.33 & 43.67 & 40.67 & 42.67 & 40.67 & 40.67 \\
& \cellcolor{gray!30}Rotor 
        & \cellcolor{gray!30}36.00 & \cellcolor{gray!30}37.67 & \cellcolor{gray!30}38.67 & \cellcolor{gray!30}38.67 & \cellcolor{gray!30}37.67 
        & \cellcolor{gray!30}39.33 & \cellcolor{gray!30}41.67 & \cellcolor{gray!30}41.33 & \cellcolor{gray!30}41.33
        & \cellcolor{gray!30}39.33 & \cellcolor{gray!30}41.67 & \cellcolor{gray!30}39.00 
        & \cellcolor{gray!30}41.00 & \cellcolor{gray!30}40.33 & \cellcolor{gray!30}40.33 \\
\bottomrule
\end{tabular}
\caption{\small{Performance on log-PPL (↓) and accuracy (↑) when replacing \textbf{one attention layer (top)} for layer indices $1$–$15$ and \textbf{two attention layers (bottom)} for pairs of indices from $10$–$15$ of \texttt{LLaMa-3.2 1B}. Methods are Low-Rank ($r=1$ and $4$), BH1, and Rotor.}}
\label{tab:two-layer-replace}
\end{table*}

\vspace{-1pt}

\paragraph{Models, datasets, and baselines.}
We evaluate on two pre-trained LLMs: \texttt{LLaMa-3.2 1B} \citep{touvron2023llamaopenefficientfoundation} and \texttt{Qwen-2.5 1.5B} \citep{qwen2025qwen25technicalreport}. Metrics include log perplexity $(\downarrow)$ on three language modeling datasets---\texttt{Wikitext2}, \texttt{C4} \citep{dodge2021documentinglargewebtextcorpora}, and \texttt{PTB} \citep{marcus1993building}---and accuracy $(\uparrow)$ on two multiple-choice benchmarks---\texttt{Arc Challenge} \citep{clark2018thinksolvedquestionanswering} and \texttt{HellaSwag} \citep{zellers2019hellaswagmachinereallyfinish}. We compare rotors to:
\begin{inparaenum}[\bfseries (a)]
    \item \textit{LR1 and LR4}: Low-rank projections with rank $r = 1$ or $4$, where a dense matrix $W \in \mathbb{R}^{d_{\text{out}} \times d_{\text{in}}}$ is approximated as $XY$, with $X \in \mathbb{R}^{d_{\text{out}} \times r}$ and $Y \in \mathbb{R}^{r \times d_{\text{in}}}$.
    \item \textit{BH1}: Block-Hadamard \citep{zeng2023lookupffn} of depth $1$, which approximate $W$ by $BH$, where $H$ is a fixed Hadamard matrix and $B$ is block-diagonal and learnable.
\end{inparaenum}
Parameter counts for all the methods are detailed in Tab.~\ref{tab:proj-param-counts}; reference LLM performance (with dense matrices) are shown in Tab.~\ref{tab:merged-performance-by-task}. Additional experiments are provided in Appendix~\ref{apdx:additional_exp}.

\subsection{Results and Discussion}

\paragraph{Parameter efficiency.}
As shown in Tab.~\ref{tab:proj-param-counts}, rotors require significantly fewer parameters than both dense and approximate baselines. For example, in \texttt{LLaMa-3.2 1B}, the query projection uses over $4.19$M parameters in its dense form, compared to $16.4$K in LR4, $32.7$K in BH1, and just \(\leq 896\) in rotors---\textit{$4700\times$ reduction} over dense and $18\times$ over LR4. Similar savings apply across key and value layers, directly supporting \textbf{G-2}.

{\paragraph{Compute cost.}
Rotor layers are currently slower in wall-clock runtime. Depending on rotor width $(w)$ and depth $(d)$, inference is roughly $w\times d$ times slower than dense layers in our experiments. In terms of FLOPs, rotor layers require $35.8$M $\times\,(w d)$ and $82.4$M $\times \, (w d)$ on \texttt{LLaMa-3.2 1B} and \texttt{LLaMa-3.2 3B}, compared to $206.1$M and $515.3$M for dense layers—owing to their block-diagonal structure. These reductions arise naturally from the geometric formulation rather than from explicit sparsity constraints. During training, gradient storage and backpropagation through the decomposition (Fig.~\ref{fig:process}) add minor additional cost. Optimized kernels and hardware-aware implementations could exploit this structure much better, as well as offer memory-bandwidth benefits discussed in Sec~\ref{sec:conclusion_furture_work}.

\paragraph{Rotor performance vs. baselines.}
Despite their compact size, rotors match or outperform LR1, LR4, and BH1 across most datasets and models. In Tab.~\ref{tab:merged-performance-by-task}, on \texttt{Wikitext2}, rotors achieve $2.629$ log-PPL in \texttt{LLaMa-3.2 1B} (vs.\ $2.636$ for second-best BH1), and $2.307$ in \texttt{Qwen-2.5 1.5B} (vs.\ $2.323$ for second-best BH1) when a single attention layer is replaced. On \texttt{Arc Challenge}, rotor layers yield $55.31$\% accuracy in \texttt{LLaMa-3.2 1B} (vs.\ $54.83$\% for BH1) and $61.34$\% in \texttt{Qwen-2.5 1.5B} (vs.\ $57.48$\% for LR4). Rotor projections are consistently either the best or second-best across most settings.

Tab.~\ref{tab:two-layer-replace} confirms rotor robustness across layer combinations. For example, replacing layers $12$ and $13$ in \texttt{Qwen-2.5 1.5B} yields $59.66$\% accuracy on \texttt{Arc Challenge}, outperforming LR4 ($58.80$\%) and BH1 ($57.94$\%). In contrast, LR1, despite having $2$--$5\times$ more parameters than rotors, shows significant performance drops (e.g., log-PPL on \texttt{Wikitext2} for \texttt{LLaMa-3.2 1B}). These results confirm \textbf{G-1} and show that rotors match  downstream performance of other baselines only using far fewer parameters.

\begin{wrapfigure}[13]{r}{0.48\textwidth}
\vspace{-20pt}
\begin{center}
\scalebox{0.9}{
\begin{tikzpicture}
  \begin{axis}[
    width=0.5\textwidth,
    height=0.28\textwidth,
    xlabel={Depth},
    ylabel={Perplexity},
    xmin=0.8, xmax=5.2,
    ymin=9.7, ymax=12.3,
    xtick={1,2,3,4,5},
    ytick={10,10.5,11,11.5,12},
    grid=both,
    grid style={line width=0.3pt, black, dotted},
    tick style={line width=0.3pt, gray},
    tick label style={font=\footnotesize},
    label style={font=\footnotesize},
    axis lines=box,
    axis line style={line width=0.3pt, gray},
    mark size=2pt,
    legend style={
      at={(0.5,1.03)},
      anchor=south,
      font=\tiny,
      draw=none,
      fill=none,
      legend columns=6,
      column sep=2pt,
    },
    legend image post style={scale=1.1},
  ]

    \addplot[dashed, black, line width=0.6pt, forget plot] coordinates {(0.8,9.845) (5.2,9.845)};

    \addlegendimage{empty legend}
    \addlegendentry{\footnotesize{Width}}
    \addplot[blue, mark=*, line width=1pt, opacity=0.6] coordinates {(1,11.98) (2,10.98) (3,11.02) (4,11.03) (5,10.66)}; \addlegendentry{1}
    \addplot[red, mark=square*, line width=1pt, opacity=0.6] coordinates {(1,11.97) (2,10.69) (3,10.45) (4,10.29) (5,10.23)}; \addlegendentry{2}
    \addplot[teal, mark=triangle*, line width=1pt, opacity=0.6] coordinates {(1,11.96) (2,10.69) (3,10.20) (4,10.18) (5,10.15)}; \addlegendentry{3}
    \addplot[purple, mark=diamond*, line width=1pt, opacity=0.6] coordinates {(1,11.94) (2,10.45) (3,10.23) (4,10.17) (5,10.13)}; \addlegendentry{4}
    \addplot[orange, mark=star, line width=1pt, opacity=0.6] coordinates {(1,11.95) (2,10.39) (3,10.17) (4,10.15) (5,10.13)}; \addlegendentry{5}
  \end{axis}
\end{tikzpicture}
}
\label{fig:width_depth}
\end{center}
\vspace{-18pt}
\captionof{figure}{\small{Effect of \textbf{rotor width and depth}. Replacing Layer-13 in \texttt{Qwen-2.5 1.5B} with rotors of varying depth and width. The dashed line (9.845) indicates convergence to the base model’s perplexity.}}
\label{fig:width_depth}
\vspace{40pt}
\end{wrapfigure}

In Fig. \ref{fig:width_depth}, PPL consistently decreases with both increasing rotor \textit{width} and \textit{depth}. The strongest improvements occur from depth $1$ to $2$, after which gains taper off. This trend suggests that both stacking more layers (depth) and using more parallel rotor maps (width) contribute to better approximation of linear layers---showing \textbf{G-3}.

\begin{table}[t]
    \centering
    \setlength{\tabcolsep}{3.5pt}
    \renewcommand{\arraystretch}{0.95}
    \begin{tabular}{lcccccccccc}
    \toprule
    \#Epoch & 1 & 2 & 3 & 4 & 5 & 6 & 7 & 8 & 9 & 10 \\
    \midrule
    Dense & 85.05 & 86.23 & 87.17 & 88.07 & 87.90 & 88.54 & 89.07 & 89.36 & 89.53 & \textbf{89.67} \\
    Rotor & 80.80 & 82.05 & 82.60 & 82.38 & 86.14 & 86.75 & 86.74 & 86.52 & 87.94 & \textbf{88.36} \\
    \bottomrule
    \end{tabular}
    \vspace{4pt}
    \caption{Accuracy (\%) on \texttt{FMNIST} over 10 epochs for dense and rotor-based MLPs.}
    \label{tab:fmnist-acc}
\end{table}

\paragraph{Results of end-to-end training on \texttt{FMNIST}.} \label{sec:fmnist-results}
As shown in Tab.~\ref{tab:fmnist-acc}, the dense network improves faster in early epochs, while the rotor-based model learns more gradually but reaches a comparable final accuracy ($89.67$\% vs.\ $88.36$\%). This suggests that end-to-end optimization with rotors is slightly slower---potentially due to the high compression---but ultimately competitive with dense layers.

Overall, the LLM results support our claim: rotor layers---constructed from bivector primitives---offer insight into how linear layers can be synthesized from a compact set of building blocks. Moreover, the \texttt{FMNIST} experiment shows that rotor layers are feasible when trained jointly with the rest of the network end-to-end from scratch. Our full codebase, including datasets and hyperparameters for all experiments, is available at \url{https://github.com/vsingh-group/ComposingLinearLayers}. Key architectural and training details are also provided in Appendix~\ref{apdx:hyper-parameters}.


\section{Related Work}
\label{sec:related_work}

\paragraph{Compositions in machine learning.}
Building complex model behavior from composition of simpler compute primitives is an active research topic. Capsule networks \citep{sabour2017dynamicroutingcapsules} modeled part-whole relations, while mixture-of-experts \citep{riquelme2021scaling} and model merging \citep{lahner2024direct} compose specialized sub-modules in a conditional manner. Recent studies show how internal circuits in LLMs organize around functional units \citep{weiss2021thinkingliketransformers}. In NLP, 
Tree-structured models \citep{tai2015improved} have been used to encode syntactic composition, while works on compositional semantics \citep{mitchell2008vector} investigate how word meanings compose to form sentence meanings. 
The ability to understand new combinations of familiar components remains a challenge \citep{zhou2024compositional,lake2018generalization} and a few strategies are being studied \citep{li2022neural, chen2020compositional, chytas2024understanding}. Other directions include 
modular approaches \citep{das2018neural,pfeiffer2023modular}. 
%

\begin{wraptable}[11]{r}{0.4\textwidth}
\vspace{-15pt}
\centering
\footnotesize
\setlength{\tabcolsep}{2.5pt}
\renewcommand{\arraystretch}{1.1}
\begin{tabular}{lccc}
  \toprule
  \textbf{Method} & \textbf{Key} & \textbf{Query} & \textbf{Value} \\
  \midrule
  Dense & 1048576 & 4194304 & 1048576 \\
  LR1   & 2560 & 4096 & 2560 \\
  LR4   & 10240 & 16384 & 10240 \\
  BH1   & 8192 & 32768 & 8192 \\
  \cellcolor{gray!30}Rotor 
    & \cellcolor{gray!30}$\le 1080$ & \cellcolor{gray!30}$\le 896$ & \cellcolor{gray!30}$\le 1080$ \\
  \bottomrule
\end{tabular}
\caption{\small{Summary of \textbf{the number of parameters} used for key, query, and value projections in a single attention layer of \texttt{LLaMa-3.2 1B}.}}
\label{tab:proj-param-counts}
\vspace{10pt}
\end{wraptable}

\paragraph{Approximating linear layers.}
To reduce the cost of dense layers, various structured approximations are common. Low-rank factorization such as LoRA \citep{hu2021loralowrankadaptationlarge} constrains weights to rank-$r$ matrices. Other matrices such as circulant \citep{yu2014circulant,cheng2015exploration}, Toeplitz \citep{sindhwani2015structured}, Walsh-Hadamard \citep{alam2024walsh}, and Block-Hadamard \citep{zeng2023lookupffn}, among others, have been shown to be resource efficient.
Our goal is to understand the compositional structure of the linear transformation itself. Resource efficiency is the {\em result}, not the key motivation. 
\paragraph{Clifford/Geometric algebra in ML.}
Recent works have explored Clifford algebra for encoding geometric structure in ML models. GCANs \citep{ruhe2023geometriccliffordalgebranetworks} and CGENNs \citep{ruhe2023cliffordgroupequivariantneural} construct equivariant layers by combining GA-based transformations. Clifford neural layers \citep{brandstetter2023cliffordneurallayerspde} was applied to physical modeling tasks. GA has been connected to convex training of ReLU networks \citep{pilanci2024complexityclarityanalyticalexpressions} and randomized methods for  multivector representations \citep{wang2024randomizedgeometricalgebramethods} are available. GA has also been used for time-series models \citep{chen2025simpletm}. 
\section{Conclusions and Future Work}
\label{sec:conclusion_furture_work}
We show that the functionality of standard linear layers can be expressed with exponentially fewer parameters, $\mathcal{O}\left(\log^2 d\right)$ versus $\mathcal{O}\left(d^2\right)$, while maintaining competitive performance when applied to attention mechanisms in modern LLMs. In particular, our experiments on 1--1.5B parameter models demonstrate that rotor-based modules can effectively reproduce the behavior of dense linear layers when trained in isolation, while our end-to-end experiment on \texttt{FMNIST} shows their feasibility when trained jointly with the rest of the network from scratch. The underlying algebraic structure reveals a rich compositional hierarchy, where geometric primitives combine through rotor operations to form expressive yet compact transformations. This is achieved by mapping bivector parameters to their corresponding rotor operations. These insights open several promising directions, including the development of interpretable architectures and parameter-efficient models that leverage this compositional structure. Beyond that, our framework connects to statistical models involving interaction decomposition, such as ANOVA \citep{st1989analysis}, multivariate analysis \citep{mardia2024multivariate}, and mechanisms to approximate Shapley features \citep{chen2023harsanyinet}. For example, modeling high-order dependencies in statistical learning often suffers from exponential parameter growth. While ideas such as Tucker tensor decomposition \citep{li2018tucker} factorize the full coefficient tensor, our framework offers an alternative by mapping $k$ predictor variables to a multivector $X$ and modeling the response variable via a rotor transformation as $\hat{y} = \langle \exp(a) X \exp(b)^\dagger \rangle_0$ learned from $\mathcal{O}(k^2)$, where ($a, b$) are learned bivectors from our algorithms. This constructs interaction spaces through the composition of 2-way primitives.

\paragraph{Practical considerations.} Our current implementation remains computationally demanding, since it does not yet exploit the inherent sparsity or algebraic structure of rotor decompositions (see Appendix~\ref{apdx:limitations} for a discussion of limitations). In parallel, a promising direction is to leverage these ideas to mitigate memory bandwidth bottlenecks that dominate large-model inference. As noted in \citet{davies2025efficientllminferencebandwidth}, (i) memory capacity requirements for frontier LLMs exceed hundreds of gigabytes, and (ii) memory bandwidth, rather than compute, is the primary bottleneck for inference throughput. Our approach directly addresses (ii): instead of loading millions of dense-layer weights from memory, rotor layers can, in principle, synthesize them on-chip from a small set of geometric parameters, effectively trading compute for memory bandwidth. Future work will focus on developing sparse and hardware-aware Clifford kernels to realize these gains in practice and on extending the framework to full-model training and dynamic rotor composition for scalable deployment.

\newpage
\section*{Acknowledgments}
We thank the anonymous reviewers for their constructive feedback. We are very grateful to Prof.~Karthikeyan Sankaralingam for numerous valuable discussions and working out the link to \cite{davies2025efficientllminferencebandwidth}. TP, DY and VS were all partly supported by NIH R01AG092220.

\bibliographystyle{plainnat}
\bibliography{refs}

\newpage
\section*{NeurIPS Paper Checklist}

\begin{enumerate}

\item {\bf Claims}
    \item[] Question: Do the main claims made in the abstract and introduction accurately reflect the paper's contributions and scope?
    \item[] Answer: \answerYes{}
    \item[] Justification: The main claim made in the abstract is that we can express a linear layer in $\mathcal{O}(\log^2 d)$ parameters with algebraic primitives. The framework is detailed in Sections $\ref{sec:method}$ and $\ref{sec:how_we_do_it}$ with supporting experimental results in Section $\ref{sec:experiment}$ and Appendix D.
    \item[] Guidelines:
    \begin{itemize}
        \item The answer NA means that the abstract and introduction do not include the claims made in the paper.
        \item The abstract and/or introduction should clearly state the claims made, including the contributions made in the paper and important assumptions and limitations. A No or NA answer to this question will not be perceived well by the reviewers. 
        \item The claims made should match theoretical and experimental results, and reflect how much the results can be expected to generalize to other settings. 
        \item It is fine to include aspirational goals as motivation as long as it is clear that these goals are not attained by the paper. 
    \end{itemize}

\item {\bf Limitations}
    \item[] Question: Does the paper discuss the limitations of the work performed by the authors?
    \item[] Answer: \answerYes{}
    \item[] Justification: The limitations are discussed in Appendix A.
    \item[] Guidelines:
    \begin{itemize}
        \item The answer NA means that the paper has no limitation while the answer No means that the paper has limitations, but those are not discussed in the paper. 
        \item The authors are encouraged to create a separate "Limitations" section in their paper.
        \item The paper should point out any strong assumptions and how robust the results are to violations of these assumptions (e.g., independence assumptions, noiseless settings, model well-specification, asymptotic approximations only holding locally). The authors should reflect on how these assumptions might be violated in practice and what the implications would be.
        \item The authors should reflect on the scope of the claims made, e.g., if the approach was only tested on a few datasets or with a few runs. In general, empirical results often depend on implicit assumptions, which should be articulated.
        \item The authors should reflect on the factors that influence the performance of the approach. For example, a facial recognition algorithm may perform poorly when image resolution is low or images are taken in low lighting. Or a speech-to-text system might not be used reliably to provide closed captions for online lectures because it fails to handle technical jargon.
        \item The authors should discuss the computational efficiency of the proposed algorithms and how they scale with dataset size.
        \item If applicable, the authors should discuss possible limitations of their approach to address problems of privacy and fairness.
        \item While the authors might fear that complete honesty about limitations might be used by reviewers as grounds for rejection, a worse outcome might be that reviewers discover limitations that aren't acknowledged in the paper. The authors should use their best judgment and recognize that individual actions in favor of transparency play an important role in developing norms that preserve the integrity of the community. Reviewers will be specifically instructed to not penalize honesty concerning limitations.
    \end{itemize}

\item {\bf Theory assumptions and proofs}
    \item[] Question: For each theoretical result, does the paper provide the full set of assumptions and a complete (and correct) proof?
    \item[] Answer: \answerYes{}
    \item[] Justification: The proofs of Alg. $\ref{algo:inv_decomp}$ and $\ref{algo:ga_poweriter}$ are in Appendix B.
    \item[] Guidelines:
    \begin{itemize}
        \item The answer NA means that the paper does not include theoretical results. 
        \item All the theorems, formulas, and proofs in the paper should be numbered and cross-referenced.
        \item All assumptions should be clearly stated or referenced in the statement of any theorems.
        \item The proofs can either appear in the main paper or the supplemental material, but if they appear in the supplemental material, the authors are encouraged to provide a short proof sketch to provide intuition. 
        \item Inversely, any informal proof provided in the core of the paper should be complemented by formal proofs provided in appendix or supplemental material.
        \item Theorems and Lemmas that the proof relies upon should be properly referenced. 
    \end{itemize}

    \item {\bf Experimental result reproducibility}
    \item[] Question: Does the paper fully disclose all the information needed to reproduce the main experimental results of the paper to the extent that it affects the main claims and/or conclusions of the paper (regardless of whether the code and data are provided or not)?
    \item[] Answer: \answerYes{}
    \item[] Justification: We detail the experimental setup in Section $\ref{sec:experiment}$ and give hyperparameters and training details in Appendix C.
    \item[] Guidelines:
    \begin{itemize}
        \item The answer NA means that the paper does not include experiments.
        \item If the paper includes experiments, a No answer to this question will not be perceived well by the reviewers: Making the paper reproducible is important, regardless of whether the code and data are provided or not.
        \item If the contribution is a dataset and/or model, the authors should describe the steps taken to make their results reproducible or verifiable. 
        \item Depending on the contribution, reproducibility can be accomplished in various ways. For example, if the contribution is a novel architecture, describing the architecture fully might suffice, or if the contribution is a specific model and empirical evaluation, it may be necessary to either make it possible for others to replicate the model with the same dataset, or provide access to the model. In general. releasing code and data is often one good way to accomplish this, but reproducibility can also be provided via detailed instructions for how to replicate the results, access to a hosted model (e.g., in the case of a large language model), releasing of a model checkpoint, or other means that are appropriate to the research performed.
        \item While NeurIPS does not require releasing code, the conference does require all submissions to provide some reasonable avenue for reproducibility, which may depend on the nature of the contribution. For example
        \begin{enumerate}
            \item If the contribution is primarily a new algorithm, the paper should make it clear how to reproduce that algorithm.
            \item If the contribution is primarily a new model architecture, the paper should describe the architecture clearly and fully.
            \item If the contribution is a new model (e.g., a large language model), then there should either be a way to access this model for reproducing the results or a way to reproduce the model (e.g., with an open-source dataset or instructions for how to construct the dataset).
            \item We recognize that reproducibility may be tricky in some cases, in which case authors are welcome to describe the particular way they provide for reproducibility. In the case of closed-source models, it may be that access to the model is limited in some way (e.g., to registered users), but it should be possible for other researchers to have some path to reproducing or verifying the results.
        \end{enumerate}
    \end{itemize}

\item {\bf Open access to data and code}
    \item[] Question: Does the paper provide open access to the data and code, with sufficient instructions to faithfully reproduce the main experimental results, as described in supplemental material?
    \item[] Answer: \answerYes{}
    \item[] Justification: We include our codebase and instructions to run the code in supplementary material.
    \item[] Guidelines:
    \begin{itemize}
        \item The answer NA means that paper does not include experiments requiring code.
        \item Please see the NeurIPS code and data submission guidelines (\url{https://nips.cc/public/guides/CodeSubmissionPolicy}) for more details.
        \item While we encourage the release of code and data, we understand that this might not be possible, so “No” is an acceptable answer. Papers cannot be rejected simply for not including code, unless this is central to the contribution (e.g., for a new open-source benchmark).
        \item The instructions should contain the exact command and environment needed to run to reproduce the results. See the NeurIPS code and data submission guidelines (\url{https://nips.cc/public/guides/CodeSubmissionPolicy}) for more details.
        \item The authors should provide instructions on data access and preparation, including how to access the raw data, preprocessed data, intermediate data, and generated data, etc.
        \item The authors should provide scripts to reproduce all experimental results for the new proposed method and baselines. If only a subset of experiments are reproducible, they should state which ones are omitted from the script and why.
        \item At submission time, to preserve anonymity, the authors should release anonymized versions (if applicable).
        \item Providing as much information as possible in supplemental material (appended to the paper) is recommended, but including URLs to data and code is permitted.
    \end{itemize}

\item {\bf Experimental setting/details}
    \item[] Question: Does the paper specify all the training and test details (e.g., data splits, hyperparameters, how they were chosen, type of optimizer, etc.) necessary to understand the results?
    \item[] Answer: \answerYes{}
    \item[] Justification: The details are given in Section $\ref{sec:experiment}$ with hyperparameters and additional details on data splits and experiments  given in Appendix C.
    \item[] Guidelines:
    \begin{itemize}
        \item The answer NA means that the paper does not include experiments.
        \item The experimental setting should be presented in the core of the paper to a level of detail that is necessary to appreciate the results and make sense of them.
        \item The full details can be provided either with the code, in appendix, or as supplemental material.
    \end{itemize}

\item {\bf Experiment statistical significance}
    \item[] Question: Does the paper report error bars suitably and correctly defined or other appropriate information about the statistical significance of the experiments?
    \item[] Answer: \answerNo{}
    \item[] Justification: Error bars are not reported for accuracy because it would be too computationally expensive. See Appendix C for GPU hours and resources required.
    \item[] Guidelines:
    \begin{itemize}
        \item The answer NA means that the paper does not include experiments.
        \item The authors should answer "Yes" if the results are accompanied by error bars, confidence intervals, or statistical significance tests, at least for the experiments that support the main claims of the paper.
        \item The factors of variability that the error bars are capturing should be clearly stated (for example, train/test split, initialization, random drawing of some parameter, or overall run with given experimental conditions).
        \item The method for calculating the error bars should be explained (closed form formula, call to a library function, bootstrap, etc.)
        \item The assumptions made should be given (e.g., Normally distributed errors).
        \item It should be clear whether the error bar is the standard deviation or the standard error of the mean.
        \item It is OK to report 1-sigma error bars, but one should state it. The authors should preferably report a 2-sigma error bar than state that they have a 96\% CI, if the hypothesis of Normality of errors is not verified.
        \item For asymmetric distributions, the authors should be careful not to show in tables or figures symmetric error bars that would yield results that are out of range (e.g. negative error rates).
        \item If error bars are reported in tables or plots, The authors should explain in the text how they were calculated and reference the corresponding figures or tables in the text.
    \end{itemize}

\item {\bf Experiments compute resources}
    \item[] Question: For each experiment, does the paper provide sufficient information on the computer resources (type of compute workers, memory, time of execution) needed to reproduce the experiments?
    \item[] Answer: \answerYes{}
    \item[] Justification: See Appendix C for GPU hours and memory requirements.
    \item[] Guidelines:
    \begin{itemize}
        \item The answer NA means that the paper does not include experiments.
        \item The paper should indicate the type of compute workers CPU or GPU, internal cluster, or cloud provider, including relevant memory and storage.
        \item The paper should provide the amount of compute required for each of the individual experimental runs as well as estimate the total compute. 
        \item The paper should disclose whether the full research project required more compute than the experiments reported in the paper (e.g., preliminary or failed experiments that didn't make it into the paper). 
    \end{itemize}
    
\item {\bf Code of ethics}
    \item[] Question: Does the research conducted in the paper conform, in every respect, with the NeurIPS Code of Ethics \url{https://neurips.cc/public/EthicsGuidelines}?
    \item[] Answer: \answerYes{}
    \item[] Justification: This paper conforms, in every respect, with the NeurIPS Code of Ethics.
    \item[] Guidelines:
    \begin{itemize}
        \item The answer NA means that the authors have not reviewed the NeurIPS Code of Ethics.
        \item If the authors answer No, they should explain the special circumstances that require a deviation from the Code of Ethics.
        \item The authors should make sure to preserve anonymity (e.g., if there is a special consideration due to laws or regulations in their jurisdiction).
    \end{itemize}

\item {\bf Broader impacts}
    \item[] Question: Does the paper discuss both potential positive societal impacts and negative societal impacts of the work performed?
    \item[] Answer: \answerNA{}
    \item[] Justification: This is a technical paper that has no societal impact.
    \item[] Guidelines:
    \begin{itemize}
        \item The answer NA means that there is no societal impact of the work performed.
        \item If the authors answer NA or No, they should explain why their work has no societal impact or why the paper does not address societal impact.
        \item Examples of negative societal impacts include potential malicious or unintended uses (e.g., disinformation, generating fake profiles, surveillance), fairness considerations (e.g., deployment of technologies that could make decisions that unfairly impact specific groups), privacy considerations, and security considerations.
        \item The conference expects that many papers will be foundational research and not tied to particular applications, let alone deployments. However, if there is a direct path to any negative applications, the authors should point it out. For example, it is legitimate to point out that an improvement in the quality of generative models could be used to generate deepfakes for disinformation. On the other hand, it is not needed to point out that a generic algorithm for optimizing neural networks could enable people to train models that generate Deepfakes faster.
        \item The authors should consider possible harms that could arise when the technology is being used as intended and functioning correctly, harms that could arise when the technology is being used as intended but gives incorrect results, and harms following from (intentional or unintentional) misuse of the technology.
        \item If there are negative societal impacts, the authors could also discuss possible mitigation strategies (e.g., gated release of models, providing defenses in addition to attacks, mechanisms for monitoring misuse, mechanisms to monitor how a system learns from feedback over time, improving the efficiency and accessibility of ML).
    \end{itemize}
    
\item {\bf Safeguards}
    \item[] Question: Does the paper describe safeguards that have been put in place for responsible release of data or models that have a high risk for misuse (e.g., pretrained language models, image generators, or scraped datasets)?
    \item[] Answer: \answerNA{}
    \item[] Justification: The paper poses no such risk.
    \item[] Guidelines:
    \begin{itemize}
        \item The answer NA means that the paper poses no such risks.
        \item Released models that have a high risk for misuse or dual-use should be released with necessary safeguards to allow for controlled use of the model, for example by requiring that users adhere to usage guidelines or restrictions to access the model or implementing safety filters. 
        \item Datasets that have been scraped from the Internet could pose safety risks. The authors should describe how they avoided releasing unsafe images.
        \item We recognize that providing effective safeguards is challenging, and many papers do not require this, but we encourage authors to take this into account and make a best faith effort.
    \end{itemize}

\item {\bf Licenses for existing assets}
    \item[] Question: Are the creators or original owners of assets (e.g., code, data, models), used in the paper, properly credited and are the license and terms of use explicitly mentioned and properly respected?
    \item[] Answer: \answerYes{}
    \item[] Justification: All datasets, models, and pre-existing code bases used are cited.
    \item[] Guidelines:
    \begin{itemize}
        \item The answer NA means that the paper does not use existing assets.
        \item The authors should cite the original paper that produced the code package or dataset.
        \item The authors should state which version of the asset is used and, if possible, include a URL.
        \item The name of the license (e.g., CC-BY 4.0) should be included for each asset.
        \item For scraped data from a particular source (e.g., website), the copyright and terms of service of that source should be provided.
        \item If assets are released, the license, copyright information, and terms of use in the package should be provided. For popular datasets, \url{paperswithcode.com/datasets} has curated licenses for some datasets. Their licensing guide can help determine the license of a dataset.
        \item For existing datasets that are re-packaged, both the original license and the license of the derived asset (if it has changed) should be provided.
        \item If this information is not available online, the authors are encouraged to reach out to the asset's creators.
    \end{itemize}

\item {\bf New assets}
    \item[] Question: Are new assets introduced in the paper well documented and is the documentation provided alongside the assets?
    \item[] Answer: \answerNA{}
    \item[] Justification: This paper does not release new assets.
    \item[] Guidelines:
    \begin{itemize}
        \item The answer NA means that the paper does not release new assets.
        \item Researchers should communicate the details of the dataset/code/model as part of their submissions via structured templates. This includes details about training, license, limitations, etc. 
        \item The paper should discuss whether and how consent was obtained from people whose asset is used.
        \item At submission time, remember to anonymize your assets (if applicable). You can either create an anonymized URL or include an anonymized zip file.
    \end{itemize}

\item {\bf Crowdsourcing and research with human subjects}
    \item[] Question: For crowdsourcing experiments and research with human subjects, does the paper include the full text of instructions given to participants and screenshots, if applicable, as well as details about compensation (if any)? 
    \item[] Answer: \answerNA{}
    \item[] Justification: No crowdsourcing or research with human subjects.
    \item[] Guidelines:
    \begin{itemize}
        \item The answer NA means that the paper does not involve crowdsourcing nor research with human subjects.
        \item Including this information in the supplemental material is fine, but if the main contribution of the paper involves human subjects, then as much detail as possible should be included in the main paper. 
        \item According to the NeurIPS Code of Ethics, workers involved in data collection, curation, or other labor should be paid at least the minimum wage in the country of the data collector. 
    \end{itemize}

\item {\bf Institutional review board (IRB) approvals or equivalent for research with human subjects}
    \item[] Question: Does the paper describe potential risks incurred by study participants, whether such risks were disclosed to the subjects, and whether Institutional Review Board (IRB) approvals (or an equivalent approval/review based on the requirements of your country or institution) were obtained?
    \item[] Answer: \answerNA{}
    \item[] Justification: No research with human subjects.
    \item[] Guidelines:
    \begin{itemize}
        \item The answer NA means that the paper does not involve crowdsourcing nor research with human subjects.
        \item Depending on the country in which research is conducted, IRB approval (or equivalent) may be required for any human subjects research. If you obtained IRB approval, you should clearly state this in the paper. 
        \item We recognize that the procedures for this may vary significantly between institutions and locations, and we expect authors to adhere to the NeurIPS Code of Ethics and the guidelines for their institution. 
        \item For initial submissions, do not include any information that would break anonymity (if applicable), such as the institution conducting the review.
    \end{itemize}

\item {\bf Declaration of LLM usage}
    \item[] Question: Does the paper describe the usage of LLMs if it is an important, original, or non-standard component of the core methods in this research? Note that if the LLM is used only for writing, editing, or formatting purposes and does not impact the core methodology, scientific rigorousness, or originality of the research, declaration is not required.
    \item[] Answer: \answerNA{}
    \item[] Justification: None used
    \item[] Guidelines:
    \begin{itemize}
        \item The answer NA means that the core method development in this research does not involve LLMs as any important, original, or non-standard components.
        \item Please refer to our LLM policy (\url{https://neurips.cc/Conferences/2025/LLM}) for what should or should not be described.
    \end{itemize}

\end{enumerate}


\newpage
\appendix
\section*{Appendix}
In the Appendix, we provide additional discussions, formal proofs, and further experimental details that support the main paper. Section \ref{apdx:limitations} outlines the limitations of this work. Section \ref{apdx:proofs} presents the proofs of the algorithms and theorems, along with general results for certain special cases. Section \ref{apdx:hyper-parameters} describes the hyperparameters and training setups used in the experiments. Section \ref{apdx:additional_exp} includes additional experimental results.
\section{Limitations}
\label{apdx:limitations}
While our results demonstrate feasibility, they remain theoretical at this stage---though rotor layers achieve competitive performance when replacing attention layers, the benefits are primarily in a radical parameter count reduction rather than immediate compute speedup in practice. Realizing such gains will require dedicated systems-level optimizations beyond the scope of this work.  

Our current implementation relies on dense matrix representations of rotors and does not yet exploit the inherent sparsity of the rotor decomposition. Leveraging this sparsity will require new backpropagation schemes and software libraries, particularly when training from scratch.  We discuss the matrix representation and sparse structure of rotors in detail in Section~\ref{apdx:proofs}.

Overall, this work is intended to highlight the feasibility and promise of decomposing linear layers---key building blocks in modern large models---into smaller (potentially geometric) modules, such as bivectors and rotors, to facilitate more compact and interpretable architectures.
\section{Proofs}
\label{apdx:proofs}

\subsection{Correctness Proofs for Main Algorithms}
We prove the correctness of Alg.~\ref{algo:inv_decomp} and ~\ref{algo:ga_poweriter} in the theorems below.

\begin{theorem}
    Given a bivector $b\in\Cl^2(n)$, a random vector $v\in\Cl^1(n)$ that has a non-zero component in the direction of the dominant simple component, and a threshold $\epsilon\in\rr$, Alg. $\ref{algo:ga_poweriter}$ returns an approximate simple projection of $b$ in a differentiable way.
\end{theorem}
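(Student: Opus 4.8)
The plan is to reduce the statement to a standard fact about normalized power iteration. First I would translate the Clifford operations into linear algebra: under the identifications $\Cl^1(n)\cong\R^n$ and $\so(n)\cong\Cl^2(n)$ (the Fact stated earlier), the map $v\mapsto b\llcorner v$ is, up to the overall sign fixed by the contraction convention, multiplication by the skew-symmetric matrix $B$ with $b=\sum_{i<j}B_{ij}\,e_i\wedge e_j$. Hence the update $v\mapsto b\llcorner(b\llcorner v)$ corresponds to $v\mapsto B^2v$, the sign being irrelevant since it is squared. By the real spectral theorem $\R^n$ splits orthogonally into the $2$-planes $P_1,\dots,P_k$ carrying the simple components $b_1,\dots,b_k$ of the invariant decomposition of Lem.~\ref{lem:invariantDecomp} (plus possibly $\ker B$), and on $P_j$ the operator $B$ acts as $\mu_j$ times a quarter-turn, so $B^2$ acts as $-\mu_j^2\,\mathrm{Id}$. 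Thus $B^2$ is symmetric negative semidefinite with eigenvalues $\{-\mu_j^2\}$, each of multiplicity $2$, and its dominant eigenvalue is $-\mu_1^2$.

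Next I would invoke convergence of the power iteration. The hypothesis that the initial $v$ has a non-zero component in the direction of the dominant simple component says its projection onto $P_1$ is non-zero and (implicitly) that $\mu_1>\mu_2$, so that $-\mu_1^2$ is the strictly dominant eigenvalue; then the iterates $v_m = B^2 v_{m-1}/\|B^2 v_{m-1}\|_2$ converge to the unit-normalized projection of $v_0$ onto $P_1$, with error $O\!\big((\mu_2/\mu_1)^{2m}\big)$. The key subtlety, and the reason the loop tests $\|v+v_{\text{prev}}\|$ rather than $\|v-v_{\text{prev}}\|$, is that the dominant eigenvalue is negative: near convergence $B^2 v_{m-1}\approx-\mu_1^2\,v_{m-1}$, so after normalization $v_m\approx -v_{m-1}$ and $\|v_m+v_{m-1}\|\to 0$. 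Hence at termination $v$ is within $O(\epsilon)$ of a unit vector of $P_1$, up to sign.

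Finally I would verify reconstruction and differentiability. Since $P_j\perp P_1$ and $b_j$ is supported on $P_j$, we get $b_j\llcorner v=0$ for a unit $v\in P_1$, so $b\llcorner v=b_1\llcorner v$ lies in $P_1$, is orthogonal to $v$, and has norm $\mu_1$; reading this as $\sigma u$ gives $\sigma=\mu_1$ and $u\perp v$ in $P_1$, and then $b_s=\sigma\,u\wedge v=\mu_1\,(u\wedge v)=b_1$, with the orientation matching because $u=(b_1\llcorner v)/\mu_1$ is the quarter-turn of $v$ inside $P_1$. So $b_s$ equals the dominant simple component up to the $O(\epsilon)$ termination error, and $\sigma$ is the top singular value of $b$, consistent with $\Projsimple(b)=\sigma(u\wedge v)$. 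For differentiability, each per-iteration operation—right contraction, the wedge, the $\ell_2$ norm, the division—is $C^\infty$ away from $0$, so any fixed-length unrolling is differentiable; with the DEQ-style treatment used in the main text (run the fixed-point loop without gradient tracking, then one tracked forward pass) one differentiates only through that last pass, and in the non-degenerate regime $\mu_1>\mu_2$ the fixed point depends analytically on $b$ by matrix perturbation theory (Ch.~2 of \citet{kato1980perturbation}), so $b\mapsto\Projsimple(b)$ is differentiable. The main obstacle I anticipate is bookkeeping: pinning down the contraction sign and orientation conventions so that the returned $u\wedge v$ equals $b_1$ rather than $-b_1$, making the dependence of "approximate" on $\epsilon$ and the gap $\mu_2/\mu_1$ quantitative, and stating cleanly the non-degeneracy ($\mu_1>\mu_2$ and $v$ not orthogonal to $P_1$) under which both convergence and differentiability hold.
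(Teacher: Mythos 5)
Your proposal is correct and follows essentially the same route as the paper's proof: translate the contraction to the skew-symmetric matrix action ($b\llcorner v = Bv$), run power iteration on the symmetric matrix $B^2$, identify the limit with the dominant simple component (the paper phrases this via the top singular pair and Eckart--Young--Mirsky rather than your $2$-plane decomposition, but these are equivalent), and note differentiability of the contraction. One small point in your favor: your explanation of the $\lVert v+v_{\text{prev}}\rVert$ test---the dominant eigenvalue of $B^2$ is the \emph{negative} number $-\mu_1^2$, so the normalized iterates flip sign---is sharper than the paper's appeal to ``oscillation'' within the multiplicity-two eigenspace, and your explicit non-degeneracy condition $\mu_1>\mu_2$ is a hypothesis the paper uses implicitly without stating.
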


\begin{proof}
    Let $B\in\so(n)$ be the skew-symmetric matrix corresponding to $b$. We begin by showing that $b\llcorner v = Bv$ for any vector $v$. Fix an orthonormal basis $\{e_1,\dots,e_n\}$ of $\rr^n$ and let
    \[
        b = \sum_{1\leq i < j \leq n} b_{ij} (e_i \wedge e_j) \quad \text{and} \quad v = \sum_{k=1}^n v_k e_k,
    \]
    noting that $[B]_{ij} = b_{ij}$ for $1\leq i < j \leq n$ and $[B]_{ij} = - b_{ji}$ for $1\leq j < i \leq n$. It is easy to verify that
    $(e_i \wedge e_j) \llcorner v = v_je_i - v_i e_j.$
    By bilinearity of the right contraction, we have
    \begin{align*}
        b \llcorner v &= \sum_{i<j} b_{ij}(v_je_i - v_i e_j)\\
        &= \sum_{i<j} b_{ij}(v_je_i) - \sum_{i<j} b_{ij}(v_i e_j) \\
        &= \sum_{i<j} b_{ij}(v_je_i) - \sum_{j<i} b_{ji}(v_j e_i) && \text{swapping $i$ and $j$}\\
        &= \sum_{i<j} [B]_{ij}(v_je_i) + \sum_{j<i} [B]_{ij}(v_j e_i)\\
        &= \sum_{i} \sum_{j} [B]_{ij}v_j\;e_i = \sum_{i} (Bv)_i e_i = Bv.
    \end{align*}
    It follows that
    $$b\llcorner (b\llcorner v) = b\llcorner Bv = B^2v = -B^T Bv.$$
    Since $B^2$ is symmetric, we may apply the power iteration method. Observe, by skew-symmetry, the non-zero eigenvalues of $B$ come in conjugate pairs
    $$\{\pm i\sigma_1, \pm i\sigma_2,\ldots,\pm i\sigma_k\},$$
    with $\sigma_1 \ge \sigma_2 \ge \dots \ge \sigma_k$ and  $k \le \lfloor n/2 \rfloor$. An associated orthonormal set of complex eigenvectors can be written as $\{u_j \pm i v_j\}_{j\le k}$. Since $\sigma_1$, the dominant eigenvalue for $B^2$, has multiplicity 2, the power method on $B^2$ will not converge in a single direction. Instead, it will \textit{oscillate} within the two-dimensional span of $\{u_1,v_1\}$ \citet{wilkinson1965}.
    
    

    Let $v$ be one of the singular vectors on which the algorithm terminates. Since singular vectors are unique up to rotation, WLOG we choose $v$ to be the right singular vector. Then, the left singular vector $u$ and singular value $\sigma$ satisfy $b\llcorner v = Bv = \sigma u$ by definition.

    By the Eckart-Young-Mirsky theorem \citep{golub2013matrix}, the best rank-2 approximation to $B$ is its projection onto its simple two-dimensional subspace spanned by top two singular directions: 
    $$\Projsimple(b)=\sigma_1 u_1v_1^T + \sigma_2 u_2v_2^T.$$ 
    As the singular values come in pairs for $B$, this reduces to $\sigma_1 \left(u_1v_1^T + u_2v_2^T\right)$. But for $B$, the two right singular vectors $v_1, v_2$ and the two left singular vectors $u_1,u_2$ lie in the same two-plane, coming from the dominant eigenvalue conjugate pair, $\pm i\sigma_1$. Thus, we can choose $u_2 = v_1$ and $v_2 = -u_1$ (or vice versa), which ensures $u_1 \perp v_1$ and $u_2 \perp v_2$. It follows that
    $$\Projsimple(b) = \sigma_1\left(u_1v_1^T - v_1u_1^T\right) = \sigma_1 (u_1 \wedge v_1).$$
    
    

    Since Alg~\ref{algo:ga_poweriter} solves only approximately for $\sigma(u\wedge v) \approx \sigma_1(u_1 \wedge v_1)$, we have that
    \[
        \Projsimple(b)\approx \sigma_1(u_1\wedge v_1).
    \]
    Note that since the right contraction implementation is differentiable, Alg $\ref{algo:ga_poweriter}$ is differentiable.
\end{proof}

\begin{theorem}
    Given $b\in\Cl^2(n)$ and $k-1$ many vectors $v\in\Cl^1(n)$, Alg. $\ref{algo:inv_decomp}$ returns the invariant decomposition in a differentiable way.
\end{theorem}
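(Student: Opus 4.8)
The plan is to prove the statement by induction on the number of simple components $k=\lfloor n/2\rfloor$, with the preceding theorem (correctness and differentiability of $\Projsimple$ as implemented by Alg.~\ref{algo:ga_poweriter}) serving as the engine of each inductive step. First I would recall the real normal form of the skew-symmetric matrix $B\in\so(n)$ associated with $b$: there is an orthonormal basis $\{q_1,\dots,q_n\}$ in which $B$ is block diagonal with $2\times2$ blocks $\sigma_j J$, $J=\bigl(\begin{smallmatrix}0&-1\\1&0\end{smallmatrix}\bigr)$, for $j=1,\dots,k$ (and a trailing zero block when $n$ is odd), where $\sigma_1\ge\cdots\ge\sigma_k\ge 0$. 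Under the isomorphism $\so(n)\cong\Cl^2(n)$ this is precisely the invariant decomposition $b=\sum_{j=1}^k b_j$ with $b_j=\sigma_j\,(q_{2j-1}\wedge q_{2j})$: each $b_j$ is simple, the $b_j$ pairwise commute, and they are supported on mutually orthogonal $2$-planes. I would also carry along the hypothesis of Lem.~\ref{lem:invariantDecomp} that $b$ has as many eigenvectors as its effective pseudo-dimension, so that this normal form exists.

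For the inductive step: by the previous theorem the first call $\Projsimple(b,v_1)$ returns $b_s=\sigma_1(u_1\wedge v_1)$ up to the $\epsilon$-controlled approximation, where $u_1,v_1$ span the dominant $B$-invariant $2$-plane, so $b_s=b_1$. The crucial lemma is then that $b-b_s$ corresponds to the skew-symmetric matrix obtained from $B$ by zeroing the top block; since this leaves the remaining blocks and their eigenvectors untouched, $b-b_s$ again admits an invariant decomposition, namely $\sum_{j=2}^k b_j$, now with dominant singular value $\sigma_2$ and at most $k-1$ components. The inductive hypothesis thus applies to the updated bivector, and after the $k-1$ iterations of Alg.~\ref{algo:inv_decomp} the list holds $b_1,\dots,b_{k-1}$ while the leftover $b$ has at most one nonzero block and is therefore simple; Alg.~\ref{algo:inv_decomp} then appends it as $b_k$. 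Reassembling, the union of the outputs reconstructs $b=\sum_{j=1}^k b_j$ up to the per-step approximation. Differentiability is immediate afterwards: Alg.~\ref{algo:inv_decomp} is a finite composition of the differentiable map $\Projsimple$, subtraction in $\Cl^2(n)$, and set aggregation, and the DEQ-style detachment of the inner fixed-point iteration does not touch the single tracked forward pass.

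I expect the main obstacle to be making the residual lemma rigorous, i.e.\ that $b-\Projsimple(b)$ genuinely has the remaining $b_j$'s (and nothing scrambled) as its invariant decomposition. In the generic case of a simple spectrum this follows from the block structure together with the Eckart--Young characterization already used in the previous theorem. The delicate case is spectral degeneracy: when $\sigma_1=\sigma_2$ the dominant eigenspace of $B^2$ is four-dimensional and the rank-$2$ truncation---hence the extracted simple bivector---is not unique, so $b_s$ need not be a canonical block. I would handle this by observing that the invariant decomposition is itself only defined up to rotation inside degenerate eigenspaces, and that any simple bivector produced by the iteration (a wedge $u_1\wedge v_1$ with $u_1,v_1$ lying in a common $B$-invariant $2$-plane of that eigenspace) still leaves, after subtraction, a bivector whose nonzero $B^2$-spectrum is $\{\sigma_2^2,\dots,\sigma_k^2\}$ with one multiplicity reduced---so the induction goes through with ``the'' invariant decomposition read as ``an'' invariant decomposition; alternatively one restricts to $b$ with distinct $\sigma_j$ (a generic, full-measure condition) and extends to the boundary by continuity. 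A secondary technical point is tracking how the $\Oh(\epsilon)$ per-call error of Alg.~\ref{algo:ga_poweriter} accumulates over the $k-1$ subtractions, which one controls via the Lipschitz dependence of the residual on $b_s$ and on the spectral gaps.
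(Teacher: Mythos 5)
Your proposal is correct and follows the same high-level deflation strategy as the paper's proof---iteratively extract the dominant simple component with $\Projsimple$, subtract, and recurse---but the way you certify each extraction step is genuinely different. The paper invokes Lem.~\ref{lem:invariantDecomp} and verifies, by a short computation with the complex partner eigenvectors $v_{\mu_1^{\pm}} = u \pm iv$, that the first term of that spectral decomposition equals $\sigma(u\wedge v) = \Projsimple(b)$; the correctness of the subsequent subtractions is then asserted in one sentence (``subtract it from $b$, apply the same routine to the residual, and repeat''). You instead work from the real block-diagonal normal form of the associated $B\in\so(n)$, identify the invariant decomposition with the $2\times 2$ blocks $\sigma_j(q_{2j-1}\wedge q_{2j})$, and prove explicitly the residual lemma that $b - \Projsimple(b)$ has exactly the remaining blocks as its invariant decomposition---precisely the step the paper leaves implicit. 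Your route is more self-contained (it does not need the complex-eigenvector identity) and more careful: you also flag the degenerate case $\sigma_1=\sigma_2$, where the extracted component is only unique up to rotation within the degenerate eigenspace, and the accumulation of the per-call $\Oh(\epsilon)$ error over $k-1$ iterations; neither issue is addressed in the paper's proof. The trade-off is that the paper's argument connects the algorithm's output directly to the statement of Lem.~\ref{lem:invariantDecomp} as cited, whereas yours establishes the equivalent decomposition from the matrix normal form and would need one extra line to note that the two decompositions coincide.
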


\begin{proof}
    The heavy lifting is done by \cite{eelbode2024outereigentangentconcepts} in Lem.~$\ref{lem:invariantDecomp}$, which states that $b$ can be written as the sum of at most $k$ commuting, orthogonal, simple bivectors
    \[
        b = \sum_{j=1}^k \mu_j\frac{v_{\mu_j^+}\wedge v_{\mu_j^-}}{v_{\mu_j^+}\cdot v_{\mu_j^-}}
    \]
    where $\{\pm\mu_1,\pm\mu_2,\ldots,\pm\mu_k\}$ is the spectrum of $b$ and $v_{\mu_j^+}$ and $v_{\mu_j^-}$ are partner eigenvectors. 

    To obtain the full decomposition, it suffices to iteratively extract each term. If we have a differentiable subroutine to find the first term, then we can subtract it from $b$, apply the same routine to the residual, and repeat. Thus, it suffices to prove that the first term in the sum is equal to the projection of $b$ onto the simple bivectors.

    From the proof of Alg. $\ref{algo:ga_poweriter}$, we know that
    \[
        \Projsimple(b) = \sigma(u\wedge v)
    \]
    where $\sigma$ is the largest singular value and $u,v$ are the corresponding left and right singular vectors. We wish to show that
    \[
        \sigma(u\wedge v) = \mu_1\frac{v_{\mu_1^+}\wedge v_{\mu_1^-}}{v_{\mu_1^+}\cdot v_{\mu_1^-}}.
    \]
    The partner eigenvectors take the form $v_{\mu_1^+} = u + iv$ and $v_{\mu_1^-} = u - iv$, as cited in the proof of Alg. $\ref{algo:ga_poweriter}$. Then, the numerator reduces to
    \begin{equation*}
        \begin{split}
            v_{\mu_1^+}\wedge v_{\mu_1^-} &= (u + iv) \wedge (u - iv)\\
            &= u \wedge u - i u \wedge v + i v \wedge u + v \wedge v\\
            &= - i u \wedge v + i v \wedge u\\
            &= -2i u\wedge v.\\
        \end{split}
    \end{equation*}
    as the wedge product is antisymmetric. The denominator reduces to
    \begin{equation*}
        \begin{split}
            v_{\mu_1^+}\cdot v_{\mu_1^-} &= (u + iv)\cdot(u - iv)\\
            &= u\cdot u - i u\cdot v + iv\cdot u + v\cdot v\\
            &= u\cdot u + v\cdot v\\
            &=2\\
        \end{split}
    \end{equation*}
    as singular vectors are orthonormal. Since $\mu_1 = i\sigma$, we have that
    \[
        \mu_1\frac{v_{\mu_1^+}\wedge v_{\mu_1^-}}{v_{\mu_1^+}\cdot v_{\mu_1^-}}     = i\sigma \frac{-2i u\wedge v}{2}
            = \sigma(u\wedge v)
            = \Projsimple(b),
    \]
    as desired. Subtraction is differentiable and as $\Projsimple(b)$ can be found in a differentiable way, this algorithm is differentiable.
\end{proof}

\subsection{Parameter Count}

We revisit the main theorem on the parameter count for rotor maps and provide a proof of the statement.

\begin{theorem}[$\psi$ Parameter Count]
Let \( \psi: \mathbb{R}^{d_{\text{in}}} \to \mathbb{R}^{d_{\text{out}}} \) be a linear map composed of rotor modules 
\(\psi_{r_{ij}, s_{ij}}\) with $i\in[c_1]$ and $j\in[c_2]$, each acting in $\Cl(n)$. Let $2^n \le d \triangleq \min(d_{\text{in}}, d_{\text{out}})$. Then, the total number of learnable parameters is upper bounded by
\[
    2 c_1 c_2 \binom{n}{2} = \mathcal{O}(\log^2 d).
\]
\end{theorem}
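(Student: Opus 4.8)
The plan is to treat this as a bookkeeping argument: isolate every learnable scalar in the definition of $\psi$, show that the bivectors are the only source of parameters, count them, and then simplify using the hypothesis $2^{n}\le d$.

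First I would recall the construction from Section~\ref{subsc:genRotorGadget}: the map $\psi$ is assembled from exactly $c_1 c_2$ rotor-sandwich modules $\psi_{r_{ij},s_{ij}}$ with $i\in[c_1]$, $j\in[c_2]$, together with a pooling operator $\sigma$, the coordinate-selection index sets $I_i\subseteq[d_{\text{in}}]$ and $O_j\subseteq[d_{\text{out}}]$, the fixed permutations inserted between rotors, and the normalization layers. All of these latter ingredients are parameter-free by assumption, so the entire learnable budget of $\psi$ is carried by the rotors $\{r_{ij},s_{ij}\}_{i\in[c_1],j\in[c_2]}$. Here I would also note that the map from a bivector to its rotor — via the invariant decomposition (Alg.~\ref{algo:inv_decomp}--\ref{algo:ga_poweriter}) followed by the closed-form exponential in \eqref{eq:good_exp} — is a deterministic function of the bivector and introduces no new parameters.

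Second I would count the parameters of a single module. Each $\psi_{r_{ij},s_{ij}}$ is specified by the ordered pair of rotors $r_{ij},s_{ij}\in\Spin(n)$, and by the bivector parametrization established earlier (Eq.~\eqref{eq:bad_exp} together with the isomorphism $\so(n)\cong\Cl^2(n)$) each rotor is the geometric exponential of a bivector in $\Cl^2(n)$, a space of dimension $\binom{n}{2}$. Hence each module is described by at most $2\binom{n}{2}$ scalars — the two bivectors $a_{ij},b_{ij}\in\Cl^2(n)$ — and summing over all $c_1 c_2$ modules gives the bound $2 c_1 c_2 \binom{n}{2}$. Stating it as an upper bound (rather than an equality) also covers degenerate cases in which a rotor is effectively determined by fewer scalars, and makes explicit that none of the auxiliary components contribute.

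Third I would convert this to the asymptotic statement: the constraint $2^{n}\le d=\min(d_{\text{in}},d_{\text{out}})$ gives $n\le\log_2 d$, so $\binom{n}{2}=\frac{n(n-1)}{2}\le\frac{n^{2}}{2}\le\frac{1}{2}\log_2^{2} d$, and treating the width/depth hyperparameters $c_1,c_2$ as fixed constants yields $2 c_1 c_2 \binom{n}{2}=\mathcal{O}(\log^{2} d)$. The only step requiring genuine care is the first one — verifying that pooling, the fixed permutations, the normalization layers, and the index sets $I_i,O_j$ are all truly non-learnable, so that the bivectors exhaust the parameter set; everything after that is the dimension count $\dim\Cl^2(n)=\binom{n}{2}$ and elementary estimates.
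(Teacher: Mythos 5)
Your proposal is correct and follows essentially the same argument as the paper's proof: each rotor is the exponential of a bivector requiring $\binom{n}{2}$ scalars, each module carries two rotors, and multiplying by the $c_1 c_2$ modules gives the bound, with $n \le \log_2 d$ yielding the asymptotic. Your additional care in checking that pooling, permutations, normalization, and the index sets contribute no parameters is a reasonable elaboration of what the paper leaves implicit, but it does not change the route.
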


\begin{proof}
Each rotor is the exponential of a bivector. As general bivectors are a linear combination of basis bivectors, of which there are $n \choose 2$, parametrizing a bivector takes $\binom{n}{2}$ scalar parameters. Hence, a single rotor-sandwich map $\psi_{r_{ij}, s_{ij}}$ is parametrized using $2 \binom{n}{2}$ scalar parameters. The result is then immediate, since there are $c_1 c_2$ such modules, corresponding to all input and output pairs.
\end{proof}

Tab.~\ref{tab:appdx-param-cnt} summarizes the number of parameters required for each key, query, and value projection per attention layer using different replacement methods across all LLMs used in our experiments.

\subsection{Technical Analysis of Matrix Representation of Rotor Application}
\label{apdx:techAnalysis}
We implement the sandwich product in $\eqref{eq:rotor-sandwich}$, along with other operations such as grade-restricted wedge and inner products, using the \texttt{torch\_ga} Clifford algebra package for PyTorch, available at \citet{torch_ga}.
In our implementation of $\eqref{eq:rotor-sandwich}$, the rotor action is represented by a matrix $M$. In this section, we describe the construction of $M$ and introduce some of its key properties. These results highlight the special orthogonality and block structure of \(M\). While the two-rotor map \(\psi_{r,s}(x)\) in \eqref{eq:rotor-transformation} does not satisfy these properties exactly, it appears to share similar properties and structure; we do not discuss it here.

We compute the sandwich product using a change-of-basis matrix.
\begin{definition}
    Let $r \in \Spin(n)$, and let $\{e_J\}_{J \subseteq [n]}$ denote the canonical basis of $\Cl(n)$ (where $J$ is an ordered multi-index). Define
    $$N_r = [\tau(\psi_{r}(e_J))]_{J \subseteq [n]},$$
    where $\psi_{r}(x) = rxr^\dagger$ and $\tau$ is the canonical vector-space isomorphism $\tau:\Cl(n)\rightarrow\rr^{2^n}$. That is, each row of $N_r$ is the coefficients of $r e_J r^{\dagger}$ expressed in the basis. We refer to $N_r$ as the change-of-basis matrix for $r$.
    \label{def:changeBasis}
\end{definition}

\begin{lemma}
    Let $x\in\Cl(n)$, $r\in\Spin(n)$, and $N_r$ be defined as in Def.~\ref{def:changeBasis}. Then, the two mappings
    \[x\mapsto xN_r \quad\text{and}\quad x\mapsto rxr^\dagger\]
    are the same linear map up to isomorphism.
\end{lemma}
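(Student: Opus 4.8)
The plan is to treat this as the standard statement ``a linear map is represented, in a fixed basis, by the matrix whose rows (resp.\ columns) are the images of the basis vectors,'' with the only real care being the row-vector convention built into Def.~\ref{def:changeBasis}. First I would note that $\psi_r:\Cl(n)\to\Cl(n)$, $\psi_r(x)=rxr^\dagger$, is a linear endomorphism: since $r$ and $r^\dagger$ are \emph{fixed} multivectors and the geometric product is bilinear, $\psi_r$ is the composition of left multiplication by $r$ with right multiplication by $r^\dagger$, each of which is linear. (In particular no appeal to linearity of reversion is needed, since $r^\dagger$ does not vary with $x$.) Because $\tau:\Cl(n)\to\rr^{2^n}$ is a vector-space isomorphism, it suffices to check that the two maps $\tau(x)\mapsto \tau(x)N_r$ and $\tau(x)\mapsto \tau(\psi_r(x))$ agree on $\rr^{2^n}$.

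Next I would expand an arbitrary $x\in\Cl(n)$ in the canonical basis, $x=\sum_{J\subseteq[n]}x_J\,e_J$, so that $\tau(x)=(x_J)_J$ is the coordinate row vector. By linearity of $\psi_r$,
\[
  \psi_r(x)=\sum_{J\subseteq[n]}x_J\,\psi_r(e_J)=\sum_{J\subseteq[n]}x_J\,r e_J r^\dagger,
\]
and applying the linear map $\tau$ gives $\tau(\psi_r(x))=\sum_J x_J\,\tau(r e_J r^\dagger)$. By Def.~\ref{def:changeBasis}, $\tau(r e_J r^\dagger)=\tau(\psi_r(e_J))$ is exactly the $J$-th row of $N_r$, so $\tau(\psi_r(x))$ is the linear combination of the rows of $N_r$ weighted by the $x_J$, which is precisely the row-times-matrix product $\tau(x)N_r$. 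Since $x$ was arbitrary, the two linear maps coincide on all of $\Cl(n)$, i.e.\ $\tau(rxr^\dagger)=\tau(x)N_r$ for every $x$; this is exactly the assertion that $x\mapsto xN_r$ and $x\mapsto rxr^\dagger$ are the same linear map modulo the identification $\tau$.

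There is essentially no obstacle here; the one thing to get right is the index bookkeeping — that ``row $J$ of $N_r$ $=$ coordinates of $r e_J r^\dagger$'' is the convention that makes $x\mapsto xN_r$ (with $x$ read as a row vector) correct, so that $(\tau(x)N_r)_K=\sum_J x_J\,[N_r]_{J,K}=\sum_J x_J\cdot\big(\text{$K$-th coefficient of }r e_J r^\dagger\big)$. I would also remark, for well-definedness of the statement, that $\psi_r$ genuinely maps $\Cl(n)$ into itself because $\Cl(n)$ is closed under the geometric product; this is what guarantees each row $\tau(r e_J r^\dagger)$ is a bona fide element of $\rr^{2^n}$ and hence that $N_r$ is a well-defined $2^n\times 2^n$ matrix. (The analogous special-orthogonality and block structure of $N_r$, used later, would then follow from $rr^\dagger=1$ and grade-preservation of the sandwich product, but that is beyond this lemma.)
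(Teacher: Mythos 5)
Your proposal is correct and matches the paper's argument: both expand $x$ in the canonical basis $\{e_J\}$, use linearity of the sandwich product and of $\tau$, and identify $\tau(r e_J r^\dagger)$ as the $J$-th row of $N_r$ so that $\tau(rxr^\dagger)=\tau(x)N_r$. The only cosmetic difference is that the paper carries out the double-sum index swap explicitly, while you phrase the same computation as "a linear combination of the rows of $N_r$."
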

\begin{proof}
    Write $x$ as 
    $$x = \sum_{J\subseteq[n]} x_J e_J,$$
    where $x_J$ is the real coefficient of the basis element $e_J$. Let $\tau:\Cl(n)\rightarrow \rr^{2^n}$ be the canonical vector-space isomorphism that gives the row vector coordinates of the multivector, $\phi_r(x) = rxr^\dagger$ for $x\in\Cl(n)$, and $\delta(y) = yN_r$ for $y\in\rr^{2^n}$. To prove $\phi$ and $\delta$ are the same up to isomorphism, we must prove that $\tau(\phi_r(x)) = \delta(\tau(x))$ for all $x\in\Cl(n)$.
    
    For each basis element $e_J$, the sandwich product gives some multivector $\psi(e_J) = r e_J r^{\dagger}$, which we express as
     $$r e_J r^{\dagger} = \sum_{I \subseteq [n]} [N_r]_{J,I} e_I,$$
    since $(N_r)_{J,I}$ stores the coefficient of $e_I$ in $r e_J r^{\dagger}$ by definition. Then,

    \begin{equation*}
        \begin{split}
            \tau(\phi_r(x)) &= \tau\left(rxr^\dagger\right)\\
            &= \tau\left(r\left[\sum_{J\subseteq [n]}x_Je_J\right]r^\dagger\right)\\
            &= \tau\left(\sum_{J\subseteq [n]}x_Jre_Jr^\dagger\right)\\
            &= \tau\left(\sum_{J\subseteq [n]}x_J\left[\sum_{I\subseteq[n]}[N_r]_{J,I}e_I\right]\right)\\
            &= \tau\left(\sum_{I\subseteq [n]}\left[\sum_{J\subseteq[n]}x_J[N_r]_{J,I}\right]e_I\right)\\
            &= \sum_{I\subseteq[n]}\left[\sum_{J\subseteq[n]}x_J[N_r]_{J,I}\right]\tau(e_I)\quad\text{by linearity}\\
            &= \tau(x)N_r\\
            &= \delta(\tau(x))\\
        \end{split}
    \end{equation*}
    where the second to last step follows as the inner sum is the dot product of $\tau(x)$ with the $I$th column of $N_r$.
\end{proof}

This allows us to compute the rotor sandwich product via matrix multiplication. However, this gives us very little insight into the structure of $N_r$ itself. For one, $N_r$ must respect the grade-preserving property of rotors, meaning it is a block diagonal matrix. The following alternative view provides more insight into its structure.

\paragraph{A more revealing view of $N_r$.}
Let $b\in\Cl^2(n)$ be the bivector associated with $r\in\Spin(n)$ by $r=\exp(b)$, and let $B\in\so(n)$ be the corresponding skew-symmetric matrix. Let \(R\triangleq\exp(2B)\in\SO(n)\) via the matrix exponential map. For each grade $0\leq k\leq n$, set
\[
    M_{k} \;\triangleq\; C_{k}(R)\in \rr^{\binom{n}{k}\times\binom{n}{k}},
    \label{def:compMat}
\]
where $C_k(R)$ is the $k$th compound matrix of $R$. Algebraically, $C_k(R)$ is the $k$th exterior power of $R$, i.e., the unique linear map $\bigwedge^k(R)$ such that
\[\bigwedge^k(R)(m_1\wedge m_2\wedge\cdots\wedge m_k) = Rm_1 \wedge Rm_2 \wedge \cdots \wedge Rm_k\]
for all $m_i\in \rr^n$ for $i\in[k]$ \citep{conrad_exterior_powers}. 
\begin{definition}\label{def:diagMat}
    Let $M_k$ be the $k$th compound, or exterior power, of $R$. Stack the $M_k$ on the diagonal to form $$M \triangleq \diag(M_{0},M_{1},\dots,M_{n}) \in \rr^{2^n \times 2^n}.$$
\end{definition}
We first show that $N_r$ and $M$ are in fact the same matrix.
\begin{theorem}\label{thm:matEquiv}
     Let $N_r$ and $M$ be defined as in Def. $\ref{def:changeBasis}$ and $\ref{def:diagMat}$. Then, $M=N_r$.
\end{theorem}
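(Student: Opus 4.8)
The plan is to exploit the fact that the sandwich action $\psi_r(x)=rxr^\dagger$ is an algebra automorphism of $\Cl(n)$ which restricts to the rotation $R$ on the vector subspace, together with the observation that a canonical basis $k$-vector $e_J$ is just the geometric product of the mutually orthogonal basis vectors $e_{j}$, $j\in J$. Concretely, both $N_r$ and $M$ are block diagonal with respect to the grade decomposition $\Cl(n)=\bigoplus_{k=0}^{n}\Cl^k(n)$ --- for $M$ this is Def.~\ref{def:diagMat}, and for $N_r$ it is the grade-preserving property of the rotor action already noted in the text. Hence it suffices to show, for each $k$, that the grade-$k$ block of $N_r$ equals $M_k=C_k(R)$; equivalently, that $\psi_r$ restricted to $\Cl^k(n)$ is the $k$th exterior power $\bigwedge^k R$ under the identification $\Cl^k(n)\cong\bigwedge^k\rr^n$.

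First I would record the two ingredients. (i) For all $x,y\in\Cl(n)$ and $r\in\Spin(n)$, $\psi_r(xy)=rx(r^\dagger r)yr^\dagger=(rxr^\dagger)(ryr^\dagger)=\psi_r(x)\psi_r(y)$, using $r^\dagger r=1$; so $\psi_r$ is an algebra automorphism fixing the scalars. (ii) For $v\in\Cl^1(n)\cong\rr^n$ one has $\psi_r(v)=Rv$ with $R=\exp(2B)\in\SO(n)$ --- this is the $\Spin$--$\SO$ correspondence recorded in the Remark of Section~\ref{sec:method}, and the factor of two is exactly what matches the definition $R\triangleq\exp(2B)$ used here (e.g. $b=\phi\,e_1\wedge e_2$ gives $re_1r^\dagger=\cos(2\phi)\,e_1+\sin(2\phi)\,e_2$, while $\exp(2B)$ rotates by $2\phi$).

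Then I would combine them. Fix $J=\{j_1<\cdots<j_k\}\subseteq[n]$ and write $e_J=e_{j_1}\wedge\cdots\wedge e_{j_k}$; since the $e_{j_i}$ are mutually orthogonal, their geometric product coincides with their wedge product, so $e_J=e_{j_1}e_{j_2}\cdots e_{j_k}$. Applying (i) and then (ii),
\[
  \psi_r(e_J)=\psi_r(e_{j_1})\,\psi_r(e_{j_2})\cdots\psi_r(e_{j_k})=(Re_{j_1})(Re_{j_2})\cdots(Re_{j_k}).
\]
Because $R$ is orthogonal, $\{Re_{j_1},\dots,Re_{j_k}\}$ is again an orthonormal set, so the geometric product on the right is again a pure wedge: $\psi_r(e_J)=(Re_{j_1})\wedge\cdots\wedge(Re_{j_k})=\big(\bigwedge^k R\big)(e_J)$. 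Reading off coefficients in the canonical basis, this says precisely that the grade-$k$ block of $N_r$ is the matrix of $\bigwedge^k R$, i.e. $C_k(R)=M_k$. Assembling over $k=0,\dots,n$ gives $N_r=\diag(M_0,\dots,M_n)=M$.

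The only nontrivial conceptual point --- and the main obstacle --- is this grade-by-grade reduction, resting on the identity that the geometric product of mutually orthogonal vectors equals their wedge product and on the fact that an orthogonal $R$ keeps $\{Re_{j_i}\}$ orthonormal; once these are in place the computation collapses. The remaining care is bookkeeping: matching the row/column (left- versus right-multiplication) convention of $N_r$ in Def.~\ref{def:changeBasis} against the linear-map convention used to define $C_k(R)$, and confirming the half-angle factor so that the induced rotation is $\exp(2B)$ rather than $\exp(B)$. Neither affects the argument, but both must be stated explicitly to make the conclusion an exact matrix identity rather than an identity up to transpose or reparametrization.
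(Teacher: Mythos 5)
Your proposal is correct and follows essentially the same route as the paper's proof: exploit block-diagonality by grade, establish that the sandwich action on vectors is $v\mapsto Rv$ with $R=\exp(2B)$, and then extend to grade-$k$ basis elements so that the grade-$k$ block is the compound matrix $C_k(R)=\bigwedge^k(R)$. The only differences are presentational---the paper derives the vector-level identity via the adjoint/Hadamard-lemma computation $\mathrm{ad}_b(v)=2Bv$ where you cite the standard half-angle Spin--SO correspondence (with a sanity check on a simple bivector), and you justify the passage from geometric products to wedge products explicitly via orthonormality of $\{Re_{j_i}\}$ where the paper simply invokes the grade-preserving automorphism property---so no substantive gap.
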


\begin{proof}
    The rotor sandwich product is a grade-preserving automorphism, implying $\psi_{r}(\Cl^k(n)) = \Cl^{k}(n)$. Thus, $N_r$ is block-diagonal, one block per grade. First, we show $\psi_r(v) = Rv$ for a vector $v$, where $R = \exp(2B)$. Observe
    \begin{equation*}\label{eq:rotor-to-bivec}
        rvr^{\dagger} = e^b v \left(e^b\right)^{\dagger} = e^b v e^{b^\dagger} = e^b v e^{-b},
    \end{equation*}
    where the second equality follows as reversion is an anti-automorphism and the third equality follows as $b^{\dagger} = -b$. Now, consider the adjoint operator in an associative algebra defined as $\mathrm{ad}_{X}(Y) \triangleq [X,Y] = XY-YX$. The Hadamard lemma and Taylor expansion together imply
    $$e^b v e^{-b} = \sum_{k \ge 0} \frac{1}{k!} (\mathrm{ad}_{b})^k(v) = \exp(\mathrm{ad}_{b})v.$$
    We also note the following identity:
    $$\mathrm{ad}_b(v) = bv-vb = (b\cdot v + b \wedge v)-(v \cdot b + v \wedge b) = 2 b \cdot v = 2Bv,$$
    where $B$ is the skew-symmetric matrix for $b$. Putting these together, it follows that $\psi_r(v) \triangleq rvr^{\dagger} = \exp(2B)v = Rv.$ Now, consider any $k$-vector $v = e_{i_1} \wedge \dots \wedge e_{i_k}$. We have
    \begin{align*}
        \psi_r(v) &= \psi(e_{i_1}) \wedge \dots \wedge \psi(e_{i_k}) && \text{$\psi_r$ is a grade-preserving automorphism}\\
        &= (Re_{i_1}) \wedge \dots \wedge (Re_{i_k}) && \text{from above}\\
        &= \bigwedge^{k}(R)(v).
    \end{align*}
    Since the mapping is unique, we are done.
\end{proof}

Note that this block-diagonal structure highlights the \textit{grade-preserving behavior of rotors}. This motivates our design choice to allow grade-mixing in our gadget for more expressivity. In particular, we insert random permutations between rotor layers so that information can mix across different grade components of the input multivector. Now that we have an alternative characterization of the sandwich product, we analyze its structure and the space in which it lives. Below, we examine several structural properties of the matrix $M$. 

\begin{property}
    The matrix $M$ in Def. $\ref{def:diagMat}$ is a block diagonal matrix with at most $2n \choose n$ non-zero entries.
\end{property}
\begin{proof}
 $M=N_r$ by Thm. $\ref{thm:matEquiv}$ and so $M$ is block diagonal. There are $n+1$ blocks, each of size ${n \choose k} \times {n \choose k}$ for $0 \le k \le n$. Thus, the total number of non-zero entries is at most
    \[
    \sum_{k=0}^n {n \choose k}^2 = {2n \choose n},
    \]
    which is far less than the $2^{2n}$ of a dense $2^n \times 2^n$ matrix.
\end{proof}

\begin{property}
    For every grade $0\leq k\leq n$, the block
    $M_{k}=C_{k}(R)$ lies in
    $\SO\left(\binom{n}{k}\right)$. Consequently, \(M \in \SO(2^{n})\).
    \label{prty:weakSO}
\end{property}

\begin{proof}
    By assumption, $B\in\so(n)$, and so $\exp(2B)=R\in\SO(n)$. Taking the $k$th exterior power preserves orthogonality and a determinant of $1$. For each $k$, the $k$th compound matrix $M_k = C_k(R)$ is the matrix representation of the $k$th exterior power $\bigwedge^k(R)$ with respect to the standard basis of $k$-vectors. Since $R$ is orthogonal, $\bigwedge^k(R)$ preserves the induced inner product on $k$-vectors, making $M_k$ orthogonal. Moreover, since $\det(R) = 1$, we have $\det(M_k) = \det\left(\bigwedge^k(R)\right) = (\det(R))^{\binom{n-1}{k-1}} = 1$. Therefore, $M_k \in SO\left(\binom{n}{k}\right)$ for all $0 \leq k \leq n$.
    To prove $M\in\SO(2^n)$, we note that since $M$ is block diagonal, $M^{T} = \diag\left(M_0^T,\dots,M_n^T\right)$ and so $M^TM=I$ as the $M_k$ are special orthogonal. $\det(M)=1$ as the determinant of block diagonal matrices is the product of the determinant of the blocks, meaning $\det(M) = \prod_{k=0}^n \text{det}(M_k) = \prod_{k=0}^n 1 = 1$.
    
\end{proof}

Note that under this view, we obtain a map from $\so(n)$ to $\SO(2^n)$ by first applying the surjective exponential map from $\so(n)$ to $\SO(n)$, and then lifting $\SO(n)$ to $\SO(2^n)$ via the exterior powers. This transformation enables an \textit{exponential reduction in the number of parameters required to represent rotor layers}. We conclude by identifying the class of matrices which $M$ belongs. The following is a simple but useful characterization. The group of invertible elements of $\Cl(n)$ is 
$$\Cl^\times(n) = \left\{a\in\Cl(n) | \exists a^{-1}\in\Cl(n) \text{ st. } aa^{-1} = a^{-1}a = 1\right\}.$$
Let 
$$\phi: \Cl^\times(n) \rightarrow \mathrm{GL}(2^n, \rr)$$ be the mapping which assigns to each $a\in\Cl^\times(n)$ its change-of-basis matrix $N_a$. Then for any $N_r$ corresponding to $r\in\Spin(n)$,
    \[N_r\in\phi(\Spin(n)).\]

We now give a more refined version of Prop.~$\ref{prty:weakSO}$.

\begin{property} \label{prty:strongSO}
    Let $\phi$ be defined as above. Then,
    \[\phi(\Spin(n)) = \SO\left(2^n\right) \cap \phi\left(\Cl^\times(n)\right) \cap \left( \bigoplus_{k=0}^n \mathbb{R}^{\binom{n}{k} \times \binom{n}{k}} \right).\]
\end{property}
\begin{proof}

The first direction, $\phi(\Spin(n)) \subseteq \SO(2^n) \cap \phi\left(\Cl^{\times}(n)\right) \cap \left( \bigoplus_{k=0}^n \mathbb{R}^{\binom{n}{k} \times \binom{n}{k}} \right)$, is immediate. If $r \in \Spin(n)$, then by definition $r$ is an invertible element in $\Cl^+(n)$
which satisfies $rr^\dagger = 1_s$ (where $1_s$ denotes the scalar identity). Thus, $\Spin(n) \subseteq \Cl^{\times}(n)$, which implies $\phi(\Spin(n)) \subseteq \phi\left(\Cl^{\times}(n)\right)$.
Furthermore, Prop.~$\ref{prty:weakSO}$ and Thm~$\ref{thm:matEquiv}$ state $N_r \in \SO(2^n)$ for $r \in \Spin(n)$, which ensures $\phi(\Spin(n)) \subseteq \SO(2^n)$. As rotors are grade preserving, $\phi(\Spin(n)) \subseteq \left( \bigoplus_{k=0}^n \mathbb{R}^{\binom{n}{k} \times \binom{n}{k}} \right)$.

For the other direction $\SO(2^n) \cap \phi\left(\Cl^{\times}(n)\right) \cap \left( \bigoplus_{k=0}^n \mathbb{R}^{\binom{n}{k} \times \binom{n}{k}} \right) \subseteq \phi(\Spin(n))$, we start by taking an arbitrary element from the set on the LHS and
show it must also be in the set on the RHS. Thus, we assume $g \in \Cl^{\times}(n)$ (an invertible multivector) such that its corresponding matrix $N_g = \phi(g)$ is in $\SO(2^n)$.
We wish to show $g \in \Spin(n)$. To do this, it suffices to prove three conditions:
$gg^\dagger = 1_s$ (i.e., $g$ is unit-norm), $g \in \Cl^+(n)$ (i.e., $g$ is an even multivector), and $g$ is grade preserving.

The matrix $N_g$ represents the linear transformation $T_g(X) = gXg^\dagger$. The condition $N_g \in \SO(2^n)$ means that $N_g$ is an orthogonal matrix with $\mathrm{det}(N_g)=1_s$.
We know that due to orthogonality, $T_g$ preserves the canonical inner product on $\Cl(n)$. We define this inner product as $\langle X, Y \rangle = \left\langle X^\dagger Y \right\rangle_0$,
where $\langle \cdot \rangle_0$ denotes the scalar part of a multivector.

We will first show that $g^\dagger g = 1_s$. Since $T_g$ is an orthogonal transformation w.r.t. $\langle X, Y \rangle = \left\langle X^\dagger Y \right\rangle_0$, it must satisfy
$T_g^* T_g = \Id$, where $\Id$ is the identity map and $T_g^*$ is the adjoint of $T_g$ w.r.t. the inner product.
The adjoint $T_g^*$ is given by $T_g^*(Y) = g^\dagger Y g$. This can be verified by checking the main property of an adjoint, $\left\langle Y, T_g(X) \right\rangle = \left\langle T_g^*(Y), X \right\rangle$:
\begin{align*}
\left\langle Y, T_g(X) \right\rangle &= \left\langle Y^\dagger \left(gXg^\dagger\right) \right\rangle_0 \\
&= \left\langle \left(g^\dagger Y^\dagger g\right) X \right\rangle_0 \quad \text{(cyclically permute multivectors inside} \\
& \hspace{8.5em}\text{scalar part operation, } \langle MNP \rangle_0 = \langle PMN \rangle_0 \text{)}\\
&= \left\langle \left(g^\dagger Y g\right)^\dagger X \right\rangle_0 \quad \text{(since } (ABC)^\dagger = C^\dagger B^\dagger A^\dagger \text{ and } \left(A^\dagger\right)^\dagger=A \text{)} \\
&= \left\langle T_g^*(Y), X \right\rangle.
\end{align*}
Now, applying the condition $T_g^* T_g = \Id$ means $T_g^*\left(T_g(X)\right) = X$ for all $X \in \Cl(n)$.
Substituting the explicit forms for $T_g$ and $T_g^*$, we get
\begin{align*}
g^\dagger \left(gXg^\dagger\right) g &= X \\
\left(g^\dagger g\right) X \left(g^\dagger g\right) &= X.
\end{align*}
Let $A = g^\dagger g$. The equation becomes $AXA = X$ for all $X \in \Cl(n)$.
By the Wedderburn-Artin theorem \citet{cohn2003basic}, the Clifford algebra $\Cl(n,0)$
is isomorphic to a matrix algebra (or a direct sum of two such algebras) over $\mathbb{R}$, $\mathbb{C}$, or $\mathbb{H}$.
In such matrix algebras (and hence in $\Cl(n,0)$ or its relevant simple components where $A$ lives, noting $A$ is even), if an element $A$ satisfies $AXA = X$
for all elements $X$ of the algebra, then $A$ must be a scalar multiple of the identity, specifically $\pm 1_s$.
Therefore, we must have $A = g^\dagger g = \pm 1_s$.

Fix an orthonormal basis ${e_J}$ (where $J$ is an ordered multi-index). Note that $e_J^\dagger e_J = 1_s$ for every $e_J$. Write $g = \sum_J c_J e_J$ for scalar coefficients $c_J \in \mathbb{R}$. Then, the scalar part
of $g^\dagger g$ is $\left\langle g^\dagger g \right\rangle_0 = \sum_J c_J^2$. Since $g$ is invertible, we have $g \neq 0$, so at least one $c_J \neq 0$, making this sum strictly positive.
Since $g^\dagger g = \lambda \cdot 1_s$ (where $\lambda = \pm 1_s$), its scalar part is $\lambda$. As this scalar part $\left\langle g^\dagger g \right\rangle_0$ must be positive,
we must conclude that $\lambda=1_s$, so $g^\dagger g = 1_s$.

Next, we will show that $gg^\dagger = 1_s$. From the previous result, $g^\dagger g = 1_s$. This means $g$ is invertible and its unique inverse is $g^{-1} = g^\dagger$.
Then, we can write $gg^\dagger = gg^{-1} = 1_s$. This helps us establish the first condition required for $g$ to be an element of $\Spin(n)$.

The block-diagonal structure of $\bigoplus_{k=0}^n \mathbb{R}^{\binom{n}{k} \times \binom{n}{k}}$ ensures that the matrix representation $\phi(g)$ acts independently on each grade subspace $\text{Cl}^k(n)$. Consequently, the induced inner automorphism $T_g(X) = gXg^\dagger$ is grade-preserving, mapping each $k$-vector space to itself.

Finally, we will need to show that $g \in \Cl^+(n)$ (i.e., $g$ is an even multivector). Since $N_g \in \SO(2^n)$, we have $\mathrm{det}(N_g)=1_s$.
Since $gg^\dagger=1_s$, it follows that $g^\dagger=g^{-1}$. The transformation can now be written as $T_g(X) = gXg^{-1}$, which is an inner automorphism of $\Cl(n)$ induced by $g$; such transformations are algebra automorphisms, meaning that they preserve the algebraic product structure (i.e., $T_g(XY) = T_g(X)T_g(Y)$).
Since $g$ preserves grade (and in particular preserves the vector subspace $\Cl^1(n)$), the element $g$ can be expressed as a product of invertible vectors. Combined with $gg^\dagger = 1_s$, this means $g$ is a unit versor, i.e., $g \in \mathrm{Pin}(n)$. Versors who are products of an odd number of vectors (odd versors) are elements of $\mathrm{Pin}(n) \setminus \Spin(n)$, while
those that are products of an even number of vectors (even versors) are elements of $\Spin(n)$. Note that $\mathrm{Pin}(n)$ is the group of all such versors, and $\Spin(n) = \mathrm{Pin}(n) \cap \Cl^+(n)$.

We know that if $g$ were indeed an odd versor (an element of $\mathrm{Pin}(n) \setminus \Spin(n)$), the induced inner automorphism $X \mapsto gXg^{-1}$ when restricted to the vector subspace $\Cl^1(n)$ is an orthogonal transformation with determinant $-1_s$ (as it includes a reflection, reversing orientation). It is a known result from the representation theory of Clifford algebras that the determinant of the full automorphism $N_g: X \mapsto gXg^{-1}$ acting on the entire algebra $\Cl(n)$ corresponds to the parity of the versor $g$. Specifically, $\mathrm{det}(N_g)=+1_s$ if $g$ is an even versor (i.e., $g \in \Spin(n)$), and $\mathrm{det}(N_g)=-1_s$ if $g$ is an odd versor (i.e., $g \in \mathrm{Pin}(n) \setminus \Spin(n)$). This directly links the parity of the versor $g$ to the determinant of the full transformation matrix $N_g$. Since we know $\mathrm{det}\left(N_g\right)=1_s$, $g$ must be an even versor, i.e., $g \in \Cl^+(n)$.

Since we have shown $gg^\dagger = 1_s$, $g \in \Cl^+(n)$, and that $g$ is grade preserving, we conclude that $g \in \Spin(n)$ by its definition. Thus, $\SO(2^n) \cap \phi\left(\Cl^{\times}(n)\right) \cap \left( \bigoplus_{k=0}^n \mathbb{R}^{\binom{n}{k} \times \binom{n}{k}} \right) \subseteq \phi(\Spin(n))$.
\end{proof}

Prop. \ref{prty:strongSO} provides an alternative characterization of the group of matrices associated with rotor conjugation, i.e., $N_r$ lives in the intersection of the special orthogonal group, the subgroup of matrices defined by the group of units of $\Cl(n)$, and those that are grade preserving.
\section{Hyperparameters and Experiment Details}
\label{apdx:hyper-parameters}

We detail the training configurations for all experiments involving Rotor, Low-Rank (LR), and Block-Hadamard (BH) projections used to approximate attention layers.

\subsection{Training Details}

\paragraph{Learning approximation layers in attention.}
To approximate the linear projections in attention layers of an LLM, we train all replacement modules (Rotor, LR, or BH) by minimizing mean squared error (MSE) loss between the predicted and true projection outputs, based on latent representations extracted from LLMs. 

Formally, let $W \in  \rr^{d_{\text{out}} \times d_{\text{in}}}$ be the dense projection matrix (i.e., query, key, or value) we want to approximate within a transformer block. Given hidden input $x \in \rr^{d_{\text{in}}}$, the projection computes $y = Wx \in \rr^{d_{\text{out}}}$.  To train an approximate layer, we collect a dataset $\mathcal{D} = \{(x_i, y_i)\}_{i=1}^N$ by prompting the LLM with a set of prompts $\{P_j\}_{j \le n}$ (e.g., \texttt{Arc Challenge}) and extracting the relevant hidden states at the target layer. Since hidden states are available for each token position in a sequence for self-attention, we have $N = nT$, where $T$ is the average prompt length. We then learn an approximate layer $H_{\theta}$ (rotors, LR, or BH) by minimizing
\[
    \min_{\theta} \sum_{i=1}^{N} (H_{\theta}x_i - y_i)^2,
\]
where $\theta$ denotes the set of trainable parameters (e.g., bivector coefficients for rotors). Optimization is done via gradient descent using the Adam optimizer \citep{kingma2017adammethodstochasticoptimization}. 

We jointly replace the query, key, and value projections within an attention layer of a transformer block. In our experiments, we replace up to three such attention layers and evaluate the resulting model on downstream tasks of perplexity and accuracy metrics across various prompt datasets. When replacing multiple attention layers, say layers $I < J < K$, we train them sequentially in order: first $I$, then $J$, and finally $K$. For each layer, we first replace all earlier trained layers (e.g., $I$ before $J$), and then extract the input-output data for training the new layer under this modified model. This is to ensure that each replacement layer is trained with respect to the distribution induced by preceding replacements. Also, whenever we replace layer $L$, we retrain the output linear projection $W_o^{L}$ within the same attention block, using the same MSE and Adam optimizer for consistency.

\begin{table}[ht]
\centering
\footnotesize
\setlength{\tabcolsep}{5pt}
\renewcommand{\arraystretch}{1.1}
\begin{tabular}{lccc}
  \toprule
  \textbf{Model} & \textbf{Key} & \textbf{Query} & \textbf{Value} \\
  \midrule
  \texttt{LLaMa-3.2 1B} 
    & $2048 \rightarrow 512$ 
    & $2048 \rightarrow 2048$ 
    & $2048 \rightarrow 512$ \\
  \texttt{LLaMa-3.2 3B} 
    & $3072 \rightarrow 1024$ 
    & $3072 \rightarrow 3072$ 
    & $3072 \rightarrow 1024$ \\
  \texttt{Qwen-2.5 1.5B} 
    & $1536 \rightarrow 256$ 
    & $1536 \rightarrow 1536$ 
    & $1536 \rightarrow 256$ \\
  \texttt{Fox-1.0 1.6B} 
    & $2048 \rightarrow 512$ 
    & $2048 \rightarrow 2048$ 
    & $2048 \rightarrow 512$ \\
  \bottomrule
\end{tabular}
\vspace{6pt}
\caption{Input/output hidden dimensions for key, query, and value projections in a single attention layer of different LLMs.}
\label{tab:qkv-shapes}
\end{table}

In Tab. \ref{tab:qkv-shapes}, we summarize the input and output hidden dimension for different LLMs. 

\paragraph{Rotor networks.}
We define the rotor-based transformation as:
\begin{equation*}
    \psi_{r,s}(x) \triangleq r x s^\dagger,
    \label{eq:rotor-transformation-appdx}
\end{equation*}
where $r,s\in\Spin(n)$. Each rotor map is then composed as
\begin{equation*}
    \psi(x) \triangleq \sigma\left(\left\{\psi_{r_{ij},s_{ij}}\left(x^{I_i}\right) | \; i\in[c_1], j\in[c_2]\right\}\right),
    \label{eq:rotor-appdx}
\end{equation*}
where $\sigma$ is a pooling operator applied over the outputs of individual rotor transformation $\psi_{r_{ij},s_{ij}}$. More details are given in Subsection $\ref{subsc:genRotorGadget}$. As discussed, each rotor is parameterized by a small number of bivector coefficients that encode geometric rotations, leading to significantly fewer learnable parameters compared to dense or baseline layers. A rotor layer is constructed by stacking multiple rotor maps in depth and arranging them in parallel across width. For example, with width 2 and depth 3, the layer contains 6 rotor maps, each parameterized independently. In our experiments, to reduce computational cost, we use at most width 2 and depth 3 for a rotor layer, which along with any up and down projection needed that is done by the generalized version in $\ref{subsc:genRotorGadget}$ sums up to roughly 1000 scalar parameters per projection (i.e., per Q/K/V) layer for \texttt{LLaMa-3.2 1B}, compared to $1-4$M in original dense layers. 

Each rotor map is followed by a sequence of fixed (i.e., parameter-free) permutations, normalizations, and a nonlinearity. Since rotor sandwich products are grade-preserving (see Section~\ref{apdx:proofs}), we apply  fixed permutations to enable interaction across grades and increase expressivity. We found that normalization improves training stability. 

Hyperparameters such as depth, width, learning rate, and weight decay are selected via grid search; the final values along with the values we explored are listed in Tab. $\ref{tab:hyperparam}$. All Clifford algebraic operations, including exponentiation of simple bivectors and sandwich products, are implemented entirely in PyTorch using the \texttt{torch\_ga} library \citet{torch_ga}, which supports differentiation. We modified several methods in this package to reduce memory usage. For example, the original package computes the geometric product using a very sparse three dimensional Cayley table of shape $2^n\times 2^n\times 2^n$ \citep{hitzer2013introduction}. However, since we only require the geometric product between a pure rotor and a multivector when computing the sandwich product, we discard all but $1 + \binom{n}{2}$ parts of the third dimension, rather than keeping the full $2^n$.

\begin{table}[t]
\centering
\footnotesize
\setlength{\tabcolsep}{8pt}
\renewcommand{\arraystretch}{1.2}
\begin{tabular}{llcc}
  \toprule
   Method & Hyperparameter & Values Explored & Final Value \\
  \midrule
  \midrule
  \multirow{10}{*}{\texttt{Rotor}}
    & Chunk Size   & 1024, 2048, 4096 & 2048 \\
    & Depth        & 1,2,3 & 1 or 3 \\
    & Width        & 1,2,3 & 1 or 2 \\
    & Nonlinearity          & \texttt{ReLU}, \texttt{PReLU}, \texttt{GELU} & \texttt{PReLU} \\
    & Normalization           & \texttt{true}, \texttt{false} & \texttt{true} \\
    & Permutations          & \texttt{true}, \texttt{false} & \texttt{true} \\
    & Learning rate & $0.001, 0.005, 0.01, 0.05$ & 0.05\\
    & $\ell^2$ weight decay & 0.001, 0.01, 0.1, 0 & 0\\
    & Batch size & 16, 32, 64, 128, 256 & 64\\
    & Cosine annealing  & \texttt{true}, \texttt{false} & \texttt{true} \\
  \midrule
  \multirow{2}{*}{\texttt{Low-Rank (LR)}} 
    & Learning rate & $0.001, 0.005, 0.01, 0.05$ & 0.01\\
    & $\ell^2$ weight decay & 0.001, 0.01, 0.1, 0 & 0\\
    & Batch size & 16, 32, 64, 128, 256 & 256\\
    & Cosine annealing  & \texttt{true}, \texttt{false} & \texttt{true} \\
  \midrule
  \multirow{3}{*}{\texttt{Block-Hadamard (BH)}} 
    & Block number                 & $64$ & 64\\
    & Learning rate & $0.001, 0.005, 0.01, 0.05$ & 0.01\\
    & $\ell^2$ weight decay & 0.001, 0.01, 0.1, 0 & 0\\
    & Batch size & 16, 32, 64, 128, 256 & 256\\
    & Cosine annealing  & \texttt{true}, \texttt{false} & \texttt{true} \\
  \bottomrule
\end{tabular}
\vspace{6pt}
\caption{Hyperparameter settings used for each method. }
\label{tab:hyperparam}
\end{table}

\paragraph{Low-rank approximations.}
We use low-rank (LR) approximation as one of our baselines, following prior work such as \citet{hu2021loralowrankadaptationlarge}. Given a dense matrix \( W \in \mathbb{R}^{d_{\text{out}} \times d_{\text{in}}} \), we approximate it as the product of two lower-dimensional matrices: \( X \in \mathbb{R}^{d_{\text{out}} \times r} \) and \( Y \in \mathbb{R}^{r \times d_{\text{in}}} \), such that
\[
    W \approx XY,
\]
where $r \ll d_{\text{in}}, d_{\text{out}}$. This decomposition effectively constrains the rank of the approximation to at most \( r \), capturing a low-rank subspace of the original operator. Low-rank approximations have been shown to be effective in downstream tasks, especially when applied as additive fine-tuning modules to frozen large pre-trained weights. They are also computationally efficient, requiring only \( \mathcal{O}(r(d_{\text{out}} + d_{\text{in}})) \) parameters and operations. In our experiments, we choose $r = 1$ and $r=4$, and LR1 requires roughly $3-5 \times$ parameters than rotor layers. Hyperparameters are listed in Tab.~\ref{tab:hyperparam}.

\paragraph{Block-Hadamard approximations.}
We adopt the Block-Hadamard (BH) projection as another baseline. The idea is to alternate Hadamard transforms with learnable block-diagonal matrices, enabling a trade-off between expressivity and efficiency through the block size~\citep{zeng2023lookupffn}. Formally, it is defined as
\[
    W \approx \prod_{i=1}^{m} B_i H,
\]
where each \( B_i \) is a learnable block-diagonal matrix with block size \( b \), and \( H \) is a fixed Hadamard transform. This approximation requires only \( \mathcal{O}(b \cdot \max(d_{\text{in}}, d_{\text{out}})) \) parameters, and can be interpreted as analogous to grouped convolutions followed by channel shuffling~\citep{zhang2017shufflenetextremelyefficientconvolutional}. The parameter \( m \) controls the depth of the transformation. In our experiments, we use \( m = 1 \) (i.e., BH1) to reduce parameter count as much as possible and match the scale of rotor-based models. Even with this minimal configuration, BH1 has 8-40$\times$ more parameters than rotors in the case of \texttt{LLaMa-3.2 1B}. Specifically, we use the approximation \( W \approx BH \), where \( B \) is a block-diagonal matrix composed of \( n \) rectangular blocks, each of height \( h = d_{\text{out}} / n \) and width \( w = d_{\text{in}} / n \) for the number $n$ of blocks. Hyperparameters are listed in Tab.~\ref{tab:hyperparam}.

\paragraph{Hyperparameters and model architecture of \texttt{FMNIST} experiments.}
We trained both dense and rotor-based MLPs under identical conditions. Each model consists of $2$ hidden layers (either rotor or dense), followed by ReLU activations. All dense layers were replaced with rotor layers in our rotor variant, except for the final classification head. The dense layers had hidden dimension of $512$, while the rotor layers used $\Cl(15)$ with width $3$ and depth $1$.  We performed a search over learning rates $\eta \in (0.001, 0.1)$ and selected $\eta = 0.005$ for the rotor-based model and $\eta = 0.002$ for the dense baseline based on validation accuracy.

\subsection{Computational Resources for Experiments}
The experiments for each \texttt{LLaMa-3.2 1B} and \texttt{Qwen-2.5 1.5B} each took around $1500$ GPU hours, \texttt{Fox-1.0 1.6B} around $1000$ GPU hours, and \texttt{LLaMa-3.2 3B} around $500$ GPU hours for a total of around $4500$ GPU hours. This was spread across $8$ NVIDIA A$100$ PCIe GPUs with $40$ GBs of HBM$2$ memory. Total time of execution was around $3$ weeks of run-time across all $8$ GPUs.

{\em A note on the execution time of the rotor gadget}. 
We have not optimized the current code for speed or memory. At inference time, rotor layers are implemented as dense matrix multiplications. Therefore, its runtime scales with the depth of the rotor layer used in the replacement. In our experiments, we used hyperparameters of at most depth $2$ and width $3$; with an optimized implementation, this will result in at most $2\times$ slowdown, since width is trivially parallelizable. 

It is important to note that this corresponds to a direct implementation---we currently do not leverage the block sparsity structure described in Section~\ref{apdx:techAnalysis}, as current software support is quite limited. Custom kernels 
have recently become available to \textit{batch} matrix-matrix multiplication of different dimensions, such as those available in \texttt{cublasGemmGroupedBatchedEx}, but our setting requires vector-matrix multiplication. We found that this is slower than performing the dense implementation, though performance is expected to improve significantly once specialized vector-matrix versions become available. While custom kernels such as \texttt{cublasSgemmStridedBatched} support batching matrix-matrix multiplication of the {\em same} dimension, their adaptation to rotors remains limited: even when applied to the parallel portions of the rotor gadget, they must operate sequentially on each block of the block sparse matrices. As a result, this gives a modest speedup, and there remains substantial room for further optimization.
\section{Additional Experimental Results}
\label{apdx:additional_exp}

\begin{figure}[t]
  \centering
\begin{tikzpicture}
  \begin{axis}[
    width=\textwidth,
    height=0.5\textwidth,
    xlabel={\# Gradient Update Steps},
    ylabel={\# Iterations to Converge},
    ymin=1, ymax=8,
    xtick={2,4,6,8,10},
    ytick={2,4,6,8},
    mark size=2pt,  
    grid=both,
    tick label style={font=\normalsize},
    label style={font=\normalsize},
    legend style={
      at={(0.5,1.05)},
      anchor=south,
      font=\large,
      draw=none,
      fill=none,
      legend columns=4,
      column sep=12pt,
    },
    error bars/y dir=both,
    error bars/y explicit,
    line width=0.3pt,
    axis line style={line width=0.3pt, gray},
  ]

\addplot+[blue!0!red, mark=*, line width=1.4pt, opacity=0.5, error bars/.cd, y dir=both, y explicit]
coordinates {
  (1,4.340) +- (0,0.501)
  (2,4.170) +- (0,0.362)
  (3,4.020) +- (0,0.279)
  (4,3.790) +- (0,0.232)
  (5,3.820) +- (0,0.264)
  (6,3.760) +- (0,0.325)
  (7,3.580) +- (0,0.365)
  (8,3.220) +- (0,0.339)
  (9,2.690) +- (0,0.259)
  (10,1.660) +- (0,0.138)
};\addlegendentry{\texttt{dim=64}}

\addplot+[blue!20!red, mark=square*, line width=1.4pt, opacity=0.5,error bars/.cd, y dir=both, y explicit]
coordinates {
  (1,4.370) +- (0,0.224)
  (2,4.210) +- (0,0.206)
  (3,4.150) +- (0,0.195)
  (4,4.130) +- (0,0.189)
  (5,4.000) +- (0,0.196)
  (6,3.890) +- (0,0.194)
  (7,3.600) +- (0,0.180)
  (8,3.240) +- (0,0.155)
  (9,2.610) +- (0,0.108)
  (10,1.590) +- (0,0.059)
};\addlegendentry{\texttt{dim=128}}

\addplot+[blue!40!red, mark=triangle*, line width=1.4pt, opacity=0.5,error bars/.cd, y dir=both, y explicit]
coordinates {
  (1,4.460) +- (0,0.194)
  (2,4.433) +- (0,0.201)
  (3,4.420) +- (0,0.239)
  (4,4.573) +- (0,0.328)
  (5,4.480) +- (0,0.310)
  (6,4.260) +- (0,0.259)
  (7,3.827) +- (0,0.223)
  (8,3.400) +- (0,0.178)
  (9,2.673) +- (0,0.125)
  (10,1.740) +- (0,0.062)
};\addlegendentry{\texttt{dim=256}}

\addplot+[blue!50!red, mark=pentagon*, line width=1.4pt, opacity=0.5, error bars/.cd, y dir=both, y explicit]
coordinates {
  (1,5.067) +- (0,0.251)
  (2,5.227) +- (0,0.386)
  (3,5.293) +- (0,0.407)
  (4,5.000) +- (0,0.302)
  (5,4.880) +- (0,0.266)
  (6,5.053) +- (0,0.666)
  (7,4.527) +- (0,0.508)
  (8,3.633) +- (0,0.224)
  (9,2.853) +- (0,0.144)
  (10,1.787) +- (0,0.079)
};\addlegendentry{\texttt{dim=512}}

\addplot+[blue!60!red, mark=diamond*, line width=1.4pt, opacity=0.5, error bars/.cd, y dir=both, y explicit]
coordinates {
  (1,5.030) +- (0,0.194)
  (2,4.865) +- (0,0.171)
  (3,4.770) +- (0,0.167)
  (4,4.715) +- (0,0.184)
  (5,4.565) +- (0,0.186)
  (6,4.335) +- (0,0.184)
  (7,3.920) +- (0,0.168)
  (8,3.395) +- (0,0.137)
  (9,2.705) +- (0,0.094)
  (10,1.810) +- (0,0.048)
};\addlegendentry{\texttt{dim=1024}}

\addplot+[blue!80!red, mark=o, line width=1.4pt, opacity=0.5, solid, error bars/.cd, y dir=both, y explicit]
coordinates {
  (1,6.575) +- (0,0.360)
  (2,6.980) +- (0,0.603)
  (3,6.430) +- (0,0.340)
  (4,6.570) +- (0,0.502)
  (5,6.235) +- (0,0.422)
  (6,5.650) +- (0,0.332)
  (7,5.010) +- (0,0.277)
  (8,4.200) +- (0,0.225)
  (9,3.140) +- (0,0.155)
  (10,1.955) +- (0,0.067)
};\addlegendentry{\texttt{dim=2048}}

\addplot+[blue!100!red, mark=square, line width=1.4pt, opacity=0.5, solid, error bars/.cd, y dir=both, y explicit]
coordinates {
  (1,6.128) +- (0,0.266)
  (2,6.328) +- (0,0.363)
  (3,6.200) +- (0,0.302)
  (4,6.140) +- (0,0.356)
  (5,6.024) +- (0,0.397)
  (6,5.644) +- (0,0.381)
  (7,5.224) +- (0,0.462)
  (8,4.444) +- (0,0.428)
  (9,3.380) +- (0,0.346)
  (10,2.028) +- (0,0.103)
};\addlegendentry{\texttt{dim=4096}}

  \end{axis}
\end{tikzpicture}
\caption{Number of iterations required by Alg~\ref{algo:ga_poweriter} to converge within a tolerance of $\epsilon = 10^{-3}$, plotted against the number of gradient updates applied to the parameters of rotors (i.e., bivector coefficients).  Results are averaged over 50 runs and simple bivectors in the invariant decomposition, with one standard deviation shown as error bars. The results demonstrate that warm-starting with previously learned singular vectors significantly accelerates convergence.}
\label{fig:projection-error}
\end{figure}
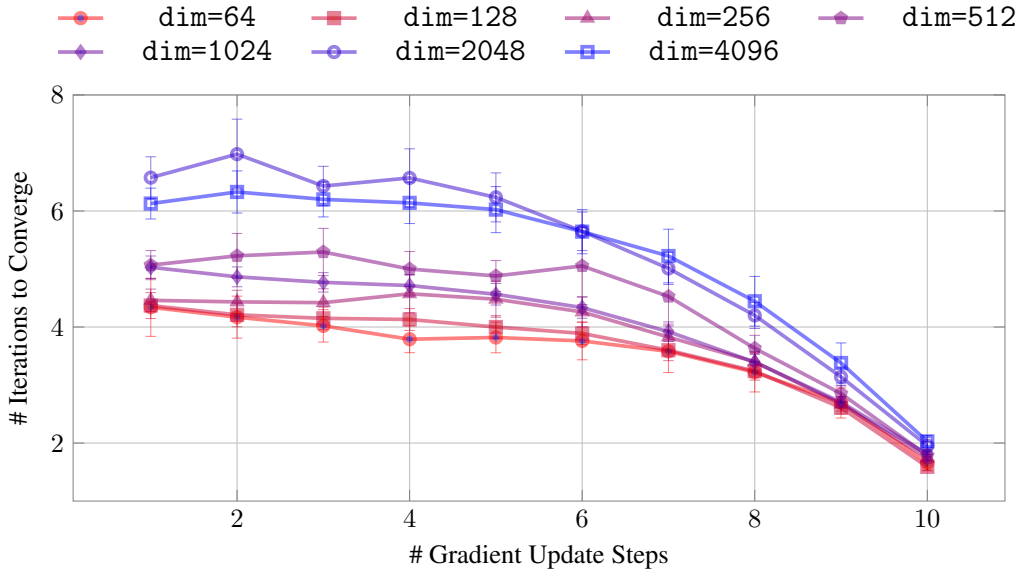
\begin{table*}[t]
\centering
\small
\setlength{\tabcolsep}{10pt}
\begin{tabular}{llcccc}
  \toprule
  Model & Method & \textbf{Key} & \textbf{Query} & \textbf{Value} & Total \\
  \midrule
  \multirow{5}{*}{\texttt{LLaMa-3.2 1B}}
      & Dense & 1,048,576 & 4,194,304 & 1,048,576 & 6,291,456 \\
      & LR1   & 2,560 & 4,096 & 2,560 & 9,216 \\
      & LR4   & 10,240 & 16,384 & 10,240 & 36,864 \\
      & BH1   & 8,192 & 32,768 & 8,192 & 49,152 \\
      & \cellcolor{gray!30}Rotor 
        & \cellcolor{gray!30}$\le$1,080 & \cellcolor{gray!30}$\le$896 & \cellcolor{gray!30}$\le$1,080 & \cellcolor{gray!30}$\le$3,056 \\
  \midrule
  \multirow{5}{*}{\texttt{LLaMa-3.2 3B}}
      & Dense & 3,145,728 & 9,437,184 & 3,145,728 & 15,728,640 \\
      & LR1   & 4,098 & 6,148 & 4,098 & 14,344 \\
      & LR4   & 16,384 & 24,576 & 16,384 & 57,344 \\
      & BH1   & 32,768 & 98,304 & 32,768 & 163,840 \\
      & \cellcolor{gray!30}Rotor 
        & \cellcolor{gray!30}$\le$1,080 & \cellcolor{gray!30}$\le$1,120 & \cellcolor{gray!30}$\le$1,080 & \cellcolor{gray!30}$\le$3,280 \\
  \midrule
  \multirow{5}{*}{\texttt{Qwen-2.5 1.5B}}
      & Dense & 393,216 & 2,359,296 & 393,216 & 3,145,728 \\
      & LR1   & 1,792 & 3,072 & 1,792 & 6,656 \\
      & LR4   & 7,168 & 12,288 & 7,168 & 26,624 \\
      & BH1   & 4,096 & 24,576 & 4,096 & 32,768 \\
      & \cellcolor{gray!30}Rotor 
        & \cellcolor{gray!30}$\le$904 & \cellcolor{gray!30}$\le$896 & \cellcolor{gray!30}$\le$904 & \cellcolor{gray!30}$\le$2,704 \\
  \midrule
  \multirow{5}{*}{\texttt{Fox-1.0 1.6B}}
      & Dense & 1,048,576 & 4,194,304 & 1,048,576 & 6,291,456 \\
      & LR1   & 2,560 & 4,096 & 2,560 & 9,216 \\
      & LR4   & 10,240 & 16,384 & 10,240 & 36,864 \\
      & BH1   & 8,192 & 32,768 & 8,192 & 49,152 \\
      & \cellcolor{gray!30}Rotor 
        & \cellcolor{gray!30}$\le$1,080 & \cellcolor{gray!30}$\le$896 & \cellcolor{gray!30}$\le$1,080 & \cellcolor{gray!30}$\le$3,056 \\
  \bottomrule
\end{tabular}
\vspace{2pt}
\caption{Number of parameters for key, query, and value projections in a single attention layer of each model, with the rightmost column showing their sum.}
\label{tab:appdx-param-cnt}
\end{table*}
\begin{table*}[t]
\scriptsize
\centering
\setlength{\tabcolsep}{1.8pt}
\renewcommand{\arraystretch}{0.95}
\begin{tabular}{
>{\centering\arraybackslash}m{1.5cm}
>{\centering\arraybackslash}m{1.0cm}
*{14}{>{\centering\arraybackslash}m{0.616cm}<{\fontsize{4}{2.5}\selectfont}}
}
\toprule
\textbf{Dataset} & \textbf{Method} & \multicolumn{14}{c}{\textbf{One layer replaced (Layer index)}} \\
& & \textbf{1} & \textbf{2} & \textbf{3} & \textbf{4} & \textbf{5} & \textbf{6} & \textbf{7}
  & \textbf{8} & \textbf{9} & \textbf{10} & \textbf{11} & \textbf{12} & \textbf{13} & \textbf{14}\\
\midrule
\multirow{4}{*}{Wikitext2 $(\downarrow)$}
& LR1   & 2.669 & 2.330  & 2.351 & 2.354 & 2.332 & 2.316 & 2.350  & 2.340  & 2.326 & 2.315 & 2.381 & 2.311 & 2.514 & 2.306  \\
& LR4   & 2.635 & 2.320  & 2.334 & 2.343 & 2.320  & 2.310  & 2.335 & 2.331 & 2.318 & 2.310  & 2.368 & 2.307 & 2.456 & 2.306    \\
& BH1   & 2.341 & 2.316 & 2.329 & 2.323 & 2.323 & 2.308 & 2.338 & 2.321 & 2.314 & 2.307 & 2.354 & 2.308 & 2.417 & 2.307\\
& \cellcolor{gray!30}Rotor 
        & \cellcolor{gray!30}2.312 & \cellcolor{gray!30}2.299 & \cellcolor{gray!30}2.305
        & \cellcolor{gray!30}2.312 & \cellcolor{gray!30}2.308 & \cellcolor{gray!30}2.298 & \cellcolor{gray!30}2.301
        & \cellcolor{gray!30}2.315 & \cellcolor{gray!30}2.303 & \cellcolor{gray!30}2.300 & \cellcolor{gray!30}2.321
        & \cellcolor{gray!30}2.304 & \cellcolor{gray!30}2.320 & \cellcolor{gray!30}2.301  \\
\noalign{\vskip 1.5pt}
\multirow{4}{*}{C4 $(\downarrow)$}
& LR1   & 3.129 & 2.880 & 2.891 & 2.867 & 2.863 & 2.846 & 2.875 & 2.862 & 2.870 & 2.854 & 2.891 & 2.854 & 2.992 & 2.848 \\
& LR4   & 3.007 & 2.880 & 2.890 & 2.869 & 2.862 & 2.846 & 2.873 & 2.862 & 2.867 & 2.854 & 2.888 & 2.853 & 2.939 & 2.848 \\
& BH1   & 2.943 & 2.872 & 2.878 & 2.864 & 2.856 & 2.844 & 2.870 & 2.858 & 2.863 & 2.849 & 2.879 & 2.851 & 2.925 & 2.847 \\
& \cellcolor{gray!30}Rotor 
        & \cellcolor{gray!30}2.892 & \cellcolor{gray!30}2.863 & \cellcolor{gray!30}2.861
        & \cellcolor{gray!30}2.856 & \cellcolor{gray!30}2.852 & \cellcolor{gray!30}2.841 & \cellcolor{gray!30}2.846
        & \cellcolor{gray!30}2.854 & \cellcolor{gray!30}2.852 & \cellcolor{gray!30}2.848 & \cellcolor{gray!30}2.861
        & \cellcolor{gray!30}2.848 & \cellcolor{gray!30}2.858 & \cellcolor{gray!30}2.845 \\
\noalign{\vskip 1.5pt}
\multirow{4}{*}{PTB $(\downarrow)$}
& LR1   & 3.115 & 3.034 & 3.063 & 3.073 & 3.011 & 3.023 & 3.081 & 3.031 & 3.033 & 3.023 & 3.102 & 3.019 & 3.173 & 3.018 \\
& LR4   & 3.110 & 3.021 & 3.063 & 3.055 & 3.006 & 3.015 & 3.076 & 3.030 & 3.028 & 3.012 & 3.078 & 3.018 & 3.150 & 3.017 \\
& BH1   & 3.091 & 3.007 & 3.046 & 3.044 & 3.004 & 3.007 & 3.076 & 3.020 & 3.022 & 3.007 & 3.064 & 3.015 & 3.101 & 3.012 \\
& \cellcolor{gray!30}Rotor 
        & \cellcolor{gray!30}3.045 & \cellcolor{gray!30}2.998 & \cellcolor{gray!30}3.018
        & \cellcolor{gray!30}3.025 & \cellcolor{gray!30}3.001 & \cellcolor{gray!30}2.998 & \cellcolor{gray!30}3.014
        & \cellcolor{gray!30}3.015 & \cellcolor{gray!30}3.008 & \cellcolor{gray!30}3.003 & \cellcolor{gray!30}3.028
        & \cellcolor{gray!30}3.012 & \cellcolor{gray!30}3.034 & \cellcolor{gray!30}3.009 \\
\midrule
\multirow{4}{*}{\makecell{Arc \\ Challenge $(\uparrow)$}}
& LR1   & 62.23 & 63.95 & 59.23 & 47.21 & 60.52 & 66.52 & 52.36 & 58.37 & 57.51 & 50.21 & 40.34 & 58.80 & 22.32 & 57.51 \\
& LR4   & 59.66 & 62.66 & 64.81 & 54.94 & 60.94 & 66.09 & 56.22 & 62.66 & 66.09 & 58.37 & 55.79 & 61.37 & 9.01  & 58.80 \\
& BH1   & 63.52 & 64.38 & 58.80 & 60.04 & 62.23 & 63.09 & 62.66 & 63.52 & 65.24 & 60.94 & 58.88 & 59.66 & 53.22 & 50.21 \\
& \cellcolor{gray!30}Rotor 
        & \cellcolor{gray!30}64.81 & \cellcolor{gray!30}67.38 & \cellcolor{gray!30}65.61
        & \cellcolor{gray!30}62.23 & \cellcolor{gray!30}62.66 & \cellcolor{gray!30}64.38 & \cellcolor{gray!30}61.80
        & \cellcolor{gray!30}63.95 & \cellcolor{gray!30}65.24 & \cellcolor{gray!30}66.09 & \cellcolor{gray!30}62.23
        & \cellcolor{gray!30}64.81 & \cellcolor{gray!30}60.09 & \cellcolor{gray!30}56.22 \\
\multirow{4}{*}{HellaSwag $(\uparrow)$}
& LR1   & 36.00 & 47.00 & 45.00 & 46.67 & 50.00 & 51.00 & 50.33 & 44.00 & 17.33 & 36.00 & 46.00 & 52.67 & 31.33 & 20.00 \\
& LR4   & 38.00 & 54.00 & 46.33 & 48.00 & 51.00 & 55.00 & 56.33 & 51.00 & 26.00 & 43.67 & 49.00 & 54.33 & 31.33 & 20.00 \\
& BH1   & 45.67 & 48.00 & 46.00 & 48.33 & 48.67 & 50.67 & 51.67 & 52.33 & 38.33 & 44.67 & 48.00 & 51.33 & 24.33 & 32.67 \\
& \cellcolor{gray!30}Rotor 
        & \cellcolor{gray!30}50.67 & \cellcolor{gray!30}53.00 & \cellcolor{gray!30}53.67
        & \cellcolor{gray!30}52.33 & \cellcolor{gray!30}52.00 & \cellcolor{gray!30}55.33 & \cellcolor{gray!30}57.00
        & \cellcolor{gray!30}54.67 & \cellcolor{gray!30}48.67 & \cellcolor{gray!30}55.33 & \cellcolor{gray!30}51.00
        & \cellcolor{gray!30}57.00 & \cellcolor{gray!30}51.33 & \cellcolor{gray!30}32.00 \\
\bottomrule
\end{tabular}

\vspace{5pt}

\begin{tabular}{
>{\centering\arraybackslash}m{1.5cm}
>{\centering\arraybackslash}m{1.0cm}
*{13}{>{\centering\arraybackslash}m{0.616cm}<{\fontsize{4}{2.5}\selectfont}}
}
\toprule
\textbf{Dataset} & \textbf{Method} & \multicolumn{13}{c}{\textbf{One layer replaced (Layer index)}} \\
& & \textbf{15} & \textbf{16} & \textbf{17} & \textbf{18} & \textbf{19} & \textbf{20} & \textbf{21}
  & \textbf{22} & \textbf{23} & \textbf{24} & \textbf{25} & \textbf{26} & \textbf{27}\\
\midrule
\multirow{4}{*}{Wikitext2 $(\downarrow)$}
& LR1   & 2.317 & 2.304 & 2.314 & 2.317 & 2.323 & 2.316 & 2.331 & 2.316 & 2.309 & 2.301 & 2.409 & 2.300 & 2.316\\
& LR4  & 2.317 & 2.304 & 2.314 & 2.317 & 2.323 & 2.316 & 2.331 & 2.316 & 2.309 & 2.301 & 2.409 & 2.300 & 2.316 \\
& BH1   & 2.313 & 2.301 & 2.314 & 2.316 & 2.320 & 2.311 & 2.311 & 2.315 & 2.309 & 2.300 & 2.398 & 2.297 & 2.309  \\
& \cellcolor{gray!30}Rotor 
        & \cellcolor{gray!30}2.307 & \cellcolor{gray!30}2.299 & \cellcolor{gray!30}2.302 & \cellcolor{gray!30}2.309
        & \cellcolor{gray!30}2.308 & \cellcolor{gray!30}2.307 & \cellcolor{gray!30}2.306 & \cellcolor{gray!30}2.311
        & \cellcolor{gray!30}2.304 & \cellcolor{gray!30}2.296 & \cellcolor{gray!30}2.358 & \cellcolor{gray!30}2.295
        & \cellcolor{gray!30}2.308  \\
\noalign{\vskip 1.5pt}
\multirow{4}{*}{C4 $(\downarrow)$}
& LR1   & 2.866 & 2.854 & 2.900 & 2.860 & 2.869 & 2.866 & 2.898 & 2.883 & 2.861 & 2.866 & 2.872 & 2.854 & 2.860\\
& LR4   & 2.865 & 2.851 & 2.899 & 2.859 & 2.868 & 2.866 & 2.889 & 2.866 & 2.856 & 2.851 & 2.858 & 2.853 & 2.858\\
& BH1   & 2.862 & 2.847 & 2.856 & 2.858 & 2.867 & 2.862 & 2.872 & 2.857 & 2.854 & 2.849 & 2.852 & 2.850 & 2.853 \\
& \cellcolor{gray!30}Rotor 
        & \cellcolor{gray!30}2.855 & \cellcolor{gray!30}2.844 & \cellcolor{gray!30}2.849
        & \cellcolor{gray!30}2.851 & \cellcolor{gray!30}2.863 & \cellcolor{gray!30}2.858 & \cellcolor{gray!30}2.862
        & \cellcolor{gray!30}2.852 & \cellcolor{gray!30}2.851 & \cellcolor{gray!30}2.846 & \cellcolor{gray!30}2.851
        & \cellcolor{gray!30}2.849 & \cellcolor{gray!30}2.853 \\
\noalign{\vskip 1.5pt}
\multirow{4}{*}{PTB $(\downarrow)$}
& LR1   & 3.025 & 3.003 & 3.020 & 3.021 & 3.036 & 3.019 & 3.086 & 3.010 & 3.014 & 3.012 & 3.019 & 3.061 & 3.098 \\
& LR4   & 3.024 & 3.003 & 3.026 & 3.018 & 3.034 & 3.017 & 3.017 & 3.010 & 3.015 & 2.997 & 3.017 & 3.012 & 3.024 \\
& BH1   & 3.017 & 3.000 & 3.008 & 3.014 & 3.029 & 3.012 & 3.010 & 3.009 & 3.011 & 2.995 & 3.015 & 3.005 & 3.016 \\
& \cellcolor{gray!30}Rotor 
        & \cellcolor{gray!30}3.009 & \cellcolor{gray!30}2.998 & \cellcolor{gray!30}3.003
        & \cellcolor{gray!30}3.006 & \cellcolor{gray!30}3.019 & \cellcolor{gray!30}3.008 & \cellcolor{gray!30}3.008
        & \cellcolor{gray!30}3.001 & \cellcolor{gray!30}3.007 & \cellcolor{gray!30}2.993 & \cellcolor{gray!30}3.012
        & \cellcolor{gray!30}3.001 & \cellcolor{gray!30}3.011 \\
\midrule
\multirow{4}{*}{\makecell{Arc \\ Challenge $(\uparrow)$}}
& LR1   & 41.20 & 65.24 & 27.90 & 65.24 & 68.24 & 16.31 & 47.64 & 67.38 & 66.95 & 65.67 & 65.24 & 66.09 & 66.52 \\
& LR4   & 31.76 & 69.1 & 58.37 & 65.24 & 68.24 & 16.31 & 47.64 & 67.38 & 66.95 & 65.67 & 65.24 & 66.09 & 66.52 \\
& BH1   & 49.79 & 68.67 & 48.50 & 62.23 & 62.24 & 42.49 & 48.93 & 65.24 & 67.81 & 64.39 & 65.31 & 65.79 & 65.24 \\
& \cellcolor{gray!30}Rotor 
        & \cellcolor{gray!30}56.65 & \cellcolor{gray!30}70.82 & \cellcolor{gray!30}61.37
        & \cellcolor{gray!30}60.09 & \cellcolor{gray!30}69.53 & \cellcolor{gray!30}51.07 & \cellcolor{gray!30}47.21
        & \cellcolor{gray!30}67.81 & \cellcolor{gray!30}66.52 & \cellcolor{gray!30}66.09 & \cellcolor{gray!30}65.67
        & \cellcolor{gray!30}66.09 & \cellcolor{gray!30}65.24\\
\noalign{\vskip 1.5pt}
\multirow{4}{*}{HellaSwag $(\uparrow)$}
& LR1   & 28.67 & 49.67 & 32.67 & 45.67 & 51.00 & 5.67 & 36.33 & 52.67 & 57.33 & 53.33 & 53.67 & 53.67 & 54.67 \\
& LR4   & 28.67 & 49.67 & 32.67 & 45.67 & 51.00 & 5.67 & 36.33 & 52.67 & 57.33 & 53.33 & 53.67 & 53.67 & 54.67 \\
& BH1   & 33.67 & 56.67 & 45.67 & 45.67 & 53.33 & 22.67 & 37.67 & 53.67 & 53.67 & 47.67 & 52.33 & 56.00 & 51.67 \\
& \cellcolor{gray!30}Rotor 
        & \cellcolor{gray!30}42.67 & \cellcolor{gray!30}55.00 & \cellcolor{gray!30}49.33
        & \cellcolor{gray!30}51.33 & \cellcolor{gray!30}52.67 & \cellcolor{gray!30}18.67 & \cellcolor{gray!30}35.00
        & \cellcolor{gray!30}51.00 & \cellcolor{gray!30}56.67 & \cellcolor{gray!30}54.00 & \cellcolor{gray!30}54.33
        & \cellcolor{gray!30}54.00 & \cellcolor{gray!30}56.67 \\
\bottomrule
\end{tabular}
\caption{\small{Performance on log-PPL (↓) and accuracy (↑) when replacing \textbf{one attention layer} for layer indices $1$–$27$ of \texttt{Qwen-2.5 1.5B}. Methods are Low-Rank ($r=1$ and $4$), BH1, and Rotor. Original log-PPL and accuracy are: $\texttt{Wikitext2 } 2.287, \texttt{C4 }2.834, \texttt{PTB } 2.985, \texttt{Arc Challenge }66.09, \texttt{Hellaswag }55.00$.}}
\label{tab:qwen-1.5-one-layer}
\end{table*}
\begin{table*}[t]
\scriptsize
\centering
\setlength{\tabcolsep}{1.8pt}
\renewcommand{\arraystretch}{0.95}
\begin{tabular}{
>{\centering\arraybackslash}m{1.5cm}
>{\centering\arraybackslash}m{1.0cm}
*{14}{>{\centering\arraybackslash}m{0.616cm}<{\fontsize{4}{2.5}\selectfont}}
}
\toprule
\textbf{Dataset} & \textbf{Method} & \multicolumn{14}{c}{\textbf{One layer replaced (Layer index)}} \\
& & \textbf{1} & \textbf{2} & \textbf{3} & \textbf{4} & \textbf{5} & \textbf{6} & \textbf{7}
  & \textbf{8} & \textbf{9} & \textbf{10} & \textbf{11} & \textbf{12} & \textbf{13} & \textbf{14}\\
\midrule
\multirow{4}{*}{Wikitext2 $(\downarrow)$}
& LR1   & 3.867 & 2.539 & 2.525 & 2.521 & 2.517 & 2.509 & 2.557 & 2.499 & 2.507 & 2.519 & 2.510 & 2.505 & 2.550 & 2.519 \\
& LR4   & 3.389 & 2.514 & 2.512 & 2.501 & 2.510 & 2.492 & 2.531 & 2.488 & 2.494 & 2.497 & 2.501 & 2.500 & 2.510 & 2.503 \\
& BH1   & 2.552 & 2.505 & 2.492 & 2.502 & 2.502 & 2.492 & 2.516 & 2.487 & 2.486 & 2.497 & 2.493 & 2.488 & 2.505 & 2.494 \\
& \cellcolor{gray!30}Rotor 
        & \cellcolor{gray!30}2.594 & \cellcolor{gray!30}2.520 & \cellcolor{gray!30}2.497 & \cellcolor{gray!30}2.499
        & \cellcolor{gray!30}2.517 & \cellcolor{gray!30}2.498 & \cellcolor{gray!30}2.521 & \cellcolor{gray!30}2.488
        & \cellcolor{gray!30}2.492 & \cellcolor{gray!30}2.511 & \cellcolor{gray!30}2.499 & \cellcolor{gray!30}2.492
        & \cellcolor{gray!30}2.506 & \cellcolor{gray!30}2.510 \\
\bottomrule
\end{tabular}

\vspace{5pt}

\begin{tabular}{
>{\centering\arraybackslash}m{1.5cm}
>{\centering\arraybackslash}m{1.0cm}
*{13}{>{\centering\arraybackslash}m{0.616cm}<{\fontsize{4}{2.5}\selectfont}}
}
\toprule
\textbf{Dataset} & \textbf{Method} & \multicolumn{13}{c}{\textbf{One layer replaced (Layer index)}} \\
& & \textbf{15} & \textbf{16} & \textbf{17} & \textbf{18} & \textbf{19} & \textbf{20} & \textbf{21}
  & \textbf{22} & \textbf{23} & \textbf{24} & \textbf{25} & \textbf{26} & \textbf{27}\\
\midrule
\multirow{4}{*}{Wikitext2 $(\downarrow)$}
& LR1   & 2.501 & 2.487 & 2.481 & 2.473 & 2.487 & 2.484 & 2.477 & 2.479 & 2.487 & 2.489 & 2.490 & 2.479 & 2.519  \\
& LR4   & 2.497 & 2.485 & 2.477 & 2.471 & 2.480 & 2.479 & 2.471 & 2.467 & 2.480 & 2.474 & 2.471 & 2.471 & 2.503 \\
& BH1   & 2.484 & 2.479 & 2.476 & 2.471 & 2.472 & 2.475 & 2.466 & 2.469 & 2.477 & 2.476 & 2.475 & 2.469 & 2.482  \\
& \cellcolor{gray!30}Rotor 
        & \cellcolor{gray!30}2.488 & \cellcolor{gray!30}2.481 & \cellcolor{gray!30}2.477 & \cellcolor{gray!30}2.471
        & \cellcolor{gray!30}2.476 & \cellcolor{gray!30}2.478 & \cellcolor{gray!30}2.466 & \cellcolor{gray!30}2.471
        & \cellcolor{gray!30}2.482 & \cellcolor{gray!30}2.484 & \cellcolor{gray!30}2.482 & \cellcolor{gray!30}2.477
        & \cellcolor{gray!30}2.496  \\
\bottomrule
\end{tabular}
\caption{\small{Performance on log-PPL (↓) and accuracy (↑) when replacing \textbf{one attention layer} for layer indices $1$–$27$ of \texttt{LLaMa-3.2 3B}. Methods are Low-Rank ($r=1$ and $4$), BH1, and Rotor. Original log-PPL is $2.460$.}}
\label{tab:llama3b-one-layer}
\end{table*}
\begin{table*}[t]
\scriptsize
\centering
\setlength{\tabcolsep}{1.8pt}
\renewcommand{\arraystretch}{0.95}
\begin{tabular}{
>{\centering\arraybackslash}m{1.5cm}
>{\centering\arraybackslash}m{1.0cm}
*{16}{>{\centering\arraybackslash}m{0.58cm}<{\fontsize{4}{2.5}\selectfont}}
}
\toprule
\textbf{Dataset} & \textbf{Method} & \multicolumn{14}{c}{\textbf{One layer replaced (Layer index)}} \\
& & \textbf{1} & \textbf{2} & \textbf{3} & \textbf{4} & \textbf{5} & \textbf{6} & \textbf{7}
  & \textbf{8} & \textbf{9} & \textbf{10} & \textbf{11} & \textbf{12} & \textbf{13} & \textbf{14}  & \textbf{15} & \textbf{16}\\
\midrule
\multirow{4}{*}{Wikitext2 $(\downarrow)$}
& LR1   & 2.645 & 2.573 & 2.549 & 2.535 & 2.531 & 2.546 & 2.550 & 2.533 & 2.581 & 2.540 & 2.557 & 2.546 & 2.538 & 2.538 & 2.536 & 2.546 \\
& LR4   & 2.614 & 2.568 & 2.538 & 2.535 & 2.530 & 2.544 & 2.545 & 2.531 & 2.570 & 2.538 & 2.552 & 2.535 & 2.538 & 2.538 & 2.536 & 2.534 \\
& BH1   & 2.633 & 2.548 & 2.545 & 2.534 & 2.530 & 2.545 & 2.544 & 2.531 & 2.577 & 2.536 & 2.550 & 2.538 & 2.537 & 2.537 & 2.534 & 2.532 \\
& \cellcolor{gray!30}Rotor 
        & \cellcolor{gray!30}2.544 & \cellcolor{gray!30}2.542 & \cellcolor{gray!30}2.541 & \cellcolor{gray!30}2.532
        & \cellcolor{gray!30}2.529 & \cellcolor{gray!30}2.538 & \cellcolor{gray!30}2.540 & \cellcolor{gray!30}2.530
        & \cellcolor{gray!30}2.537 & \cellcolor{gray!30}2.533 & \cellcolor{gray!30}2.549 & \cellcolor{gray!30}2.536
        & \cellcolor{gray!30}2.534 & \cellcolor{gray!30}2.531 & \cellcolor{gray!30}2.531 & \cellcolor{gray!30}2.531 \\
\noalign{\vskip 1.5pt}
\multirow{4}{*}{C4 $(\downarrow)$}
& LR1   & 2.997 & 2.915 & 2.879 & 2.877 & 2.877 & 2.891 & 2.884 & 2.878 & 2.889 & 2.886 & 2.876 & 2.884 & 2.874 & 2.881 & 2.878 & 2.875 \\
& LR4   & 2.960 & 2.914 & 2.880 & 2.877 & 2.877 & 2.891 & 2.884 & 2.878 & 2.888 & 2.885 & 2.877 & 2.883 & 2.874 & 2.881 & 2.878 & 2.874 \\
& BH1   & 2.991 & 2.901 & 2.875 & 2.878 & 2.876 & 2.891 & 2.881 & 2.878 & 2.889 & 2.879 & 2.874 & 2.876 & 2.872 & 2.879 & 2.877 & 2.872 \\
& \cellcolor{gray!30}Rotor 
        & \cellcolor{gray!30}2.924 & \cellcolor{gray!30}2.901 & \cellcolor{gray!30}2.875 & \cellcolor{gray!30}2.876
        & \cellcolor{gray!30}2.876 & \cellcolor{gray!30}2.890 & \cellcolor{gray!30}2.881 & \cellcolor{gray!30}2.878
        & \cellcolor{gray!30}2.882 & \cellcolor{gray!30}2.878 & \cellcolor{gray!30}2.873 & \cellcolor{gray!30}2.876
        & \cellcolor{gray!30}2.872 & \cellcolor{gray!30}2.878 & \cellcolor{gray!30}2.876 & \cellcolor{gray!30}2.871 \\
\noalign{\vskip 1.5pt}
\multirow{4}{*}{PTB $(\downarrow)$}
& LR1   & 3.359 & 3.270 & 3.222 & 3.231 & 3.226 & 3.255 & 3.222 & 3.231 & 3.279 & 3.233 & 3.229 & 3.268 & 3.232 & 3.237 & 3.231 & 3.279 \\
& LR4   & 3.317 & 3.259 & 3.215 & 3.231 & 3.225 & 3.260 & 3.224 & 3.230 & 3.276 & 3.232 & 3.229 & 3.235 & 3.230 & 3.242 & 3.229 & 3.226 \\
& BH1   & 3.318 & 3.230 & 3.214 & 3.230 & 3.224 & 3.259 & 3.224 & 3.229 & 3.277 & 3.231 & 3.228 & 3.234 & 3.230 & 3.241 & 3.227 & 3.227 \\
& \cellcolor{gray!30}Rotor 
        & \cellcolor{gray!30}3.283 & \cellcolor{gray!30}3.229 & \cellcolor{gray!30}3.212 & \cellcolor{gray!30}3.229
        & \cellcolor{gray!30}3.224 & \cellcolor{gray!30}3.257 & \cellcolor{gray!30}3.223 & \cellcolor{gray!30}3.228
        & \cellcolor{gray!30}3.255 & \cellcolor{gray!30}3.229 & \cellcolor{gray!30}3.228 & \cellcolor{gray!30}3.232
        & \cellcolor{gray!30}3.228 & \cellcolor{gray!30}3.234 & \cellcolor{gray!30}3.225 & \cellcolor{gray!30}3.225 \\
\midrule
\multirow{4}{*}{\makecell{Arc \\ Challenge $(\uparrow)$}}
& LR1   & 31.76 & 31.33 & 33.05 & 42.06 & 42.49 & 47.64 & 36.48 & 44.64 & 22.75 & 41.63 & 24.03 & 39.48 & 39.91 & 20.17 & 21.03 & 29.18\\
& LR4   & 27.47 & 26.61 & 27.90 & 36.91 & 39.06 & 48.50 & 36.05 & 41.20 & 15.45 & 35.19 & 18.88 & 31.33 & 42.06 & 16.74 & 21.03 & 23.61\\
& BH1   & 34.33 & 21.46 & 28.33 & 32.19 & 38.20 & 45.06 & 26.61 & 42.49 & 11.59 & 36.48 & 14.16 & 21.89 & 46.78 & 24.89 & 25.75 & 9.44\\
& \cellcolor{gray!30}Rotor 
        & \cellcolor{gray!30}27.04 & \cellcolor{gray!30}21.03 & \cellcolor{gray!30}26.18 & \cellcolor{gray!30}23.61
        & \cellcolor{gray!30}30.47 & \cellcolor{gray!30}36.48 & \cellcolor{gray!30}29.18 & \cellcolor{gray!30}39.91
        & \cellcolor{gray!30}22.32 & \cellcolor{gray!30}31.33 & \cellcolor{gray!30}13.30 & \cellcolor{gray!30}22.32
        & \cellcolor{gray!30}31.76 & \cellcolor{gray!30}28.33 & \cellcolor{gray!30}24.46 & \cellcolor{gray!30}18.45 \\
\multirow{4}{*}{HellaSwag $(\uparrow)$}
& LR1   & 31.33 & 37.00 & 35.33 & 34.33 & 44.00 & 44.67 & 42.67 & 45.33 & 8.33 & 17.33 & 5.00 & 28.67 & 31.33 & 42.67 & 23.33 & 39.00 \\
& LR4   & 35.67 & 36.67 & 25.00 & 33.00 & 42.67 & 43.33 & 43.00 & 44.67 & 6.33 & 19.00 & 4.33 & 26.67 & 31.33 & 41.33 & 18.67 & 39.33 \\
& BH1   & 40.00 & 34.00 & 24.67 & 35.33 & 44.33 & 45.00 & 42.67 & 45.33 & 9.67 & 19.67 & 5.33 & 22.67 & 36.00 & 40.00 & 29.67 & 33.33 \\
& \cellcolor{gray!30}Rotor 
        & \cellcolor{gray!30}40.00 & \cellcolor{gray!30}37.33 & \cellcolor{gray!30}28.33 & \cellcolor{gray!30}36.67
        & \cellcolor{gray!30}44.67 & \cellcolor{gray!30}44.33 & \cellcolor{gray!30}41.00 & \cellcolor{gray!30}44.33
        & \cellcolor{gray!30}17.33 & \cellcolor{gray!30}32.33 & \cellcolor{gray!30}18.33 & \cellcolor{gray!30}35.67
        & \cellcolor{gray!30}34.00 & \cellcolor{gray!30}42.67 & \cellcolor{gray!30}26.33 & \cellcolor{gray!30}38.33 \\
\bottomrule
\end{tabular}

\vspace{5pt}

\begin{tabular}{
>{\centering\arraybackslash}m{1.5cm}
>{\centering\arraybackslash}m{1.0cm}
*{15}{>{\centering\arraybackslash}m{0.616cm}<{\fontsize{4}{2.5}\selectfont}}
}
\toprule
\textbf{Dataset} & \textbf{Method} & \multicolumn{13}{c}{\textbf{One layer replaced (Layer index)}} \\
& & \textbf{17} & \textbf{18} & \textbf{19} & \textbf{20} & \textbf{21}
  & \textbf{22} & \textbf{23} & \textbf{24} & \textbf{25} & \textbf{26} & \textbf{27} & \textbf{28} & \textbf{29} & \textbf{30} & \textbf{31}\\
\midrule
\multirow{4}{*}{Wikitext2 $(\downarrow)$}
& LR1   & 2.536 & 2.530 & 2.564 & 2.553 & 2.534 & 2.544 & 2.546 & 2.523 & 2.537 & 2.522 & 2.535 & 2.617 & 2.540 & 2.544 & 2.571 \\
& LR4   & 2.534 & 2.528 & 2.563 & 2.553 & 2.534 & 2.543 & 2.541 & 2.523 & 2.536 & 2.523 & 2.534 & 2.558 & 2.531 & 2.543 & 2.541 \\
& BH1   & 2.532 & 2.528 & 2.558 & 2.550 & 2.530 & 2.538 & 2.537 & 2.521 & 2.530 & 2.521 & 2.532 & 2.556 & 2.529 & 2.539 & 2.532 \\
& \cellcolor{gray!30}Rotor 
        & \cellcolor{gray!30}2.531 & \cellcolor{gray!30}2.527 & \cellcolor{gray!30}2.531 & \cellcolor{gray!30}2.548
        & \cellcolor{gray!30}2.529 & \cellcolor{gray!30}2.538 & \cellcolor{gray!30}2.534 & \cellcolor{gray!30}2.521
        & \cellcolor{gray!30}2.529 & \cellcolor{gray!30}2.521 & \cellcolor{gray!30}2.531 & \cellcolor{gray!30}2.553
        & \cellcolor{gray!30}2.528 & \cellcolor{gray!30}2.537 & \cellcolor{gray!30}2.531 \\
\noalign{\vskip 1.5pt}
\multirow{4}{*}{C4 $(\downarrow)$}
& LR1   & 2.879 & 2.878 & 2.891 & 2.884 & 2.879 & 2.888 & 2.879 & 2.872 & 2.885 & 2.872 & 2.875 & 2.875 & 2.870 & 2.876 & 2.904 \\
& LR4   & 2.879 & 2.877 & 2.891 & 2.884 & 2.880 & 2.889 & 2.879 & 2.872 & 2.884 & 2.872 & 2.874 & 2.875 & 2.870 & 2.875 & 2.890 \\
& BH1   & 2.878 & 2.876 & 2.883 & 2.883 & 2.878 & 2.886 & 2.878 & 2.871 & 2.879 & 2.870 & 2.872 & 2.873 & 2.868 & 2.874 & 2.886 \\
& \cellcolor{gray!30}Rotor 
        & \cellcolor{gray!30}2.877 & \cellcolor{gray!30}2.875 & \cellcolor{gray!30}2.881 & \cellcolor{gray!30}2.883
        & \cellcolor{gray!30}2.878 & \cellcolor{gray!30}2.885 & \cellcolor{gray!30}2.877 & \cellcolor{gray!30}2.871
        & \cellcolor{gray!30}2.879 & \cellcolor{gray!30}2.870 & \cellcolor{gray!30}2.872 & \cellcolor{gray!30}2.874
        & \cellcolor{gray!30}2.868 & \cellcolor{gray!30}2.874 & \cellcolor{gray!30}2.883 \\
\noalign{\vskip 1.5pt}
\multirow{4}{*}{PTB $(\downarrow)$}
& LR1   & 3.229 & 3.233 & 3.253 & 3.246 & 3.220 & 3.237 & 3.227 & 3.218 & 3.248 & 3.221 & 3.219 & 3.249 & 3.220 & 3.242 & 3.264 \\
& LR4   & 3.224 & 3.221 & 3.239 & 3.240 & 3.214 & 3.226 & 3.222 & 3.213 & 3.230 & 3.219 & 3.218 & 3.222 & 3.212 & 3.233 & 3.235 \\
& BH1   & 3.222 & 3.222 & 3.238 & 3.239 & 3.214 & 3.226 & 3.222 & 3.210 & 3.230 & 3.217 & 3.217 & 3.222 & 3.212 & 3.221 & 3.226 \\
& \cellcolor{gray!30}Rotor 
        & \cellcolor{gray!30}3.221 & \cellcolor{gray!30}3.220 & \cellcolor{gray!30}3.232 & \cellcolor{gray!30}3.238
        & \cellcolor{gray!30}3.213 & \cellcolor{gray!30}3.225 & \cellcolor{gray!30}3.220 & \cellcolor{gray!30}3.210
        & \cellcolor{gray!30}3.228 & \cellcolor{gray!30}3.216 & \cellcolor{gray!30}3.217 & \cellcolor{gray!30}3.220
        & \cellcolor{gray!30}3.211 & \cellcolor{gray!30}3.220 & \cellcolor{gray!30}3.223 \\
\midrule
\multirow{4}{*}{\makecell{Arc \\ Challenge $(\uparrow)$}}
& LR1   & 4.72 & 6.87 & 39.91 & 44.21 & 18.45 & 33.91 & 34.33 & 21.46 & 7.30 & 27.47 & 18.88 & 29.18 & 47.64 & 10.30 & 7.73 \\
& LR4   & 6.87 & 8.15 & 38.63 & 41.63 & 18.88 & 30.90 & 34.33 & 21.46 & 8.58 & 27.04 & 18.88 & 31.76 & 48.07 & 12.45 & 13.73 \\
& BH1   & 10.30 & 5.58 & 35.62 & 42.49 & 20.17 & 30.04 & 32.19 & 24.03 & 9.87 & 25.32 & 19.31 & 25.32 & 46.78 & 18.88 & 24.46 \\
& \cellcolor{gray!30}Rotor 
        & \cellcolor{gray!30}20.17 & \cellcolor{gray!30}18.88 & \cellcolor{gray!30}25.32 & \cellcolor{gray!30}33.91
        & \cellcolor{gray!30}21.03 & \cellcolor{gray!30}30.04 & \cellcolor{gray!30}26.61 & \cellcolor{gray!30}24.46
        & \cellcolor{gray!30}11.59 & \cellcolor{gray!30}26.18 & \cellcolor{gray!30}21.46 & \cellcolor{gray!30}24.89
        & \cellcolor{gray!30}45.49 & \cellcolor{gray!30}19.74 & \cellcolor{gray!30}24.46 \\
\noalign{\vskip 1.5pt}
\multirow{4}{*}{HellaSwag $(\uparrow)$}
& LR1   & 27.33 & 26.33 & 42.33 & 47.33 & 41.33 & 37.00 & 37.67 & 32.00 & 23.67 & 40.00 & 34.67 & 46.67 & 47.33 & 28.67 & 12.00 \\
& LR4   & 25.00 & 28.33 & 41.00 & 46.33 & 40.00 & 37.00 & 38.67 & 32.33 & 26.67 & 40.67 & 36.33 & 44.33 & 46.00 & 31.33 & 28.00 \\
& BH1   & 28.67 & 27.67 & 42.33 & 47.67 & 44.00 & 40.67 & 38.33 & 34.67 & 27.33 & 39.67 & 36.67 & 42.33 & 47.00 & 32.33 & 36.67 \\
& \cellcolor{gray!30}Rotor 
        & \cellcolor{gray!30}33.67 & \cellcolor{gray!30}34.00 & \cellcolor{gray!30}33.00 & \cellcolor{gray!30}34.67
        & \cellcolor{gray!30}37.67 & \cellcolor{gray!30}41.00 & \cellcolor{gray!30}38.33 & \cellcolor{gray!30}34.33
        & \cellcolor{gray!30}29.67 & \cellcolor{gray!30}39.67 & \cellcolor{gray!30}33.67 & \cellcolor{gray!30}40.67
        & \cellcolor{gray!30}46.33 & \cellcolor{gray!30}31.67 & \cellcolor{gray!30}38.67 \\
\bottomrule
\end{tabular}
\caption{\small{Performance on log-PPL (↓) and accuracy (↑) when replacing \textbf{one attention layer} for layer indices $1$–$31$ results of \texttt{Fox-1.0 1.6B}. Methods are Low-Rank ($r=1$ and 4), BH1, and Rotor. Original log-PPL and accuracy are: $\texttt{Wikitext2 } 2.517, \texttt{C4 }2.862, \texttt{PTB } 3.205, \texttt{Arc Challenge }24.89, \texttt{Hellaswag }38.33$.}}
\label{tab:fox=1.6-one-layer}
\end{table*}

In Fig.~\ref{fig:projection-error}, we report the number of iterations required by Alg.~\ref{algo:ga_poweriter} to converge within a tolerance of $\epsilon = 10^{-3}$, as a function of the number of gradient updates applied to the rotors (i.e., bivector coefficients). Synthetic data was generated with random input and having output come from the rotation corresponding to a random bivector. Each gradient update step is towards learning that random bivector with MSE loss. As expected, higher-dimensional projections require more iterations to converge. Notably, warm-starting from previously learned singular values significantly reduces the convergence speed across all dimensions, supporting our claim in Section~\ref{sec:compute-rotors}. 

In Tab.~\ref{tab:qwen-1.5-one-layer}, ~\ref{tab:llama3b-one-layer}, and ~\ref{tab:fox=1.6-one-layer}, we provide additional experimental results on \texttt{Qwen-2.5 1.5B}, \texttt{LLaMa-3.2 3B}, and \texttt{Fox-1.0 1.6B} \citep{hu2025fox1opensmalllanguage}, where a single attention layer is replaced by Rotor, LR1, LR4, or BH1 approximations. In Tab.~\ref{tab:fox_avg}, we provide averages for single and two-layer replacements for \texttt{Fox-1-1.6B}. The trends observed in the main paper persist: our rotor-based method consistently matches the baselines with significantly fewer parameters (see Tab.~\ref{tab:appdx-param-cnt}). These results further support our central claim: \textit{linear layers can be synthesized from a small number of geometric primitives encoding rotations}.

\begin{table}
\scriptsize
\centering
\setlength{\tabcolsep}{1.3pt}
\renewcommand{\arraystretch}{1}
\begin{tabular}{
>{\centering\arraybackslash}m{0.3cm}
>{\centering\arraybackslash}m{1.1cm}
>{\centering\arraybackslash}m{1.0cm}
>{\centering\arraybackslash}m{0.85cm}
>{\centering\arraybackslash}m{0.85cm} 
}
\toprule
& \textbf{Dataset} & \textbf{Method} 
& \multicolumn{2}{c}{\textbf{Fox-1.0 1.6B}} \\
& & 
& one & two \\
\midrule
\multirow{15}{*}{\makecell{\rotatebox{90}{Log-PPL}}}
& \multirow{5}{*}{\makecell{\rotatebox{90}{Wikitext2}}}   
& Original & \multicolumn{2}{c}{---- 2.517 ----} \\
& & LR1    & 2.550 & 2.589 \\
& & LR4    & 2.540 & 2.578 \\
& & BH1    & \textcolor{second_best}{2.538} & \textcolor{best}{2.573} \\
& & \cellcolor{gray!30}Rotor & \cellcolor{gray!30} \textcolor{best}{2.534} & \cellcolor{gray!30} \textcolor{second_best}{2.576}
\\
& \multirow{5}{*}{\makecell{\rotatebox{90}{C4}}}
& Original & \multicolumn{2}{c}{---- 2.862 ----} \\
& & LR1    & 2.881 & 2.907 \\
& & LR4    & 2.880 & \textcolor{second_best}{2.901} \\
& & BH1    & \textcolor{second_best}{2.878} & \textcolor{best}{2.898} \\
& & \cellcolor{gray!30}Rotor & \cellcolor{gray!30} \textcolor{best}{2.877} & \cellcolor{gray!30} \textcolor{second_best}{2.901}
\\
& \multirow{5}{*}{\makecell{\rotatebox{90}{PTB}}}
& Original & \multicolumn{2}{c}{---- 3.205 ----} \\
& & LR1    & 3.238 & 3.299 \\
& & LR4    & 3.230 & 3.276 \\
& & BH1    & \textcolor{second_best}{3.228} & \textcolor{second_best}{3.275} \\
& & \cellcolor{gray!30}Rotor & \cellcolor{gray!30} \textcolor{best}{3.226} & \cellcolor{gray!30} \textcolor{best}{3.259}
\\
\midrule
\multirow{10}{*}{\makecell{\rotatebox{90}{Accuracy (\%)}}}
& \multirow{5}{*}{\makecell{\rotatebox{90}{\makecell{Arc \\ Challenge}}}} 
& Original & \multicolumn{2}{c}{---- 24.89 ----} \\
& & LR1    & \textcolor{best}{29.03} & 26.40 \\
& & LR4    & \textcolor{second_best}{27.40} & 27.12 \\
& & BH1    & 26.77 & \textcolor{best}{31.42} \\
& & \cellcolor{gray!30}Rotor & \cellcolor{gray!30} 25.82 & \cellcolor{gray!30} \textcolor{second_best}{30.22}
\\
& \multirow{5}{*}{\makecell{\rotatebox{90}{Hellaswag}}} 
& Original & \multicolumn{2}{c}{---- 38.33 ----} \\
& & LR1    & 32.33 & 25.49 \\
& & LR4    & 32.41 & 24.40 \\
& & BH1    & \textcolor{second_best}{33.92} & \textcolor{second_best}{27.73} \\
& & \cellcolor{gray!30}Rotor & \cellcolor{gray!30} \textcolor{best}{35.14} & \cellcolor{gray!30} \textcolor{best}{32.87}
\\
\bottomrule
\end{tabular}
\vspace{10pt}
\caption{\footnotesize{Log-PPL $(\downarrow)$ and accuracy $(\uparrow)$ using original, Low-Rank ($r=1$ or $4$), BH1, and Rotor for $1$–$2$ layer replacements on \texttt{Fox-1.0 1.6B}. One-layer results are averaged over all layers; two-layer results are averaged over five random selections. Red indicates best, blue second-best per setting.}}
\label{tab:fox_avg}
\end{table}

\end{document}